\newcommand{\doublesymdiff}{%
  \mathbin{\text{\raisebox{0.20ex}{\scalebox{0.60}{$\triangle$}}\hspace{-0.70em}$\triangle$}}%
}
\def\11{\mathbbm{1}}
\def\ER{Erd\H{o}s-R\'enyi\ }
\newcommand{\Pb}{\mathbb P}
\newcommand{\Qb}{\mathbb Q}
\newtheorem{thm}{Theorem}[section]
\newtheorem{proposition}[thm]{Proposition}
\newtheorem{lemma}[thm]{Lemma}
\newtheorem{cor}[thm]{Corollary}
\newtheorem{defn}[thm]{Definition}
\newtheorem{remark}[thm]{Remark}
\newtheorem{conjecture}[thm]{Conjecture}
\newtheorem{question}[thm]{Question}
\numberwithin{equation}{section}
\title{Computational Lower Bounds for Correlated Random Graphs via Algorithmic Contiguity}
\author[1]{Zhangsong Li\thanks{Email: \textit{ramblerlzs@pku.edu.cn}. Partially supported by National Key R$\&$D program of China (Project No. 2023YFA1010103) and NSFC Key Program (Project No. 12231002).}}
\affil[1]{School of Mathematical Sciences, Peking University}
\date{}
\begin{document}
\maketitle

\begin{abstract}
    In this paper, assuming the low-degree conjecture, we provide evidence of computational hardness for two problems: (1) the (partial) matching recovery problem in the sparse correlated Erd\H{o}s-R\'enyi graphs $\mathcal G(n,q;\rho)$ when the edge-density $q=n^{-1+o(1)}$ and the correlation $\rho<\sqrt{\alpha}$ lies below the Otter's threshold, this resolves a remaining problem in \cite{DDL23+}; (2) the detection problem between a pair of correlated sparse stochastic block models $\mathcal S(n,\tfrac{\lambda}{n};k,\epsilon;s)$ and a pair of independent stochastic block models $\mathcal S(n,\tfrac{\lambda s}{n};k,\epsilon)$ when $\epsilon^2 \lambda s<1$ lies below the Kesten-Stigum (KS) threshold and $s<\sqrt{\alpha}$ lies below the Otter's threshold, this resolves a remaining problem in \cite{CDGL24+}.
    
    One of the main ingredient in our proof is to derive certain forms of \emph{algorithmic contiguity} between two probability measures based on bounds on their low-degree advantage. To be more precise, consider the high-dimensional hypothesis testing problem between two probability measures $\mathbb{P}$ and $\mathbb{Q}$ based on the sample $\mathsf Y$. We show that if the low-degree advantage $\mathsf{Adv}_{\leq D} \big( \frac{\mathrm{d}\mathbb{P}}{\mathrm{d}\mathbb{Q}} \big)=O(1)$, then (assuming the low-degree conjecture) there is no efficient algorithm $\mathcal A$ such that $\mathbb{Q}(\mathcal A(\mathsf Y)=0)=1-o(1)$ and $\mathbb{P}(\mathcal A(\mathsf Y)=1)=\Omega(1)$. This framework provides a useful tool for performing reductions between different inference tasks, without requiring a strengthened version of the low-degree conjecture as in \cite{MW23+, DHSS25+}.
\end{abstract}

\tableofcontents

\section{Introduction}{\label{sec:intro}}

Graph matching, also referred to as network alignment, is the problem of identifying a bijection between the vertex sets of two graphs that maximizes the number of common edges. When the two graphs are exactly isomorphic to each other, this problem reduces to the classical graph isomorphism problem, for which the best known algorithm runs in quasi-polynomial time \cite{Babai16}. In general, graph matching is an instance of the \emph{quadratic assignment problem} \cite{BCPP98}, which is known to be NP-hard to solve or even approximate \cite{MMS10}. Motivated by real-world applications (such as social network de-anonymization \cite{NS08} and computational biology \cite{SXB08}) as well as the need to understand the average-case computational complexity, recent research has focused on developing theoretical foundations and efficient algorithms for graph matching under statistical models. These models assume that the two graphs are randomly generated with correlated edges under a hidden vertex correspondence, and a canonical model among them is the following \emph{correlated random graph model}. For any integer $n$, denote by $\operatorname{U}= \operatorname{U}_n$ the set of unordered pairs $(i,j)$ with $1\le i\neq j\le n$. 

\begin{defn}[Correlated random graph model]{\label{def-correlated-random-graph}}
    Given an integer $n\ge 1$, for $(i,j) \in \operatorname{U}$ let $J_{i,j}$ and $K_{i,j}$ be independent Bernoulli variables with parameter $s$. In addition, let $\pi_*$ be an independent uniform permutation on $[n]=\{1,\dots,n\}$. Then, we define a triple of correlated random graphs $(G,A,B)$ as follows: first we generate $G$ independently with $\{ J_{i,j}, K_{i,j} \}$ and $\pi_*$ from a specific probability distribution over all graphs on $[n]$, and then (conditioned on $G$) we define for each $(i,j) \in \operatorname{U}$ that (note that we identify a graph with its adjacency matrix)
    \begin{align*}
        A_{i,j}=G_{i,j} J_{i,j} \,, \quad B_{i,j}=G_{i,j} K_{\pi_*^{-1}(i),\pi_*^{-1}(j)} \,.
    \end{align*}
    In short, we will subsample $A,B$ from $G$ with subsampling probability $s$ and then permute the vertices of $B$ by a uniform permutation. 
\end{defn}

Of particular interest in our paper are the following two special cases, namely the correlated \ER model and the correlated stochastic block model (SBM).

\begin{defn}[Correlated \ER graph model]{\label{def-correlated-ER-graph}}
    Given an integer $n\ge 1$ and two parameters $p,s\in (0,1)$, we generate a triple of correlated random graphs $(G,A,B)$ such that we first generate $G$ according to an \ER graph distribution $\mathcal G(n,p)$ (i.e., for each $(i,j) \in \operatorname{U}$ we connect $(i,j)$ in $G$ independently with probability $p$), and then generate $(A,B)$ from $G$ according to Definition~\ref{def-correlated-random-graph}. For ease of presentation, we shall reparameterize such that $q=ps$ and $\rho=\frac{s(1-p)}{1-ps}$ respectively. We will denote the marginal law of $(A,B)$ as $\mathcal G(n,q;\rho)$. 
\end{defn}

\begin{defn}[Stochastic block model] {\label{def-SBM}}
    Given an integer $n\ge 1$ and three parameters $k \in \mathbb{N}, \lambda>0, \epsilon \in (0,1)$, we define a random graph $G$ as follows: (1) sample a labeling $\sigma_* \in [k]^{n} = \{ 1,\ldots,k \}^{n}$ uniformly at random; (2) for every distinct pair $(i,j) \in \operatorname{U}_n$, we let $G_{i,j}$ be an independent Bernoulli variable such that $G_{i,j} = 1$ (which represents that there is an undirected edge between $i$ and $j$) with probability $\frac{(1+(k-1)\epsilon)\lambda}{n}$ if $\sigma_*(i) = \sigma_*(j)$ and with probability $\frac{(1-\epsilon)\lambda}{n}$ if $\sigma_*(i) \neq \sigma_*(j)$. In this case, we say that $G$ is sampled from a stochastic block model $\mathcal S(n,\tfrac{\lambda}{n};k,\epsilon)$.
\end{defn}

\begin{defn}[Correlated stochastic block model] {\label{def-correlated-SBM}}
    Given an integer $n\ge 1$ and four parameters $k \in \mathbb{N}, \lambda>0, \epsilon,s \in (0,1)$, we define a triple of correlated random graphs $(G,A,B)$ as follows: we first sample $G$ according to the law of a stochastic block model $\mathcal S(n,\tfrac{\lambda}{n}; k,\epsilon)$ and then generate $(A,B)$ from $G$ according to Definition~\ref{def-correlated-random-graph}. We will denote the marginal law of $(A,B)$ as $\mathcal S(n,\tfrac{\lambda}{n}; k,\epsilon;s)$.
\end{defn}

Two fundamental problems in the study of correlated random graph model are as follows: (1) the \emph{detection} problem, which involves determining whether a given pair of graphs $(A,B)$ is sampled from a pair of correlated random graphs or from a pair of independent random graphs; (2) the \emph{matching} problem, which focuses on recovering the latent matching $\pi_*$ from a sample $(A,B)$ from the distribution of correlated random graphs. In recent years, significant progress has been made in understanding these problems for both the correlated \ER model and correlated stochastic block models. Through the collective efforts of the community, the information-theoretic thresholds for detection and matching have been fully characterized for correlated \ER model \cite{CK17+, HM23, WXY22, WXY23, GML21, DD23a, DD23b, Du25+} and partially characterized for correlated SBMs \cite{RS21, GRS22}. Additionally, various efficient detecting and matching algorithms have been developed with performance guarantees \cite{DCKG19, BCL+19, DMWX21, FMWX23a, FMWX23b, GM20, GML24, GMS24, MRT21, MRT23, MWXY24, MWXY23, DL22+, DL23+, CR24, CDGL24+, Li24+}. Notably, like many other inference tasks in high-dimensional statistics \cite{ZK16, RSS19, KWB22, Gamarnik21}, these problems appear to exhibit \emph{information-computation gaps}. Specifically, for certain ranges of the correlation strength, detection or matching is information theoretically possible but no efficient algorithm is known to achieve these tasks. We now focus on the algorithmic side of these problems as they are more relevant to our work. Indeed, it has been shown that many inference tasks in the correlated random graph models exhibits sharp algorithmic phase transitions, as we elaborate below:
\begin{itemize}
    \item For the detection problem between a pair of correlated \ER models $\mathcal G(n,q;\rho)$ and two independent \ER models $\mathcal G(n,q)$, we focus on the sparse regime where $q=n^{-1+o(1)}$. In this regime, on the one hand, it was shown in \cite{MWXY24} that when $\rho>\sqrt{\alpha}$ where $\alpha\approx 0.338$ is the Otter's constant \cite{Otter48}, there is an efficient algorithm that strongly distinguish these two models; on the other hand, it was shown in \cite{DDL23+} that when $\rho<\sqrt{\alpha}$ there is evidence suggesting that all algorithms based on \emph{low-degree polynomials} fail to strongly distinguish these two models.
    \item For the detection problem between a pair of correlated SBMs $\mathcal S(n,\frac{\lambda}{n};k,\epsilon;s)$ and two independent \ER graphs $\mathcal G(n,\frac{\lambda s}{n})$, we focus on the constant degree regime where $\lambda=O(1)$. In this regime, on the one hand, it was shown in \cite{CDGL24+} that when $s>\min\{ \sqrt{\alpha}, \tfrac{1}{\epsilon^2 \lambda} \}$ where $\alpha\approx 0.338$ is the Otter's constant and $\tfrac{1}{\epsilon^2 \lambda}$ is the Kesten-Stigum threshold \cite{KS66}, there is an efficient algorithm that strongly distinguish these two models; on the other hand, it was also shown in \cite{CDGL24+} that when $s<\min\{ \sqrt{\alpha}, \tfrac{1}{\epsilon^2 \lambda} \}$ there is evidence suggesting that all algorithms based on \emph{low-degree polynomials} fail to strongly distinguish these two models.
\end{itemize}

The lower bound in the aforementioned results explored inherent computational barriers from the perspective of the \emph{low-degree polynomial framework}. Indeed, it has been proved that the class of low-degree polynomial algorithms is a useful proxy for computationally efficient algorithms, in the sense that the best-known polynomial-time algorithms for a wide variety of high-dimensional inference problems are captured by the low-degree class; see e.g. \cite{HKP+17, Hopkins18, SW22, KWB22}. However, these aforementioned results suffer from two significant limitations, which we now discuss. Firstly, the aforementioned result cannot provide the evidence that \emph{partial matching recovery} (that is, recover a positive fraction of the coordinates of the latent matching $\pi_*$) is impossible by efficient algorithms although efficient detection has already been ruled out. Secondly, (in the case of correlated SBM) they are only able to establish the computation threshold on the detection problem between the correlated model and a pair of independent \ER graphs. These limitations motivate two natural questions:

\begin{question}{\label{natural-questions}}
    \begin{enumerate}
        \item[(1)] Can we provide the evidence that partial matching recovery is impossible (by efficient algorithms) in the same parameter regime where there is evidence suggesting that detection is impossible (by efficient algorithms)?
        \item[(2)] What can we say about the (arguably more natural) detection problem between a pair of correlated SBMs $\mathcal S(n,\tfrac{\lambda}{n};k,\epsilon;s)$ and a pair of independent SBMs $\mathcal S(n,\tfrac{\lambda s}{n};k,\epsilon)$?
    \end{enumerate}
\end{question}

The aim of this paper is to (partially) answer these two problems. Our main result can be informally summarized as follows:

\begin{thm}[Informal]{\label{MAIN-THM-informal}}
    Assuming the low-degree conjecture (see Conjecture~\ref{conj-low-deg} for its precise meaning), we have the following results. 
    \begin{enumerate}
        \item[(1)] For the correlated \ER model $\mathcal G(n,q,\rho)$, when $q=n^{-1+o(1)}$ and $\rho<\sqrt{\alpha}$ it is impossible to recover a positive fraction of the coordinates of $\pi_*$ with constant probability by efficient algorithms.
        \item[(2)] For the correlated stochastic block models $\mathcal S(n,\tfrac{\lambda}{n};k,\epsilon;s)$, when $\lambda=O(1)$ and $s<\min\{ \sqrt{\alpha}, \tfrac{1}{\epsilon^2 \lambda} \}$ it is impossible to recover a positive fraction of the coordinates of $\pi_*$ with probability tending to $1$ as $n\to\infty$ by efficient algorithms.
        \item[(3)] For the correlated stochastic block models $\mathcal S(n,\tfrac{\lambda}{n};k,\epsilon;s)$, when $s<\min\{ \sqrt{\alpha}, \tfrac{1}{\epsilon^2 \lambda} \}$ it is impossible to strongly distinguish this model and a pair of independent SBMs $\mathcal S(n,\tfrac{\lambda s}{n};k,\epsilon)$ by efficient algorithms, provided that the average degree $\lambda s$ is sufficiently large.
    \end{enumerate}
\end{thm}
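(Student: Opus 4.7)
The plan is to deduce all three parts from a single algorithmic contiguity result. First I would formalize the principle advertised in the abstract: under Conjecture~\ref{conj-low-deg-revise}, if $\mathsf{Adv}_{\leq D}(\tfrac{\mathrm d\Pb}{\mathrm d\Qb})=O(1)$ uniformly in $n$ for every polynomially growing $D=D(n)$, then no polynomial-time algorithm $\mathcal A$ can simultaneously satisfy $\Qb(\mathcal A(\mathsf Y)=0)=1-o(1)$ and $\Pb(\mathcal A(\mathsf Y)=1)=\Omega(1)$. The argument proceeds by contrapositive: the strengthened low-degree conjecture furnishes, for each such $\mathcal A$, a low-degree polynomial surrogate $f_D$ whose means and $\Qb$-variance track those of $\mathbbm{1}_{\mathcal A(\mathsf Y)=1}$; the asymmetric success guarantees then force the ratio $|\mathbb E_{\Pb}[f_D]-\mathbb E_{\Qb}[f_D]|/\sqrt{\mathbb E_{\Qb}[f_D^2]}$ to diverge, contradicting the $O(1)$ bound on the low-degree advantage.

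Next, I would dispatch parts~(1) and~(2) by a uniform reduction from partial matching recovery to one-sided detection. Given a hypothetical efficient partial matcher $\mathcal A$, run it to obtain $\hat\pi=\mathcal A(A,B)$ and output $1$ iff the matched-edge count
\begin{equation*}
T(A,B)=\bigl|\{(i,j)\in\operatorname{U}_n:A_{i,j}=B_{\hat\pi(i),\hat\pi(j)}=1\}\bigr|
\end{equation*}
exceeds a threshold $\tau$ chosen between the null and planted scales. Under the correlated model, the $\eta n$ correctly matched vertices contribute $T\ge \Omega(\eta^2 n^2 q\rho)$ correlated edges with constant probability; under a pair of independent graphs with the same marginals, $\mathbb E[T]=O(n^2 q^2)$ by independence of $A$ and $B$, which is of strictly smaller order in the sparse regime $q=n^{-1+o(1)}$, $\rho=\Theta(1)$. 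A sample-splitting trick in which the pairs fed into $\mathcal A$ are disjoint from the pairs scored by $T$ ensures the null concentration holds uniformly over the data-dependent $\hat\pi$. This yields the required one-sided distinguisher; combined with the low-degree advantage bounds from \cite{DDL23+} (for correlated \ER) and \cite{CDGL24+} (for correlated SBMs versus independent \ER products), the contiguity principle delivers the contradiction.

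For part~(3), the null is $\mathcal S(n,\tfrac{\lambda s}{n};k,\epsilon)^{\otimes 2}$ rather than $\mathcal G(n,\tfrac{\lambda s}{n})^{\otimes 2}$, so one additional step is needed: I would bound
\begin{equation*}
\mathsf{Adv}_{\leq D}\bigl(\mathcal S(n,\tfrac{\lambda}{n};k,\epsilon;s)\,\|\,\mathcal S(n,\tfrac{\lambda s}{n};k,\epsilon)^{\otimes 2}\bigr)
\end{equation*}
by leveraging the mutual contiguity between $\mathcal S(n,\tfrac{\lambda s}{n};k,\epsilon)$ and $\mathcal G(n,\tfrac{\lambda s}{n})$ in the sub-Kesten-Stigum regime $\epsilon^2\lambda s<1$ (which follows from $s<1/(\epsilon^2\lambda)$). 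Concretely, I would expand the low-degree inner product as a weighted sum of small subgraph counts (signed cycles / self-avoiding walks) and integrate out the community labels against the SBM null; for $\lambda s$ sufficiently large, the resulting multiplicative correction stays $O(1)$ in $L^2$, so the bound from \cite{CDGL24+} against the \ER product transfers to the SBM product. Applying the contiguity principle then concludes exactly as in parts~(1) and~(2).

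The main obstacle I anticipate is this last transfer step: contiguity of two probability measures does not automatically preserve truncated $\chi^2$-type quantities such as $\mathsf{Adv}_{\leq D}$, so the combinatorial expansion of the low-degree moment must be executed carefully, with the community labels handled as an additional latent nuisance variable whose marginalisation is controlled by the sub-KS hypothesis. A secondary difficulty lies in parts~(1) and~(2) in ensuring that the null concentration of $T(A,B)$ holds uniformly over the data-dependent output $\hat\pi$; sample splitting is the cleanest implementation, at the cost of halving the effective density parameter, which is harmless in the sparse regime.
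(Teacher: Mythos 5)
Your overall framework—formalizing algorithmic contiguity from bounded low-degree advantage and then reducing each part to a one-sided testing statement—matches the paper's structure, and your Part~(3) sketch is in the right spirit (the paper ultimately does show mutual algorithmic contiguity of $\mathcal S(n,\tfrac{\lambda s}{n};k,\epsilon)^{\otimes 2}$ and $\mathcal G(n,\tfrac{\lambda s}{n})^{\otimes 2}$ via a dual/Parseval argument \`a la \cite{SW25}, and only then invokes the result of \cite{CDGL24+} against the \ER null; you would attack the $\chi^2$-transfer more directly, which, as you acknowledge, is the harder path). However, your reduction for Parts~(1) and~(2) has a genuine gap.

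The matched-edge-count test $T$ with sample splitting does not work in the sparse regime, and this is precisely the ``no evident reduction from partial matching to detection'' obstruction the introduction flags. Without splitting, the null concentration of $T$ fails: under $\Qb$ the algorithm $\mathcal A$ has full access to $(A,B)$ and nothing prevents its output $\hat\pi$ from maximizing edge overlap; for two independent $\mathcal G(n,n^{-1+o(1)})$'s the maximum overlap over all permutations is $\Theta(n)$ (e.g.\ by greedily matching isolated edges), which is the same order as the planted signal $\Theta(\eta^2 n^2 q\rho)=\Theta(n^{1+o(1)}\rho)$, so the test does not separate. With splitting, the fix breaks in a different place: thinning $(A,B)$ into independent halves does not merely halve the density $q$, it also halves the effective correlation $\rho$. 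Concretely, in the subsampling parametrization $\rho=\frac{s(1-p)}{1-ps}$, splitting each of $A$ and $B$ into independent halves replaces $s$ by $s/2$ and therefore $\rho$ by $\approx\rho/2$ in the sparse regime. The hypothesized partial matcher is only guaranteed on $\mathcal G(n,q;\rho)$, and a matcher at $\rho$ gives no guarantee on $\mathcal G(n,q/2;\rho/2)$—indeed the regime $\rho/2<\sqrt\alpha$ is strictly harder, so you cannot feed the thinned data to $\mathcal A$. (Splitting the set $\operatorname{U}_n$ of vertex pairs rather than thinning edges fares no better: since the split is not aligned with the unknown $\pi_*$, the restriction $(A|_{\operatorname{U}^{(1)}},B|_{\operatorname{U}^{(1)}})$ is not a correlated \ER pair at all.) So the ``halving is harmless'' claim in your last paragraph is not correct, and this is where your reduction stops.

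The paper routes around this by never building a single data-dependent score. Instead it constructs, from the hypothetical partial matcher, a family of efficiently computable estimators $\{g_{i,j}\}$ approximating $\mathbf 1_{\{\pi_*(i)=j\}}$, proves the new technical result (Lemma~\ref{lem-bound-low-deg-Adv-conditional} / Lemma~\ref{lem-bound-low-deg-Adv-conditional-SBM}) that $\mathsf{Adv}_{\leq D}\bigl(\tfrac{\mathrm d\,\overline{\Pb}(\cdot\mid\pi_*(1)=i)}{\mathrm d\Qb}\bigr)=O(1)$ for each fixed $i$—this does not follow from \cite{DDL23+} or \cite{CDGL24+} alone—and then uses algorithmic contiguity to transfer ``$g_i$ is not small'' from $\overline{\Pb}(\cdot\mid\pi_*(1)=i)$ to $\Qb$. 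Summing over $i$ gives a statistic that is $\Omega(n)$ under $\Qb$ but $O(\sqrt n)$ under $\Pb$, contradicting the revised low-degree conjecture for the detection problem between $\Pb$ and $\Qb$. No splitting or independence of $\hat\pi$ from the scored data is ever needed, and the distinction between weak (Part~(1)) versus strong (Part~(2)) partial matching comes out of the truncation event $\mathcal E$ having $\Pb(\mathcal E)=1-o(1)$ in the \ER case but only $\Pb(\mathcal E)=\Omega(1)$ in the SBM case—a nuance that a direct matched-edge-count reduction does not surface.
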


\begin{remark}
    Note that when $\rho>\sqrt{\alpha}$, the results in \cite{MWXY23, GM20, GML24, GMS24} show that there exists an efficient algorithm that achieves partial recovery of $\pi_*$ in a pair of correlated \ER models with probability tending to $1$ as $n\to\infty$. Thus, Item~(1) in Theorem~\ref{MAIN-THM-informal} is tight in some sense and the algorithmic partial recovery threshold is indeed given by $\rho=\sqrt{\alpha}$. 
    
    Similarly, the results in \cite{MWXY23, GM20, GML24, GMS24} naturally extend to show that there exists an efficient algorithm that achieves partial recovery of $\pi_*$ in a pair of correlated SBMs when $s>\sqrt{\alpha}$ with probability tending to $1$ as $n\to\infty$. Thus, Item~(2) in Theorem~\ref{MAIN-THM-informal} is also tight in the subcritical regime (i.e., when marginally two graphs are below the KS threshold).
\end{remark} 

\begin{remark}
    From Item~(3) in Theorem~\ref{MAIN-THM-informal} we see that for a pair of correlated SBMs $(A,B)$, when marginally both $A$ and $B$ are below the KS threshold (i.e., when $\epsilon^2 \lambda s<1$), there is evidence suggesting that no efficient algorithm can strongly distinguish $(A,B)$ from a pair of independent stochastic block models when the correlation $s<\sqrt{\alpha}$. Since the result in \cite{MWXY24} extends naturally to the case of stochastic block models which provides an efficient algorithm that strongly distinguish these two models when $s>\sqrt{\alpha}$, we see that in the subcritical regime (i.e., when marginally both $A$ and $B$ are below the KS threshold) the algorithmic correlation detection threshold is given by $\sqrt{\alpha}$. On the contrary, in the supercritical regime where $A$ and $B$ are above the KS threshold (i.e, when $\epsilon^2 \lambda s>1$), we believe that the algorithmic correlation detection threshold should be strict lower than $\sqrt{\alpha}$. This was further supported by a recent work \cite{CDGL25+}, which has shown that in the case of $k=2$, strong detection between $\Pb_n$ and $\widetilde{\Qb}_n$ is achievable when $\epsilon^2 \lambda s>1$ and $s\geq \sqrt{\alpha}-\delta$ where $\delta$ is a sufficiently small constant. However, rigorous analysis in the general case seems of substantial challenge and we leave it for future work.
\end{remark}

\subsection{Key challenges and innovations}{\label{subsec:innovation}}

In this subsection, we briefly discuss our approach of proving Theorem~\ref{MAIN-THM-informal} and some conceptual innovations behind it. Our idea can be informally summarized as follows:
\begin{enumerate}
    \item[(1)] As for Items~(1) and (2), for simplicity we take correlated \ER model for example. Denote $\Pb$ to be the law of $\mathcal G(n,q,\rho)$ and $\Qb$ to be the law of two independent $\mathcal G(n,q)$. We will argue by contradiction and assume that there is a partial recovery algorithm. Then we show that we can use this algorithm to efficiently construct a family of statistics $\{ g_i: 1 \leq i \leq n \}$ such that $g_i$ approximates $\mathbf 1_{ \{ \pi_*(1)=i \} }$ under $\Pb$ in a certain sense. 
    \item[(2)] We will show that the low-degree advantage between $\Pb(\cdot \mid \pi_*(1)=i)$ and $\Qb$ is bounded by an absolute constant. Then, from the standard low-degree conjecture (see Conjecture~\ref{conj-low-deg} for details) these two measures cannot be strongly distinguished by efficient algorithms. Thus, since $g_i$ is ``of positive constant order'' under the measure $\Pb(\cdot \mid \pi_*(i)=1)$ (as it should approximate $\mathbf 1_{ \{ \pi_*(1)=i \} }$ in some sense) we expect that $g_i$ should also be ``of positive constant order'' under $\Qb$.
    \item[(3)] Define $g=g_1+\ldots+g_n$. We then have a statistic $g$ that can be efficiently computed, and (I) under $\Qb$ we expect that $g$ is ``large'' as it is the sum of $n$ ``of positive constant order'' terms; (II) under $\Pb$ we expect that $g$ is ``not large'' as each $g_i$ should approximate $\mathbf 1_{ \{ \pi_*(1)=i \} }$ in some sense. Thus, the statistics $g$ should achieves detection in a certain sense, which violates our assumption.
    \item[(4)] As for Item~(3), denote $\Pb$ to be the law of correlated SBMs and $\Qb$ of independent SBMs. Also denote $\widetilde{\Qb}$ to be the law of independent \ER graphs. As it was already shown in \cite{CDGL24+} that $\Pb$ and $\widetilde{\Qb}$ cannot be strongly distinguished, (non-rigorously speaking) we only need to show that $\Qb$ and $\widetilde{\Qb}$ are also ``indistinguishable'' in some sense.
\end{enumerate}

However, there are certain obstacles when implementing the above ideas, as we shall discuss below. In Step~(2) we need to ``transfer'' the behavior of a statistic $g_i$ under $\Pb(\cdot \mid \pi_*(1)=i)$ to its behavior under $\Qb$. If the low-degree advantage between $\Pb(\cdot \mid \pi_*(1)=i)$ and $\Qb$ is bounded by $1+o(1)$, then the standard low-degree conjecture implies that we cannot distinguish between $\Pb(\cdot \mid \pi_*(1)=i)$ and $\Qb$ better than random by efficient algorithms. Consequently,  we expect the behavior of $g_i$ to be ``almost identical'' under both $\Pb(\cdot \mid \pi_*(1)=i)$ and $\Qb$ (as $g_i$ can be efficiently computed). However, in our case the low-degree advantage is just bounded by a large (but fixed) constant. This weaker condition means that the standard low-degree conjecture only rules out the possibility to distinguish these two measures efficiently with vanishing errors, leaving room for non-negligible distinctions. Similar issues also arise in Step~(4), in which simply showing the low-degree advantage between $\Pb$ and $\widetilde{\Qb}$ and the low-degree advantage between $\Qb$ and $\widetilde{\Qb}$ is bounded is not strong enough for our goal. 

To address these issues, one of the main conceptual contributions in our work is to give a more refined characterization of the limitations of efficient algorithms when the low-degree advantage is $O(1)$. Specifically, we show that (assuming low-degree conjecture) bounded low-degree advantage does not only exclude all algorithms that strongly distinguish $\Pb$ and $\Qb$, but also suggest a certain kind of \emph{algorithmic contiguity}. To be more precise, if the low-degree advantage $\mathsf{Adv}_{\leq D}\big( \tfrac{\mathrm{d}\Pb}{\mathrm{d}\Qb} \big)$ is bounded, then there is no efficient algorithm $\mathcal A$ such that $\Qb(\mathcal A=0)=1-o(1)$ and $\Pb(\mathcal A=1)=\Omega(1)$. This framework allows us to transfer the behavior of a efficiently computable statistics $g$ under different probability measures more easily. For example, if we know that $\Pb(g \geq c) \geq \Omega(1)$ it immediately holds that $\Qb(g \geq c) \geq \Omega(1)$.

%Another difficulty is that reducing between different inference tasks requires a framework that better constrains the behavior of efficiently computable estimators. To address this, we introduce a natural refinement of the low-degree conjecture: we posit that low-degree polynomials achieve signal-to-noise ratios that are at least as good as those of any efficient algorithm (see Conjecture~\ref{conj-low-deg-revise} for a more precise description). While similar modifications to the low-degree conjecture have been explored in \cite{MW23+}, our motivations differ significantly. Their work focused on characterizing the limits of precise error rate for all computationally feasible algorithms, whereas our goal is to enable reductions between distinct inference tasks.

\subsection{Comparison to concurrent work}{\label{subsec:concurrent-work}}

In a concurrent work \cite{DHSS25+}, the authors investigate the computational hardness of the weak recovery problem in the stochastic block model $\mathcal S(n,\tfrac{\lambda}{n};k,\epsilon)$ when $k=n^{o(1)}$ and $\epsilon^2 \lambda<1$ lies below the KS threshold. Notably, their proof is also based on a ``recovery-to-detection reduction'' approach, which argues that if there exists an efficient algorithm that achieves weak recovery, then we can use this algorithm to efficiently construct a statistic for detection (in a certain sense). Their results relies on a \emph{strengthening} of the low-degree conjecture, which is inspired by the work \cite{MW23+}. 

However, the reduction technique employed in this paper differs significantly from that of \cite{DHSS25+}. The authors of \cite{DHSS25+} construct their statistic for detection using a \emph{correlation preserving projection} technique established in \cite{HS17}. This technique, which is designed specifically for weak recovery in block models, enables them to ``regularize'' the statistics and thus bound its moments under the null hypothesis. In contrast, our construction is more straightforward and only requires the original low-degree conjecture as we leverage the universal framework described in Section~\ref{subsec:innovation} to ``transfer'' the behavior of statistics across different probability measures. It seems that in comparison to \cite{DHSS25+}, our approach is easier to implement, less dependent on problem-specific details, and potentially applicable to a broader range of problems.

\subsection{Notation}{\label{subsec:notations}}

In this subsection, we record a list of notations that we shall use throughout the paper. Denote $\mathfrak{S}_n$ the set of permutations over $[n]$ and denote $\mu$ the uniform distribution on $\mathfrak S_n$. In addition, denote $\nu$ to be the uniform distribution on $[k]^n$. We will use the following notation conventions on graphs.
\begin{itemize}
    \item {\em Labeled graphs}. Denote by $\mathcal{K}_n$ the complete graph with vertex set $[n]$ and edge set $\operatorname{U}_n$. For any graph $H$, let $V(H)$ denote the vertex set of $H$ and let $E(H)$ denote the edge set of $H$. We say $H$ is a subgraph of $G$, denoted by $H\subset G$, if $V(H) \subset V(G)$ and $E(H) \subset E(G)$. Define the excess of the graph $\tau(H)=|E(H)|-|V(H)|$.
    \item {\em Isolated vertices}. For $u \in V(H)$, we say $u$ is an isolated vertex of $H$, if there is no edge in $E(H)$ incident to $u$. Denote $\mathcal I(H)$ the set of isolated vertices of $H$. For two graphs $H,S$, we denote $H \ltimes S$ if $H \subset S$ and $\mathcal I(S) \subset \mathcal I(H)$, and we denote $H \Subset S$ if $H \subset S$ and $\mathcal I(H)=\emptyset$. For any graph $H \subset \mathcal K_n$, let $\widetilde H$ be the subgraph of $H$ induced by $V(H) \setminus \mathcal I(H)$. 
    \item {\em Graph intersections and unions}. For $H,S \subset \mathcal{K}_n$, denote by $H \cap S$ the graph with vertex set given by $V(H) \cap V(S)$ and edge set given by $E(H)\cap E(S)$. Denote by $S \cup H$ the graph with vertex set given by $V(H) \cup V(S)$ and edge set $E(H) \cup E(S)$. In addition, denote by $S\Cap H$, $S \Cup H$ and $S\doublesymdiff H$ the graph induced by the edge set $E(S)\cap E(H)$, $E(S)\cup E(H)$ and $ E(S)\triangle E(H)$, respectively (in particular, these induced graphs have no isolated points).
    \item {\em Paths.} We say a subgraph $H \subset \mathcal K_n$ is a path with endpoints $u,v$ (possibly with $u=v$), if there exist distinct $w_1, \ldots, w_m \neq u,v$ such that $V(H)=\{ u,v,w_1,\ldots,w_m \}$ and $E(H)=\{ (u,w_1), (w_1,w_2) \ldots, (w_m,v) \}$. We say $H$ is a simple path if its endpoints $u \neq v$. Denote $\operatorname{EndP}(P)$ as the set of endpoints of a path $P$.
    \item {\em Cycles and independent cycles.} We say a subgraph $H$ is an $m$-cycle if $V(H)=\{ v_1, \ldots, v_m \}$ and $E(H)=\{ (v_1,v_2), \ldots, (v_{m-1},v_m), (v_m,v_1) \}$. For a subgraph $K \subset H$, we say $K$ is an independent $m$-cycle of $H$, if $K$ is an $m$-cycle and no edge in $E(H)\setminus E(K)$ is incident to $V(K)$. Denote by $\mathtt{C}_m(H)$ the set of $m$-cycles of $H$ and denote by $\mathcal{C}_m(H)$ the set of independent $m$-cycles of $H$. For $H\subset S$, we define $\mathfrak{C}_{m}(S,H)$ to be the set of independent $m$-cycles in $S$ whose vertex set is disjoint from $V(H)$. Define $\mathfrak{C}(S,H)=\cup_{m\geq 3} \mathfrak{C}_m(S,H)$.
    \item {\em Leaves.} A vertex $u \in V(H)$ is called a leaf of $H$, if the degree of $u$ in $H$ is $1$; denote $\mathcal L(H)$ as the set of leaves of $H$. 
    \item {\em Graph isomorphisms and unlabeled graphs.} Two graphs $H$ and $H'$ are isomorphic, denoted by $H\cong H'$, if there exists a bijection $\pi:V(H) \to V(H')$ such that $(\pi(u),\pi(v)) \in E(H')$ if and only if $(u,v)\in E(H)$. Denote by $\mathcal H$ the isomorphism class of graphs; it is customary to refer to these isomorphic classes as unlabeled graphs. Let $\operatorname{Aut}(H)$ be the number of automorphisms of $H$ (graph isomorphisms to itself). 
\end{itemize}

For two real numbers $a$ and $b$, we let $a \vee b = \max \{ a,b \}$ and $a \wedge b = \min \{ a,b \}$. We use standard asymptotic notations: for two sequences $a_n$ and $b_n$ of positive numbers, we write $a_n = O(b_n)$, if $a_n<Cb_n$ for an absolute constant $C$ and for all $n$ (similarly we use the notation $O_h$ is the constant $C$ is not absolute but depends only on $h$); we write $a_n = \Omega(b_n)$, if $b_n = O(a_n)$; we write $a_n = \Theta(b_n)$, if $a_n =O(b_n)$ and $a_n = \Omega(b_n)$; we write $a_n = o(b_n)$ or $b_n = \omega(a_n)$, if $a_n/b_n \to 0$ as $n \to \infty$. In addition, we write $a_n \circeq b_n$ if $a_n = [1+o(1)] b_n$. For a set $\mathsf A$, we will use both $\# \mathsf A$ and $|\mathsf A|$ to denote its cardinality. For two probability measures $\mathbb P$ and $\mathbb Q$, we denote the total variation distance between them by $\operatorname{TV}(\mathbb P,\mathbb Q)$.

\subsection{Organization of this paper}

The rest part of this paper is organized as follows: in Section~\ref{sec:revise-low-deg-conj} we state the precise framework of low-degree polynomials and how we relate it to the notion of algorithmic contiguity (see Theorem~\ref{main-thm-alg-contiguity}). In Section~\ref{sec:application-1} we use this framework to deduce the hardness of partial matching in correlated \ER models and correlated SBMs, thus verifying Items~(1) and (2) in Theorem~\ref{MAIN-THM-informal} (see Theorem~\ref{main-thm-cor-ER}). In Section~\ref{sec:application-2} we again use this framework to deduce the hardness of testing correlated SBMs against independent SBMs, thus verifying Item~(3) in Theorem~\ref{MAIN-THM-informal} (see Corollary~\ref{cor-correlated-SBM}). Several auxiliary proofs are moved to the appendix to ensure a smooth flow of presentation.

\section{Low-degree framework and algorithmic contiguity}{\label{sec:revise-low-deg-conj}}

The low-degree polynomial framework first emerged from the works of \cite{BHK+19, HS17, HKP+17, Hopkins18} and has since been refined and extended in various directions. It has found applications across a broad spectrum of problems, including detection tasks such as planted clique, planted dense subgraph, community detection, sparse PCA, and tensor PCA (see \cite{HS17, HKP+17, Hopkins18, KWB22, SW22, DMW23+, BKW20, DKW+22, MW22+, DDL23+, KMW24}), optimization problems like finding maximal independent sets in sparse random graphs \cite{GJW20, Wein22}, and constraint satisfaction problems such as random $k$-SAT \cite{BH22}; see also the survey \cite{KWB22}. Additionally, it is conjectured in \cite{Hopkins18} that the failure of degree-$D$ polynomials implies the failure of all ``robust'' algorithms with running time $n^{\widetilde{O}(D)}$ (here $\widetilde{O}$ means having at most this order up to a $\operatorname{poly} \log n$ factor). However, to prevent the readers from being overly optimistic for this conjecture, we point out a recent work \cite{BHJK25} that finds a counterexample for this conjecture. We therefore clarify that the low-degree framework is expected to be optimal for a certain, yet imprecisely defined, class of ``high-dimensional'' problems. Despite these important caveats, we still believe that analyzing low-degree polynomials remains highly meaningful for our setting, as it provides a benchmark for robust algorithmic performance. We refer the reader to the survey \cite{Wein25+} for a more detailed discussion of these subtleties. In the remaining of this paper, we will focus on applying this framework in the context of high-dimensional hypothesis testing problems.

To be more precise, consider the hypothesis testing problem between two probability measures $\Pb$ and $\Qb$ based on the sample $\mathsf Y \in \mathbb R^{N}$. We will be especially interested in asymptotic settings where $N=N_n, \Qb=\Qb_n, \Pb=\Pb_n, \mathsf Y=\mathsf Y_n$ scale with $n$ as $n \to \infty$ in some prescribed way. The standard low-degree polynomial framework primarily focus on the following notions on strong and weak detection.
\begin{defn}[Strong/weak detection]{\label{def-strong-weak-detection}}
    We say an algorithm $\mathcal A$ that takes $\mathsf Y$ as input and outputs either $0$ or $1$ achieves
    \begin{itemize}
        \item {\bf strong detection}, if the sum of type-I and type-II errors $\Qb(\mathcal A(\mathsf Y)=1) + \Pb(\mathcal A(\mathsf Y)=0)$ tends to $0$ as $n\to \infty$. 
        \item {\bf weak detection}, if the sum of type-I and type-II errors is uniformly bounded above by $1-\epsilon$ for some fixed $\epsilon>0$.
    \end{itemize}
\end{defn}

Roughly speaking, the low-degree polynomial approach focuses on understanding the capabilities and limitations of algorithms that can be expressed as low-degree polynomial functions of the input variables (in our case, the entries of $\mathsf Y$). To be more precise, let $\mathcal P_D=\mathcal P_{n,D}$ denote the set of polynomials from $\mathbb R^{N}$ to $\mathbb R$ with degree no more than $D$. With a slight abuse of notation, we will often refer to ``a polynomial'' to mean a sequence of polynomials $f=f_n \in \mathcal P_{n,D}$, where each $f_n$ corresponds to problem size $n$; the degree $D=D_n$ of such a polynomial may scale with $n$. As suggested by \cite{Hopkins18}, the key quantity is the low-degree advantage
\begin{equation}{\label{eq-def-low-deg-Adv}}
    \mathsf{Adv}_{\leq D}(\Pb,\Qb)\Big( \frac{ \mathrm{d}\Pb }{ \mathrm{d}\Qb } \Big) := \sup_{f \in \mathcal P_{D}} \frac{ \mathbb E_{\Pb}[f] }{ \sqrt{ \mathbb E_{\Qb}[f^2] } } \,.
\end{equation}
Note that if we denote the likelihood ratio $L(\mathsf Y)=\frac{ \mathrm{d}\Pb }{ \mathrm{d}\Qb }(\mathsf Y)$, it is well-known (see, e.g., \cite{Hopkins18}) the right hand side equals the $L_2$-norm of the projection of $L(\mathsf Y)$ into the subspace spanned by all polynomials of degree bounded by $D$ (the norm is induced by natural inner product under $\Qb$) and thus we characterize it with $\frac{ \mathrm{d}\Pb }{ \mathrm{d}\Qb }$.
The low degree conjecture, proposed in \cite{Hopkins18}, can be summarized as follows.
\begin{conjecture}[Low-degree conjecture]{\label{conj-low-deg}}
    For ``natural'' high-dimensional hypothesis testing problems between $\Pb$ and $\Qb$, the following statements hold.
    \begin{enumerate}
        \item[(1)] If $\mathsf{Adv}_{\leq D}\big( \tfrac{\mathrm{d}\Pb'}{\mathrm{d}\Qb'} \big)=O(1)$ as $n\to\infty$ for some $\Pb',\Qb'$ such that $\operatorname{TV}(\Pb,\Pb'), \operatorname{TV}(\Qb,\Qb')=o(1)$, then there exists a constant $C$ such that no algorithm with running time $n^{D/(\log n)^C}$ that achieves strong detection between $\Pb$ and $\Qb$. 
        \item[(2)] If $\mathsf{Adv}_{\leq D}\big( \tfrac{\mathrm{d}\Pb'}{\mathrm{d}\Qb'} \big)=1+o(1)$ as $n\to\infty$ for some $\Pb',\Qb'$ such that $\operatorname{TV}(\Pb,\Pb'), \operatorname{TV}(\Qb,\Qb')=o(1)$, then there exists a constant $C$ such that no algorithm with running time $n^{D/(\log n)^C}$ that achieves weak detection between $\Pb$ and $\Qb$. 
    \end{enumerate}
\end{conjecture}

Note that in Conjecture~\ref{conj-low-deg} we are allowed to replace $\Pb,\Qb$ with some $\Pb',\Qb'$ that are \emph{statistically indistinguishable} with $\Pb,\Qb$. This modification enables us to avoid some situations where the low-degree advantage explodes due to some ``rare events'' (see \cite{BAH+22, DDL23+, DMW23+} for example). Motivated by \cite[Hypothesis~2.1.5 and Conjecture~2.2.4]{Hopkins18} as well as the fact that low-degree polynomials capture the best known algorithms for a wide variety of statistical inference tasks, Conjecture~\ref{conj-low-deg} appears to hold for distributions of a specific form that frequently arises in high-dimensional statistics. For further discussion on what types of distributions are suitable for this framework, we refer readers to \cite{Hopkins18, KWB22, Kunisky21, ZSWB22}. In addition, we point out that although in most applications (and in the statement of \cite[Hypothesis 2.1.5]{Hopkins18}) it is typically to take $\Qb$ to be a ``null'' measure and $\Pb$ to be a ``planted'' measure (which makes \eqref{eq-def-low-deg-Adv} more tractable), several recent works \cite{RSWY23, KVWX23} showed that this framework might also be applicable for many ``planted-versus-planted'' problems. Nevertheless, in this paper, we adopt a more conservative view and will explicitly indicate whenever $\Qb$ is treated as a planted measure.

The framework in Conjecture~\ref{conj-low-deg} provides a useful tool for probing the computational feasibility of strong or weak detection. However, as discussed in Section~\ref{subsec:innovation}, it turns out that the failure of strong detection is not enough in our cases, especially when we hope to perform some reductions between statistical models in a regime where weak (but not strong) detection is possible. Thus, in this regime, we aim to characterize a stronger framework that rules out all \emph{one-sided test}. This motivates us to propose the following notion of \emph{algorithmic contiguity}.

\begin{defn}{\label{def-alg-contiguity}}
    For ``natural'' high-dimensional hypothesis testing problems between $\Pb$ and $\Qb$, we say an algorithm $\mathcal A$ that takes $\mathsf Y$ as input and outputs either $0$ or $1$ is a $\Qb$-based one-sided test, if
    \begin{equation}{\label{eq-def-one-sided-test}}
        \Qb(\mathcal A(\mathsf Y)=0) = 1-o(1) \mbox{ and } \Pb(\mathcal A(\mathsf Y)=1) = \Omega(1) \,.
    \end{equation}
    We say that $\Pb$ is time-$D$ algorithmic contiguous with respect to $\Qb$, denoted as $\Pb \lhd_{\leq D} \Qb$, if no $\Qb$-based one-sided testing algorithm runs in time $n^{D}$. We say that $\Qb$ and $\Pb$ are degree-$D$ algorithmic mutually contiguous, denoted as $\Qb\bowtie_{\leq D}\Pb$, if both $\Qb\lhd_{\leq D}\Pb$ and $\Pb\lhd_{\leq D}\Qb$ hold.
\end{defn}

Recall that in probability theory we say a sequence of probability measure $\Pb=\Pb_n$ is contiguous with respect to $\Qb=\Qb_n$, if for all sequence of events $\{ A_n \}$ we have $\Qb_n(A_n)\to 0$ implies that $\Pb_n(A_n)\to 0$. Thus, our definition can be regarded as the generalization of contiguity in algorithmic view. Our main result in this section can be stated as follows.
\begin{thm}{\label{main-thm-alg-contiguity}}
    Assuming the Conjecture~\ref{conj-low-deg}, for the high-dimensional hypothesis testing problem between $\Pb$ and $\Qb$, if $\mathsf{Adv}_{\leq D}\big( \tfrac{\mathrm{d}\Pb'}{\mathrm{d}\Qb'} \big)=O(1)$ for some $\Pb',\Qb'$ such that $\operatorname{TV}(\Pb,\Pb')=o(1)$ and $\operatorname{TV}(\Qb,\Qb')=o(1)$, then we have $\Pb\lhd_{\leq D/(\log n)^{C}}\Qb$ for some constant $C$.
\end{thm}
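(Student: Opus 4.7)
The plan is to argue by contradiction. Suppose there exists a $\Qb$-based one-sided testing algorithm $\mathcal{A}$ for $(\Pb,\Qb)$ running in time $n^{\widetilde{O}(D)}$, so that $\Qb(\mathcal{A}(\mathsf{Y})=1)=o(1)$ and $\Pb(\mathcal{A}(\mathsf{Y})=1)\geq c$ for some fixed $c>0$ (passing to a subsequence if necessary). Since $\{\mathcal{A}(\mathsf{Y})=1\}$ is a single event, the total variation hypotheses immediately transfer this property to the nearby pair:
\[
\Qb'(\mathcal{A}=1)\leq \Qb(\mathcal{A}=1)+\operatorname{TV}(\Qb,\Qb')=o(1), \qquad \Pb'(\mathcal{A}=1)\geq \Pb(\mathcal{A}=1)-\operatorname{TV}(\Pb,\Pb')\geq c/2,
\]
for $n$ large. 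So $\mathcal{A}$ remains a one-sided test for $(\Pb',\Qb')$ with essentially the same constants, and the argument is reduced to the pair whose low-degree advantage is controlled by hypothesis.

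Viewing $\mathcal{A}(\mathsf{Y})\in\{0,1\}$ directly as a statistic and using $\mathcal{A}^2=\mathcal{A}$, its signal-to-noise ratio under $(\Pb',\Qb')$ satisfies
\[
\frac{\mathbb{E}_{\Pb'}[\mathcal{A}]}{\sqrt{\mathbb{E}_{\Qb'}[\mathcal{A}^2]}}=\frac{\Pb'(\mathcal{A}=1)}{\sqrt{\Qb'(\mathcal{A}=1)}}\geq \frac{c/2}{\sqrt{o(1)}}=\omega(1).
\]
I would then invoke the refined low-degree conjecture (Conjecture~\ref{conj-low-deg-revise}), which posits that the signal-to-noise ratio of any algorithm with runtime $n^{\widetilde{O}(D)}$ is dominated by that of some polynomial in $\mathcal{P}_D$. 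This produces an $f\in\mathcal{P}_D$ with $\mathbb{E}_{\Pb'}[f]/\sqrt{\mathbb{E}_{\Qb'}[f^2]}=\omega(1)$, directly contradicting $\mathsf{Adv}_{\leq D}(\mathrm{d}\Pb'/\mathrm{d}\Qb')=O(1)$ by definition of the low-degree advantage.

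The main obstacle is whether the bare Conjecture~\ref{conj-low-deg} suffices or whether the refined conjecture is essential. The standard statement only rules out strong detection (both Type I and Type II errors simultaneously $o(1)$), whereas a one-sided test has only Type I error $o(1)$; indeed, $\Pb'(\mathcal{A}=1)=1/2$ is compatible with the failure of strong detection, so one cannot derive a contradiction by applying Conjecture~\ref{conj-low-deg} to $(\Pb',\Qb')$ directly. A natural reduction is to pass to the conditional planted measure $\widetilde{\Pb}:=\Pb'(\,\cdot\,\mid\mathcal{A}=1)$, under which $\mathcal{A}=1$ almost surely, so $\mathcal{A}$ strongly distinguishes $\widetilde{\Pb}$ from $\Qb'$ and Conjecture~\ref{conj-low-deg} forces $\mathsf{Adv}_{\leq D}(\widetilde{\Pb}/\Qb')=\omega(1)$. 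However, transferring this back via the bounded likelihood ratio $\mathrm{d}\widetilde{\Pb}/\mathrm{d}\Pb'=\mathbbm{1}\{\mathcal{A}=1\}/\Pb'(\mathcal{A}=1)\leq 2/c$ and Cauchy--Schwarz leaks into quantities like $\mathbb{E}_{\Pb'}[f^2]$ for $f\in\mathcal{P}_D$, which involve polynomials of degree up to $2D$ and are not controlled by the hypothesis. Thus I expect the cleanest path to go through Conjecture~\ref{conj-low-deg-revise}, exploiting the fact that the refined framework transfers algorithmic SNR to polynomial SNR without any conditioning bookkeeping.
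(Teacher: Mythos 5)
There is a genuine gap: your argument proves a different statement from the one claimed. The theorem asserts algorithmic contiguity assuming only the \emph{standard} low-degree conjecture (Conjecture~\ref{conj-low-deg}), whereas your proof invokes the refined Conjecture~\ref{conj-low-deg-revise} to transfer the SNR of the indicator $\mathcal A(\mathsf Y)$ to a polynomial in $\mathcal P_D$ -- and your closing paragraph concedes that you do not see how to close the argument with Conjecture~\ref{conj-low-deg} alone. Under the stronger conjecture your reasoning is essentially sound (the $\{0,1\}$-valued statistic makes the total-variation transfer robust to whichever nearby $\Pb',\Qb'$ the conjecture supplies), but the entire content of the theorem is that the \emph{weaker} hypothesis suffices, so the key idea is missing.

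The missing idea is an amplification step. Writing $\Qb'(\mathcal A(\mathsf Y)=1)=\epsilon_n\to 0$ and $M=\epsilon_n^{-1/2}$ as in \eqref{eq-def-M-n}, the paper considers the ``hidden informative sample'' problem of Definition~\ref{def-hidden-sample}: under $\overline{\Qb}$ one observes $M$ i.i.d.\ samples from $\Qb'$, while under $\overline{\Pb}$ a single coordinate $\kappa$, chosen uniformly, is replaced by a $\Pb'$ sample. Running $\mathcal A$ on each coordinate gives a weak detection algorithm for $(\overline{\Pb},\overline{\Qb})$: under $\overline{\Qb}$ all outputs are $0$ with probability at least $1-M\epsilon_n=1-o(1)$, while under $\overline{\Pb}$ some output is $1$ with probability $\Omega(1)$. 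On the other hand, since $\frac{\mathrm d\overline{\Pb}}{\mathrm d\overline{\Qb}}=\frac1M\sum_{i=1}^M\frac{\mathrm d\Pb'}{\mathrm d\Qb'}(\mathsf Y_i)$, an orthogonal-basis computation (Lemma~\ref{lem-low-deg-hardness-hidden-sample}) shows
\begin{align*}
\Big(\mathsf{Adv}_{\leq D}\big(\tfrac{\mathrm d\overline{\Pb}}{\mathrm d\overline{\Qb}}\big)\Big)^2 \leq 1+\frac1M\Big(\mathsf{Adv}_{\leq D}\big(\tfrac{\mathrm d\Pb'}{\mathrm d\Qb'}\big)\Big)^2 = 1+o(1)\,,
\end{align*}
so the \emph{weak-detection} half of Conjecture~\ref{conj-low-deg} applied to $(\overline{\Pb},\overline{\Qb})$ yields the contradiction -- no strengthened conjecture and no conditioning bookkeeping is needed. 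In short, your diagnosis of why a direct or conditioned application of Conjecture~\ref{conj-low-deg} fails is reasonable, but the correct resolution is to replicate the sample and dilute the planted signal so that the advantage drops to $1+o(1)$, rather than to upgrade the hypothesis.
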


The goal of this section is devoted to the proof of Theorem~\ref{main-thm-alg-contiguity}. We first briefly explain our proof ideas. Suppose on the contrary that there is an efficient algorithm $\mathcal A$ that takes $\mathsf Y$ as input and outputs either $0$ or $1$ with
\begin{equation}{\label{eq-contradict-assumption}}
    \Pb(\mathcal A(\mathsf Y)=1)=\Omega(1) \mbox{ and } \Qb(\mathcal A(\mathsf Y)=0)=1-o(1)  \,.
\end{equation}
Since $\operatorname{TV}(\Qb,\Qb'), \operatorname{TV}(\Pb,\Pb')=o(1)$, we also have
\begin{equation}{\label{eq-contradict-assumption-relax}}
    \Pb'(\mathcal A(\mathsf Y)=1)=\Omega(1) \mbox{ and } \Qb'(\mathcal A(\mathsf Y)=0)=1-\epsilon \mbox{ for some } \epsilon=\epsilon_n\to 0 \,.
\end{equation}
Define
\begin{equation}{\label{eq-def-M-n}}
    M=M_n=\min\big\{ n, \lceil \epsilon_n^{-\frac{1}{2}} \rceil \big\} \to \infty \,.
\end{equation}
The crux of our argument is to consider the following \emph{hidden informative sample} problem.

\begin{defn}{\label{def-hidden-sample}}
    Consider the following hypothesis testing problem: we need to determine whether a sample $(\mathsf Y_1,\mathsf Y_2,\ldots, \mathsf Y_M)$ where each $\mathsf Y_i \in \mathbb R^N$ is generated by
    \begin{itemize}
        \item $\overline{\mathcal H}_0$: we let $\mathsf Y_1,\ldots,\mathsf Y_M$ to be independently sampled from $\Qb'$.
        \item $\overline{\mathcal H}_1$: we first sample $\kappa \in \{ 1,\ldots,M \}$ uniformly at random, and (conditioned on the value of $\kappa$) we let $\mathsf Y_1,\ldots,\mathsf Y_M$ are independent samples with $\mathsf Y_\kappa$ generated from $\Pb'$ and $\{ \mathsf Y_j: j \neq \kappa \}$ generated from $\Qb'$.
    \end{itemize}
    In addition, denote $\overline{\Pb}$ and $\overline{\Qb}$ to be the law of $(\mathsf Y_1,\ldots,\mathsf Y_M)$ under $\overline{\mathcal H}_1$ and $\overline{\mathcal H}_0$, respectively. 
\end{defn}

Now assuming that \eqref{eq-contradict-assumption-relax} holds, we see that
\begin{align}
    & \overline{\Qb}\Big( \big( \mathcal A(\mathsf Y_1), \ldots, \mathcal A(\mathsf Y_M) \big) = (0,\ldots,0) \Big) \geq 1-M\epsilon \overset{\eqref{eq-def-M-n}}{=} 1-o(1) \,; \label{eq-behavior-Qb-hidden-sample} \\
    & \overline{\Pb}\Big( \big( \mathcal A(\mathsf Y_1), \ldots, \mathcal A(\mathsf Y_M) \big) \neq (0,\ldots,0) \Big) \geq \Omega(1) \,. \label{eq-behavior-Pb-hidden-sample}
\end{align}
Thus, there is an efficient algorithm that achieves weak detection between $\overline{\Pb}$ and $\overline{\Qb}$. Our next result, however, shows the low-degree advantage $\mathsf{Adv}_{\leq D}\big( \tfrac{ \mathrm{d}\overline{\Pb} }{ \mathrm{d}\overline{\Qb} } \big)$ is bounded by $1+o(1)$.

\begin{lemma}{\label{lem-low-deg-hardness-hidden-sample}}
    If $\mathsf{Adv}_{\leq D}\big( \tfrac{ \mathrm{d}\Pb' }{ \mathrm{d}\Qb' } \big)=O(1)$, then $\mathsf{Adv}_{\leq D}\big( \tfrac{ \mathrm{d}\overline{\Pb} }{ \mathrm{d}\overline{\Qb} } \big)=1+o(1)$.
\end{lemma}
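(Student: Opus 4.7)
The plan is to compute $\mathsf{Adv}_{\leq D}(\overline L)^2$ exactly by exploiting the fact that $\overline L = \frac{\mathrm d \overline \Pb}{\mathrm d \overline \Qb}$ is an equal mixture over the ``planted location'' $\kappa$ and that $\overline \Qb$ is a product measure.

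First I would recall the standard identity $\mathsf{Adv}_{\leq D}\!\big(\tfrac{\mathrm d\Pb'}{\mathrm d\Qb'}\big)^2 = \|L^{\leq D}\|_{\Qb'}^2$, where $L = \frac{\mathrm d\Pb'}{\mathrm d\Qb'}$ and $L^{\leq D}$ denotes the orthogonal projection of $L$ in $L^2(\Qb')$ onto the subspace of polynomials of degree at most $D$; this follows from $\mathbb E_{\Pb'}[f] = \mathbb E_{\Qb'}[Lf] = \mathbb E_{\Qb'}[L^{\leq D}f]$ for $f\in\mathcal P_D$ together with Cauchy-Schwarz. By hypothesis, $\|L^{\leq D}\|_{\Qb'}^2 = C^2$ for some fixed constant $C=O(1)$. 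Next, by conditioning on the uniformly chosen $\kappa\in\{1,\ldots,M\}$ in Definition~\ref{def-hidden-sample}, the overall likelihood ratio collapses to the explicit form
\[
    \overline L(\mathsf Y_1,\ldots,\mathsf Y_M) = \frac{1}{M}\sum_{i=1}^M L(\mathsf Y_i).
\]

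The central step is to identify $\overline L^{\leq D}$ in $L^2(\overline \Qb)$. Since $\overline \Qb = (\Qb')^{\otimes M}$, the Hilbert space $L^2(\overline \Qb)$ admits a tensor-product orthonormal basis built from a graded polynomial basis $\{\phi_\alpha\}$ on each block $\mathsf Y_i$ (with $\phi_0 = 1$), and the total degree of a tensor-product basis element equals the sum of the individual block degrees. Because $L(\mathsf Y_i)$ depends only on the $i$-th block, its projection onto the total-degree-$\leq D$ subspace coincides with $L^{\leq D}(\mathsf Y_i)$, and by linearity $\overline L^{\leq D} = \frac{1}{M}\sum_{i=1}^M L^{\leq D}(\mathsf Y_i)$. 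To finish, I would expand $\|\overline L^{\leq D}\|_{\overline \Qb}^2$ via independence under $\overline\Qb$: the $M$ diagonal terms each contribute $\|L^{\leq D}\|_{\Qb'}^2 = C^2$, while the $M(M-1)$ off-diagonal terms, using $\mathbb E_{\Qb'}[L^{\leq D}] = \mathbb E_{\Qb'}[L] = 1$, each contribute $1$. Dividing by $M^2$ gives $\|\overline L^{\leq D}\|_{\overline \Qb}^2 = 1 + (C^2-1)/M$, and since $M = \epsilon_n^{-1/2}\to\infty$, the advantage is $\sqrt{1+(C^2-1)/M} = 1+o(1)$.

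The one subtle bookkeeping point is the tensor-product identification of the ``total degree $\leq D$'' subspace in $(\mathsf Y_1,\ldots,\mathsf Y_M)$: one must verify that the orthogonal projection genuinely sends $L(\mathsf Y_i)$ to $L^{\leq D}(\mathsf Y_i)$ rather than to something larger that could cross-talk between different blocks, which in turn requires picking a graded polynomial basis on each factor. Once this grading is in place the remainder is essentially a two-line variance computation, and it is worth noting that the argument critically uses $M\to\infty$ rather than $C = 1 + o(1)$: the averaging over the $M$ hidden positions is precisely what converts an $O(1)$ excess variance at the level of $L$ into a vanishing excess at the level of $\overline L$.
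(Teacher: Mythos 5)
Your proposal is correct and takes essentially the same route as the paper: both rely on the mixture identity $\overline L = \frac{1}{M}\sum_i L(\mathsf Y_i)$, the tensor-product orthonormal (graded) basis under the product measure $\overline{\Qb}=(\Qb')^{\otimes M}$, and a second-moment computation that produces $1+(C^2-1)/M$. The paper works directly with the basis coefficients $\mathbb E_{\Pb'}[f_\alpha]^2$ rather than through the projection operator $L\mapsto L^{\leq D}$, but that is only a difference in packaging; the computation is the same, and your explicit note that the grading of the basis is what prevents cross-block leakage under projection is exactly the right point to check.
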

\begin{proof}
    Note that
    \begin{align}
        \frac{ \mathrm{d}\overline{\Pb} }{ \mathrm{d}\overline{\Qb} } (\mathsf Y_1,\ldots,\mathsf Y_M) = \frac{1}{M} \sum_{i=1}^{M} \frac{ \mathrm{d}\overline{\Pb}(\cdot \mid \kappa=i) }{ \mathrm{d}\overline{\Qb} } (\mathsf Y_1,\ldots,\mathsf Y_M) = \frac{1}{M} \sum_{i=1}^{M} \frac{ \mathrm{d} \Pb' }{ \mathrm{d} \Qb' } (\mathsf Y_i) \,.  \label{eq-explicit-form-LR-hidden-sample}
    \end{align}
    In addition, recall \eqref{eq-def-low-deg-Adv}. Define
    \begin{align*}
        \mathcal N_D=\big\{ f \in \mathcal P_D: f = 0 \mbox{ a.s. under } \Qb' \big\} \,.
    \end{align*}
    It is straightforward to verify that $\mathcal N_D$ is a linear subspace of $\mathcal P_D$. Also, consider the inner product $\langle f,g \rangle= \mathbb E_{\Qb'}[fg]$, denoting $\mathcal N_D^{\perp}$ to be the orthogonal space of $\mathcal N_D$ in $\mathcal P_D$, then $\mathcal N_D^{\perp}$ can be identified as a (finite-dimensional) Hilbert space where this inner product is non-degenerate (note that $\langle f,f \rangle>0$ for all $f \in \mathcal N_D^{\perp} \setminus \{0\}$). In addition, it is straightforward to check that $1 \in \mathcal N_D^{\perp}$ with $\langle 1,1 \rangle=1$. Thus, by applying the Schmidt orthogonalization procedure started with $1$, we can find a standard orthogonal basis $\{ f_{\alpha}: \alpha \in \Lambda \}$ of the space $\mathcal N_D^{\perp}$ and with $0\in\Lambda$ and $f_{0}=1$. Note that in some cases (e.g., when $\Qb'$ is a product measure), this standard orthogonal basis have a closed form; however, for general $\Qb'$ the explicit form of $\{ f_{\alpha} \}$ is often intractable. We first show that
    \begin{equation}{\label{eq-low-deg-Adv-trandsform}}
        \mathsf{Adv}_{\leq D}\big( \tfrac{ \mathrm{d}\Pb' }{ \mathrm{d}\Qb' } \big) = \Bigg( \sum_{\alpha\in\Lambda} \mathbb E_{\Pb'}[ f_{\alpha}(\mathsf Y) ]^2 \Bigg)^{\frac{1}{2}} \,.
    \end{equation}
    Indeed, as $\mathsf{Adv}_{\leq D}\big( \tfrac{\mathrm{d}\Pb'}{\mathrm{d}\Qb'} \big)=O(1)$, we see that for all $f \in \mathcal N_D$ it must holds that $\mathbb E_{\Pb'}[f]=0$ (since otherwise we will have $\frac{\mathbb E_{\Pb'}[f]}{\mathbb E_{\Qb'}[f^2]}=\infty$). Thus, we have
    \begin{align*}
        \mathsf{Adv}_{\leq D}\big( \tfrac{\mathrm{d}\Pb'}{\mathrm{d}\Qb'} \big) = \max_{ f \in \mathcal N_D^{\perp} } \frac{\mathbb E_{\Pb'}[f]}{\mathbb E_{\Qb'}[f^2]} \,.
    \end{align*}
    For any $f \in \mathcal N_D^{\perp}$, it can be uniquely expressed as
    \begin{align*}
        f=\sum_{ \alpha\in\Lambda } C_{\alpha} f_{\alpha} \,,
    \end{align*}
    where $C_{\alpha}$'s are real constants. Applying Cauchy-Schwartz inequality one gets
    \begin{align*}
        \frac{ \mathbb{E}_{\Pb'}[f] }{ \sqrt{\mathbb{E}_{\Qb'}[f^2]} } = \frac{ \sum_{ \alpha\in\Lambda } C_{\alpha} \mathbb{E}_{\mathbb{P}'}[f_{\alpha}(\mathsf Y)] }{ \sqrt{ \sum_{ \alpha\in\Lambda } C_{\alpha}^2} } \leq \Bigg( \sum_{\alpha\in\Lambda} \mathbb E_{\Pb'} [f_{\alpha}(\mathsf Y)]^2 \Bigg)^{1/2} \,,
    \end{align*}
    with equality holds if and only if $C_{\alpha} \propto \mathbb{E}_{\Pb'}[f_{\alpha}]$. This yields \eqref{eq-low-deg-Adv-trandsform}. Now, note that $\overline{\Qb}=(\Qb')^{\otimes M}$ is a product measure of $\Qb'$, there is a natural standard orthogonal basis under $\overline{\Qb}$, given by
    \begin{align*}
        \Bigg\{ \prod_{i=1}^{M} f_{\alpha_i}(\mathsf Y_i): \alpha_i \in \Lambda, \sum_{i=1}^{M} \operatorname{deg}(f_{\alpha_i}) \leq D \Bigg\} \,.
    \end{align*}
    Thus, similarly as in \eqref{eq-low-deg-Adv-trandsform}, we see that
    \begin{align}
        \Big( \mathsf{Adv}_{\leq D}\big( \tfrac{ \mathrm{d}\overline{\Pb} }{ \mathrm{d}\overline{\Qb} } \big) \Big)^2 = \sum_{ \substack{ (\alpha_1,\ldots,\alpha_M): \alpha_i \in \Lambda \\ \sum_{i=1}^{M} \operatorname{deg}(f_{\alpha_i}) \leq D } } \mathbb E_{ \overline{\Pb} }\Big[ \prod_{i=1}^{M} f_{\alpha_i}(\mathsf Y_i) \Big]^2 \,. \label{eq-low-deg-Adv-overline-relax-1}
    \end{align}
    In addition, using \eqref{eq-explicit-form-LR-hidden-sample}, we see from direct calculation that
    \begin{align}
        \mathbb E_{ \overline{\Pb} }\Big[ \prod_{i=1}^{M} f_{\alpha_i}(\mathsf Y_i) \Big] &= \mathbb E_{ \overline{\Qb} }\Big[ \prod_{i=1}^{M} f_{\alpha_i}(\mathsf Y_i) \cdot \frac{\mathrm{d}\overline{\Pb}}{\mathrm{d}\overline{\Qb}} (\mathsf Y_1,\ldots,\mathsf Y_M) \Big] \nonumber \\
        &\overset{\eqref{eq-explicit-form-LR-hidden-sample}}{=} \frac{1}{M} \sum_{i=1}^M \mathbb E_{ \overline{\Qb} }\Big[ \prod_{i=1}^{M} f_{\alpha_i}(\mathsf Y_i) \cdot \frac{\mathrm{d}\Pb'}{\mathrm{d}\Qb'} (\mathsf Y_i) \Big] \nonumber \\
        &= \begin{cases}
            1 \,, & (\alpha_1,\ldots,\alpha_M) = (0,\ldots,0) \,; \\
            \frac{1}{M} \mathbb E_{\Pb'}\big[ f_{\alpha_j}(\mathsf Y_j) \big] \,, & (\alpha_1,\ldots,\alpha_M) = (0,\ldots,0,\alpha_j,0,\ldots,0) \,; \\
            0 \,, & \mbox{otherwise} \,.
        \end{cases} \label{eq-cal-moment-hidden-sample}
    \end{align}
    Plugging \eqref{eq-cal-moment-hidden-sample} into \eqref{eq-low-deg-Adv-overline-relax-1}, we get that 
    \begin{align*}
        \eqref{eq-low-deg-Adv-overline-relax-1} &= 1 + \sum_{i=1}^{M} \sum_{ \alpha_i \in \Lambda\setminus\{ 0 \} } \Big( \frac{1}{M} \mathbb E_{\Pb'}\big[ f_{\alpha_i}(\mathsf Y_i) \big] \Big)^2 \\
        &\leq 1 + \frac{1}{M} \sum_{\alpha \in \Lambda}  \mathbb E_{\Pb'}\big[ f_{\alpha}(\mathsf Y_j) \big]^2 = 1 + \frac{1}{M} \cdot O(1) \overset{\eqref{eq-def-M-n}}{=} 1+o(1) \,,
    \end{align*}
    where in the second equality we use \eqref{eq-low-deg-Adv-trandsform} and the assumption that $\mathsf{Adv}_{\leq D}\big( \tfrac{ \mathrm{d}\Pb' }{ \mathrm{d}\Qb' } \big)=O(1)$. This completes our proof.
\end{proof}

We can now finish the proof of Theorem~\ref{main-thm-alg-contiguity}.
\begin{proof}[Proof of Theorem~\ref{main-thm-alg-contiguity}]
    Suppose on the contrary that there is an efficient algorithm $\mathcal A$ satisfying \eqref{eq-contradict-assumption}. Consider the hypothesis testing problem stated in Definition~\ref{def-hidden-sample}. Using \eqref{eq-behavior-Qb-hidden-sample} and \eqref{eq-behavior-Pb-hidden-sample}, we see that there is an efficient algorithm that achieves weak detection between $\overline{\Pb}$ and $\overline{\Qb}$, which contradicts with Lemma~\ref{lem-low-deg-hardness-hidden-sample} and Item~(2) in Conjecture~\ref{conj-low-deg}.
\end{proof}

\section{Partial recovery in correlated random graphs}{\label{sec:application-1}}

In this section, we will use the framework we established in Section~\ref{sec:revise-low-deg-conj} to show the hardness of partial matching in correlated random graphs, thus justifying Items~(1) and (2) in Theorem~\ref{MAIN-THM-informal}. To this end, we first state the precise meaning that an algorithm achieves partial matching.

\begin{defn}[Partial recovery algorithm in correlated \ER model]{\label{def-partial-recovery}}
    For two elements $\pi,\pi'\in\mathfrak S_n$, define
    \begin{equation}{\label{eq-def-overlap}}
        \mathsf{OV}(\pi,\pi')= \frac{1}{n} \sum_{i=1}^{n} \mathbf 1_{ \{ \pi(i)=\pi'(i) \} } \,.
    \end{equation}
    Given a sample $(A,B)$ from the law a pair of correlated random graphs in Definition~\ref{def-correlated-random-graph} (we denote this law of $\Pb_*$). We say an algorithm $\mathcal A$ achieves \emph{strong partial matching}, if it takes $(A,B)$ as input and outputs an estimator $\widehat{\pi}=\widehat{\pi}(A,B) \in \mathfrak S_n$ such that there exists a fixed constant $\iota>0$ with
    \begin{equation}{\label{eq-def-strong-partial-matching}}
        \Pb_*\big( \mathsf{OV}(\widehat\pi,\pi_*) \geq \iota \big) = 1-o(1) \,.
    \end{equation}
    We say an algorithm $\mathcal A$ achieves \emph{weak partial matching}, if it takes $(A,B)$ as input and outputs an estimator $\widehat{\pi}=\widehat{\pi}(A,B) \in \mathfrak S_n$ such that there exists a fixed constant $\iota>0$ with
    \begin{equation}{\label{eq-def-weak-partial-matching}}
        \Pb_*\big( \mathsf{OV}(\widehat\pi,\pi_*) \geq \iota \big) = \Omega(1) \,.
    \end{equation}
\end{defn}
\begin{remark}{\label{rmk-entrywise-est}}
    Given any estimator $\widehat{\pi}(A,B) \in \mathfrak S_n$, by defining $h_{i,j}=\mathbf 1_{\widehat{\pi}(i)=j}$ we get a family of estimators $\{ h_{i,j}: 1 \leq i, j \leq n\}$ such that
    \begin{enumerate}
        \item[(1)] $h_{i,j} \in \{ 0,1 \}$ for all $1 \leq i,j \leq n$ a.s. under $\Pb_*$;
        \item[(2)] $h_{i,1} + \ldots + h_{i,n}=1$ for all $1 \leq i \leq n$ a.s. under $\Pb_*$.
    \end{enumerate}
    In addition, suppose $\widehat{\pi}(A,B)$ achieves strong partial recovery, it is easy to check that
    \begin{enumerate}
        \item[(3)] $\Pb_*( h_{1,\pi_*(1)} + \ldots + h_{n,\pi_*(n)} \geq \iota n ) = 1-o(1)$.
    \end{enumerate}
    And suppose $\widehat{\pi}(A,B)$ achieves weak partial recovery, it is easy to check that
    \begin{enumerate}
        \item[(3')] $\Pb_*( h_{1,\pi_*(1)} + \ldots + h_{n,\pi_*(n)} \geq \iota n ) = \Omega(1)$.
    \end{enumerate}
    In the rest part of this section, we will use the estimator $\widehat{\pi}$ and the family of estimators $\{ h_{i,j}:1 \leq i,j \leq n \}$ interchangeably.
\end{remark}

Our result in this section can be stated as follows:
\begin{thm}{\label{main-thm-cor-ER}}
    Assuming Conjecture~\ref{conj-low-deg}, we have the following:
    \begin{enumerate}
        \item[(1)] For the correlated \ER graphs $\mathcal G(n,q,\rho)$ where $q=n^{-1+o(1)}$ and $\rho<\sqrt{\alpha}-\delta$ for a fixed constant $\delta>0$. There exists a constant $C$ such that no algorithm with running time $n^{D/(\log n)^C}$ that achieves weak partial matching, provided that 
        \begin{equation}{\label{eq-degree-assumption-cor-ER}}
            D = \exp\Big( o\big( \tfrac{\log n}{\log(nq)} \wedge \sqrt{\log n} \big) \Big) \,.
        \end{equation}
        \item[(2)] For the correlated SBMs $\mathcal S(n,\tfrac{\lambda}{n};k,\epsilon;s)$ where $\lambda=O(1)$ and $\epsilon^2 \lambda<1-\delta, s < \sqrt{\alpha}-\delta$ for a fixed constant $\delta>0$. There exists a constant $C$ such that no algorithm with running time $n^{D/(\log n)^C}$ that achieves strong partial matching, provided that 
        \begin{equation}{\label{eq-degree-assumption-cor-SBM}}
            D = n^{o(1)} \,.
        \end{equation}
    \end{enumerate}
\end{thm}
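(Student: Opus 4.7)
The plan is to argue by contradiction using the algorithmic contiguity framework of Theorem~\ref{main-thm-alg-contiguity}. Suppose for contradiction that there exists an algorithm $\mathcal A$ of runtime $n^{\widetilde O(D)}$ which, on input $(A,B)\sim\Pb_*$, outputs estimators $\{h_{i,j}\}\subset\{0,1\}$ satisfying items~(1)--(3) of Definition~\ref{def-partial-recovery} for weak (case~(1)) or strong (case~(2)) partial matching. Since the joint law $\Pb_*$ of $(A,B,\pi_*)$ is invariant under simultaneous vertex relabelings of $A$, $B$ and $\pi_*$, I would first symmetrize $\mathcal A$ by pre-composing with uniformly random vertex relabelings $\tau_A$ of $A$ and $\tau_B$ of $B$ and post-composing with the inverse relabelings on the output indices; this costs only $O(n)$ extra work and preserves the success probability, but makes the modified algorithm equivariant in both the row and column indices.

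The next step is to locate a coordinate $(1,j_0)$ at which the single estimator $h_{1,j_0}$ itself acts as a one-sided test. By row-equivariance together with the partial matching guarantee, $\mathbb E_{\Pb_*}[h_{1,\pi_*(1)}] \geq \iota\cdot\Omega(1) = \Omega(1)$; decomposing over the uniformly distributed coordinate $\pi_*(1)$ yields
\begin{equation*}
    \tfrac{1}{n} \sum_{j=1}^n \Pb_*\bigl( h_{1,j} = 1 \,\big|\, \pi_*(1)=j \bigr) = \Omega(1),
\end{equation*}
so by pigeonhole there is some $j_0\in[n]$ with $\Pb_*(h_{1,j_0}=1 \mid \pi_*(1)=j_0) = \Omega(1)$. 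Let $\Qb$ denote the law of a pair of independent graphs with the same marginals as $(A,B)$ under $\Pb_*$ (two independent \ER\ graphs $\mathcal G(n,q)$ in case~(1), two independent \ER\ graphs $\mathcal G(n,\lambda s/n)$ in case~(2)). Column-equivariance of the symmetrized $\mathcal A$ together with vertex-exchangeability of $\Qb$ force $\Qb(h_{1,j}=1)$ to be independent of $j$, and since $\sum_j \Qb(h_{1,j}=1) = \mathbb E_\Qb[\sum_j h_{1,j}] = 1$, each equals $1/n$. Hence $\Qb(h_{1,j_0}=0)=1-o(1)$, and the algorithm $(A,B)\mapsto h_{1,j_0}$ is a $\Qb$-based one-sided test, in the sense of Definition~\ref{def-alg-contiguity}, between the conditional planted measure $\widetilde\Pb:=\Pb_*(\cdot\mid\pi_*(1)=j_0)$ and $\Qb$.

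By Theorem~\ref{main-thm-alg-contiguity}, to derive a contradiction it remains to exhibit measures $\widetilde\Pb',\Qb'$ with $\operatorname{TV}(\widetilde\Pb,\widetilde\Pb')=o(1)$, $\operatorname{TV}(\Qb,\Qb')=o(1)$ and $\mathsf{Adv}_{\leq D}(\mathrm{d}\widetilde\Pb'/\mathrm{d}\Qb')=O(1)$. I would obtain this bound by adapting the graph-theoretic second-moment computation of \cite{DDL23+} for correlated \ER\ graphs and of \cite{CDGL24+} for correlated SBMs to the conditional setting. Writing $\widetilde\Pb = \sum_{\pi:\pi(1)=j_0} \tfrac{1}{(n-1)!}\Pb_\pi$ where $\Pb_\pi$ is the joint law with the permutation fixed to $\pi$, the squared $L^2(\Qb')$-norm of the low-degree projection of the likelihood ratio expands as a sum over pairs of labeled subgraphs $(H,H')$ of $\mathcal K_n$ with $|E(H)|+|E(H')|\leq D$, weighted by signed edge factors (powers of $\rho$ in case~(1); the analogous SBM signed weights involving $\epsilon$ in case~(2)) and by the fraction of pairs $\pi,\pi'\in\mathfrak S_n$ with $\pi(1)=\pi'(1)=j_0$ that align the two subgraphs along the given vertex images. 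The pinning of one coordinate changes each such combinatorial count by a factor of at most $n/(n-1)=1+o(1)$ relative to the unconditional expansion, so after restricting $\widetilde\Pb$ to a high-probability good event $\widetilde\Pb'$ that controls small-cycle counts and densely intersecting pieces through the pinned vertex, the dominant contributions are of Otter-type shape and sum to an absolute constant under the hypotheses $\rho<\sqrt\alpha-\delta$ (respectively $s<\sqrt\alpha-\delta$ combined with $\epsilon^2\lambda s<1-\delta$) and the degree bounds \eqref{eq-degree-assumption-cor-ER}, \eqref{eq-degree-assumption-cor-SBM}.

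The hard part is precisely this conditional second-moment estimate. The good event $\widetilde\Pb'$ must be engineered so that it has total-variation deficit $o(1)$ under $\widetilde\Pb$ while simultaneously suppressing the rare subgraph configurations (dense clusters or long cycles passing through the pinned vertex $1$) that would otherwise blow up $\mathsf{Adv}_{\leq D}$ in the extremely sparse regime $q=n^{-1+o(1)}$ of case~(1); balancing these two requirements is the delicate technical step and requires care not present in the unconditional analyses of \cite{DDL23+,CDGL24+}. In case~(2) the community-label variable $\sigma_*$ creates extra dependencies that must be absorbed into the signed weights, and here the Kesten--Stigum hypothesis $\epsilon^2\lambda s<1$ is exactly what keeps cycle-through-$1$ contributions bounded while the Otter hypothesis $s<\sqrt\alpha$ controls the tree contributions. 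Granting these bounds, Theorem~\ref{main-thm-alg-contiguity} delivers the required contradiction and completes the proof of both items.
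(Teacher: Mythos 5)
Your proof takes a genuinely different route from the paper's. The paper aggregates the estimators into a single statistic $g=g_1+\cdots+g_n$, argues that $g$ would have an anomalously large signal-to-noise ratio (with the roles of planted and null swapped), and then invokes the equivalence of Items~(2) and~(3) of Conjecture~\ref{conj-low-deg-revise}; see the proof of Lemma~\ref{lem-low-deg-hardness-partial-matching}. You instead symmetrize the algorithm, use pigeonhole over the column index to locate a single $j_0$ for which the scalar output $h_{1,j_0}$ is already a $\Qb$-based one-sided test between the pinned planted measure $\Pb_*(\cdot\mid\pi_*(1)=j_0)$ and $\Qb$, and then apply Theorem~\ref{main-thm-alg-contiguity} directly. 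Both routes share the same technical heart, the conditional low-degree advantage bounds of Lemma~\ref{lem-bound-low-deg-Adv-conditional} and Lemma~\ref{lem-bound-low-deg-Adv-conditional-SBM}, which you sketch but do not redo. One bonus of your route, if it goes through, is that Theorem~\ref{main-thm-alg-contiguity} only assumes Conjecture~\ref{conj-low-deg}, so you would obtain the statement without needing Item~(3) of the strengthened Conjecture~\ref{conj-low-deg-revise} at all.

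There is, however, a gap in your treatment of the SBM case (Item~(2)), and it is precisely the reason the theorem can only rule out \emph{strong} rather than \emph{weak} partial matching there. The admissibility event $\mathcal E$ from \cite{CDGL24+} satisfies only $\Pb_*(\mathcal E)=\Omega(1)$, not $1-o(1)$, so the measure that is TV-close to the nice modified model $\widetilde\Pb$ is $\Pb(\cdot\mid\mathcal E)$, not $\Pb$ itself. To invoke Theorem~\ref{main-thm-alg-contiguity} you must therefore exhibit a one-sided test between $\Pb(\cdot\mid\pi_*(1)=j_0,\mathcal E)$ and $\Qb$. But your pigeonhole step, starting from a \emph{weak} partial matching guarantee, only yields $\Pb_*(h_{1,j_0}=1\mid\pi_*(1)=j_0)=\Omega(1)$, and this does not imply $\Pb_*(h_{1,j_0}=1\mid\pi_*(1)=j_0,\mathcal E)=\Omega(1)$ when $\Pb_*(\mathcal E)$ is bounded away from $1$, since the success set could live almost entirely on $\mathcal E^c$. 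Only the stronger hypothesis $\Pb_*(\sum_i h_{i,\pi_*(i)}\geq\iota n)=1-o(1)$ survives conditioning on an $\Omega(1)$-probability event. Your proposal does not flag this distinction, so as written it would appear to refute weak partial matching in the SBM case, which the theorem does not claim.

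Two smaller points. First, the constraints $h_{i,j}\in\{0,1\}$ and $\sum_j h_{i,j}=1$ in Definition~\ref{def-partial-recovery} are only guaranteed $\Pb_*$-a.s., so before asserting $\sum_j\Qb(h_{1,j}=1)=1$ you should modify $\mathcal A$ to always output a valid $\{0,1\}$-valued, row-sum-one array; this is a harmless change under $\Pb_*$ but is needed to do the bookkeeping under $\Qb$. Second, your remark that pinning one vertex changes each combinatorial term by a factor of $n/(n-1)$ misstates the accounting in Proposition~\ref{prop-complicate-bound}: when $1\in V(S_1)\cap V(S_2)$ the per-term bound actually gains a full factor of $n$, and this is compensated only after one observes that the count of labeled pairs $(S_1,S_2)$ with $1\in V(S_1)\cap V(S_2)$ loses a factor of $n^2$. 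The overall conclusion is the same, but the balance is at the level of the final sum, not term by term.
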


The main step of proving Theorem~\ref{main-thm-cor-ER} is to show the following proposition:
\begin{proposition}{\label{main-prop-cor-graphs}}
    Assuming Conjecture~\ref{conj-low-deg}, we have the following:
    \begin{enumerate}
        \item[(1)] If $(G,A,B) \sim \mathcal G(n,q,\rho)$ and let $\Pb_*$ to be the joint law of $(\pi_*,G,A,B)$ where $\pi_*$ is the latent matching. Suppose that $q,\rho,D$ satisfy the assumptions in Item~(1) of Theorem~\ref{main-thm-cor-ER}. Then there exists a constant $C$ such that for all $\{ h_{i,j}(A,B):1 \leq i,j \leq n \}$ that satisfies Items~(1),(2) in Remark~\ref{rmk-entrywise-est} and can be computed in time $n^{D/(\log n)^C}$, we have $\mathbb E_{\Pb_*}[ h_{i,\pi_*(i)} ]=o(1)$ for all $1 \leq i \leq n$.
        \item[(1)] If $(G,A,B) \sim \mathcal S(n,\tfrac{\lambda}{n};k,\epsilon,s)$ and let $\Pb_*$ to be the joint law of $(\pi_*,G,A,B)$ where $\pi_*$ is the latent matching. Suppose that $\lambda,k,\epsilon,s$ and $D$ satisfy the assumptions in Item~(2) of Theorem~\ref{main-thm-cor-ER}. Then there exists a constant $C$ and an event $\mathcal E$ such that $\Pb_*(\mathcal E)=\Omega(1)$, and for all for all $\{ h_{i,j}(A,B):1 \leq i,j \leq n \}$ that satisfies Items~(1),(2) in Remark~\ref{rmk-entrywise-est} and can be computed in time $n^{D/(\log n)^C}$, we have $\mathbb E_{\Pb_*}[ h_{i,\pi_*(i)} \mid \mathcal E ]=o(1)$ for all $1 \leq i \leq n$.
    \end{enumerate}
\end{proposition}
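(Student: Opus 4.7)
The plan is to apply Item~(3) of Conjecture~\ref{conj-low-deg-revise} to the conditional measure obtained by pinning a single coordinate of $\pi_*$. By vertex symmetry we may fix $i=1$. Writing $\Pb_j$ for the law of $(A,B)$ under $\Pb_*(\cdot\mid \pi_*(1)=j)$ and using that $\pi_*$ is uniform,
\begin{align*}
    \mathbb{E}_{\Pb_*}[f_{1,\pi_*(1)}] \;=\; \frac{1}{n}\sum_{j=1}^n \mathbb{E}_{\Pb_j}[f_{1,j}] \,.
\end{align*}
The aim is, for each $j$, to produce an $O(1)$ bound on $\mathsf{Adv}_{\leq D}\bigl(\tfrac{\mathrm{d}\Pb'_j}{\mathrm{d}\Qb'}\bigr)$ for TV-close approximations $\Pb'_j\approx \Pb_j$ and $\Qb'\approx \Qb$, where $\Qb$ denotes the null law of a pair of independent \ER graphs for Item~(1), or a pair of independent SBMs $\mathcal{S}(n,\tfrac{\lambda s}{n};k,\epsilon)$ (or independent \ER graphs $\mathcal{G}(n,\tfrac{\lambda s}{n})$) for Item~(2). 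With such a bound in hand, Item~(3) of Conjecture~\ref{conj-low-deg-revise} applied to the efficient statistic $f_{1,j}$ yields $\mathbb{E}_{\Pb'_j}[f_{1,j}] \leq C\sqrt{\mathbb{E}_{\Qb'}[f_{1,j}^2]}$, which we will then aggregate over $j$ by Cauchy--Schwarz.

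The technical heart is the low-degree moment calculation for the pinned measure. For Item~(1), I would follow the framework of \cite{DDL23+}: expand $\tfrac{\mathrm{d}\Pb_j}{\mathrm{d}\Qb}$ in the orthonormal basis of centered edge-indicators under $\Qb$ and obtain a sum indexed by pairs of subgraphs of $\mathcal{K}_n$ glued by a partial matching. Conditioning on $\pi_*(1)=j$ merely restricts the admissible partial matchings to those sending $1\mapsto j$, which costs at most a bounded multiplicative per-term factor compared to the unpinned moment already controlled in \cite{DDL23+}; consequently the series still converges in the regime $\rho<\sqrt{\alpha}-\delta$, because Otter's constant governs the radius of convergence independently of one pinned coordinate. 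For Item~(2), I would adapt the analysis of \cite{CDGL24+} by choosing $\mathcal{E}$ to be a high-probability event that truncates the short-cycle counts and vertex-degree profile of the sparse SBM; restricting to $\mathcal{E}$ replaces $\Pb_j$ by a $\Pb'_j$ whose truncated likelihood has a bounded low-degree moment under the combined constraints $s<\sqrt{\alpha}-\delta$ (Otter) and $\epsilon^2\lambda s<1-\delta$ (Kesten--Stigum), with the pinning again contributing only a bounded factor per admissible configuration.

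Once the $O(1)$ low-degree advantage is established, the TV approximations between $\Pb_j,\Pb'_j$ and $\Qb,\Qb'$ absorb $o(1)$ corrections whenever $f_{1,j}$ is bounded (which is the case of interest, e.g.\ the indicator-valued estimators of Definition~\ref{def-partial-recovery}), and summing yields
\begin{align*}
    \mathbb{E}_{\Pb_*}[f_{1,\pi_*(1)}] \;\leq\; \frac{C}{\sqrt{n}}\,\sqrt{\sum_{j=1}^n \mathbb{E}_{\Qb'}[f_{1,j}^2]} + o(1) \;=\; o(1) \,,
\end{align*}
where the final equality uses the normalization $f_{1,j}\in\{0,1\}$ and $\sum_j f_{1,j}\leq 1$ inherited from the partial-matching constraint, giving $\sum_j f_{1,j}^2\leq 1$ pointwise. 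For Item~(2) the same chain is run conditionally on $\mathcal{E}$. The main obstacle is the pinned low-degree calculation: one must re-derive the orthogonal-basis moment expansions of \cite{DDL23+,CDGL24+} with one vertex correspondence fixed, verifying that the Otter-threshold (and Kesten--Stigum, for the SBM) convergence survives, and in the SBM case dovetailing this expansion with the truncation defining $\mathcal{E}$ so that the advantage remains $O(1)$ rather than blowing up on a rare event.
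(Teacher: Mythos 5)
Your route is viable under Conjecture~\ref{conj-low-deg-revise} and reaches the stated conclusion, but it is genuinely different from the paper's reduction. You apply the strengthened conjecture's Item~(3) directly to each pinned pair $(\Pb(\cdot\mid\pi_*(1)=j),\Qb)$ and then aggregate over $j$ by Cauchy--Schwarz using the normalization $\sum_j f_{1,j}\le 1$. The paper instead argues by contradiction through its algorithmic-contiguity theorem (Theorem~\ref{main-thm-alg-contiguity}, proved via the hidden-informative-sample construction and Lemma~\ref{lem-low-deg-hardness-hidden-sample}): from the pinned advantage bound it deduces $\Qb\lhd_{\leq D}\Pb(\cdot\mid\pi_*(1)=j)$, transfers the $\Omega(1)$-probability event $\{g_j>c/2\}$ from the pinned planted measure to every TV-perturbation of $\Qb$, and then forms the single statistic $g=\sum_j g_j$, whose reverse ratio $\mathbb{E}_{\Qb'}[g]/\sqrt{\mathbb{E}_{\Pb'}[g^2]}=\omega(1)$ contradicts the impossibility of strong detection between the unpinned $\Pb$ and $\Qb$ (Lemma~\ref{lem-low-deg-hardness-partial-matching} together with the proof of Theorem~\ref{main-thm-cor-ER}). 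Your version is shorter and quantitative (it even yields an $O(n^{-1/2})$ rate) and dispenses with Theorem~\ref{main-thm-alg-contiguity} entirely, but it invokes the quantitative Item~(1)$\Rightarrow$(3) implication on $n$ distinct pinned problems and needs the constants and TV-errors supplied by the conjecture to be uniform in $j$ (or a single common $\Qb'$); the paper's aggregate-first structure confines the quantitative use of Item~(3) to one unpinned instance, with only qualitative one-sided-test exclusions used per coordinate. Note also that the proposition as literally stated (arbitrary $f_{i,j}$) is only meaningful under the normalization of Definition~\ref{def-partial-recovery}, which both you and the paper's actual argument rely on.

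Two caveats on the parts you defer. First, the pinned advantage bound is exactly Lemma~\ref{lem-bound-low-deg-Adv-conditional} (and Lemma~\ref{lem-bound-low-deg-Adv-conditional-SBM} for the SBM), and your heuristic that conditioning on $\pi_*(1)=j$ ``costs at most a bounded multiplicative per-term factor'' is not how the computation goes: by Proposition~\ref{prop-complicate-bound}, the terms with the pinned vertex lying in $V(S_1)\cap V(S_2)$ are inflated by a factor of $n$, and this is recovered only because the enumeration of labeled pairs containing that vertex loses a factor $n^{-2}$; so the re-derivation you flag as the main obstacle is indeed where the work lies, and it is carried out in the paper. Second, in the SBM case $\mathcal E$ is not a high-probability event: it excludes all cycles of length at most a constant $N$ and has probability only $\Omega(1)$, and the planted law is replaced by the edge-removed measure $\widetilde{\Pb}$ at $o(1)$ TV distance from $\Pb(\cdot\mid\mathcal E)$; this is precisely why Item~(2) of Theorem~\ref{main-thm-cor-ER} rules out only strong (not weak) partial matching, a distinction your outline glosses over when running ``the same chain conditionally on $\mathcal E$.''
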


Clearly, based on Proposition~\ref{main-prop-cor-graphs}, we can deduce Theorem~\ref{main-thm-cor-ER} via a simple Markov inequality. The rest of this section is devoted to the proof of Proposition~\ref{main-prop-cor-graphs}. In the following subsections, our main focus is on proving Item~(1) of Proposition~\ref{main-prop-cor-graphs}. Given the similarity between the proofs of Item~(1) and Item~(2), for Item~(2) we will provide an outline with the main differences while adapting arguments from proving Item~(1) without presenting full details.

\subsection{Proof of Item~(1) in Proposition~\ref{main-prop-cor-graphs}}{\label{subsec:proof-item-1}}

This subsection is devoted to the proof of Item~(1) in Proposition~\ref{main-prop-cor-graphs}. Throughout this subsection, we will denote $\Pb_*$ to be the law of $(\pi_*,G,A,B)$ where $(G,A,B) \sim \mathcal G(n,q;\rho)$. We will also denote $\Pb$ to be the marginal law of $(A,B)$. In addition, we assume throughout this subsection that there exists a small constant $0<\delta<0.01$ such that
\begin{equation}{\label{eq-assumption-parameters-sec-3.1}}
    \rho^2<\alpha-\delta \,, \quad q=n^{-1+o(1)} \,, \quad \log D = o\Big( \tfrac{\log n}{\log(nq)} \wedge \sqrt{\log n} \Big) \,. 
\end{equation}
We first introduce some notations used in \cite{DDL23+}.

\begin{defn}\label{def-addmisible}
    Given a graph $H=H(V,E)$, define 
    \begin{equation}\label{eq-def-Phi}
        \Phi(H)={\big(n^{1+4/D} {D}^{20}\big)^{|V(H)|} \big(q D^6\big)^{|E(H)|}\,,}
    \end{equation}
    and the graph $H$ is said to be \emph{bad} if ${\Phi(H)<(\log n)^{-1}}$. Furthermore, we say a graph is \emph{admissible} if it contains no bad subgraph, and we say it is \emph{inadmissible} otherwise. 
    
    Denote $\mathcal E$ for the event that $G$ does not contain any bad subgraph with no more than $d^2$ vertices. In addition, let $\overline{\Pb}_*$ be the conditional version of $\Pb_*$ given $\mathcal E$, and let $\overline{\Pb}$ be the corresponding marginal distribution of $\overline{\Pb}_*$ on $(A,B)$.  
\end{defn}

We remark here that our definition of ``bad'' amounts to an atypically large edge density, with a carefully chosen quantitative threshold on ``large''. Roughly speaking, we expect that any subgraph with size no more than $D^2=n^{o(1)}$ of a sparse \ER graph has edge-to-vertex ratio $1+o(1)$. In the definition of $\Phi$, the term $n^{1+4/D} D^{20}$ should be thought as $n^{1+o(1)}$, and $qD^6$ as $n^{-1+o(1)}$. The $o(1)$ terms are tuned carefully so that for a typical subgraph $H$ of a sparse \ER graph, $\Phi(H)$ is much bigger than $1$. The choice of $(\log n)^{-1}$ as the $\Phi$-threshold for bad graph is somewhat arbitrary, which we will only need to be vanishing as $n\to\infty$. In \cite{DDL23+}, the authors showed that one the one hand, we have $\Pb_*(\mathcal E)=1-o(1)$ and thus $\operatorname{TV}(\Pb_*,\overline{\Pb}_*), \operatorname{TV}(\Pb,\overline{\Pb})=o(1)$; on the other hand, we have $\mathsf{Adv}_{\leq D}\big( \tfrac{\mathrm{d}\overline{\Pb}}{\mathrm{d}\Qb} \big)=O_{\delta}(1)$, thus verifying the low-degree hardness for the detection problem. The first step of our proof is to generalize the result in \cite{DDL23+}, as incorporated in the following lemma.

\begin{lemma}{\label{lem-bound-low-deg-Adv-conditional}}
    For all $1\leq i,j \leq n$, we have $\mathsf{Adv}_{\leq D}\big( \tfrac{ \mathrm{d}\overline{\Pb}(\cdot \mid \pi_*(i)=j) }{ \mathrm{d}\Qb } \big)=O_{\delta}(1)$.
\end{lemma}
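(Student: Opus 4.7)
The plan is to follow the graph-theoretic bound on $\mathsf{Adv}_{\leq D}\big(\tfrac{\mathrm{d}\overline{\Pb}}{\mathrm{d}\Qb}\big)=O(1)$ of \cite{DDL23+} with the only change being that the uniform permutation $\pi\sim\mathfrak S_n$ is replaced by a uniform permutation on the stabilizer $\operatorname{Stab}(1):=\{\sigma\in\mathfrak S_n:\sigma(1)=1\}$. By the permutation symmetry of the correlated \ER model (invariance under simultaneously relabeling the vertex sets of $G$ and $B$), the quantity $\mathsf{Adv}_{\leq D}\big(\tfrac{\mathrm{d}\overline{\Pb}(\cdot\mid\pi_*(i)=j)}{\mathrm{d}\Qb}\big)$ does not depend on the pair $(i,j)$, so WLOG I may take $i=j=1$ and write $\widetilde{\Pb}:=\overline{\Pb}(\cdot\mid\pi_*(1)=1)$. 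Since $\mathcal E$ depends only on the unlabeled structure of $G$, the events $\mathcal E$ and $\{\pi_*(1)=1\}$ are independent under $\Pb_*$, so the admissibility-truncation machinery of \cite{DDL23+} transfers unchanged to this conditional setting.

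Using the orthonormal Fourier basis of $L^2(\Qb)$ indexed by pairs of labeled subgraphs $(H_A,H_B)\subseteq\mathcal K_n$ with normalized edge coordinates, the Fourier coefficient of $\mathrm{d}\widetilde{\Pb}/\mathrm{d}\Qb$ at $(H_A,H_B)$ (after admissibility truncation) equals $\rho^{|E(H_A)|}\mathbf 1\{|E(H_A)|=|E(H_B)|\}\cdot \Pr_\pi[\pi(H_A)=H_B\mid\pi(1)=1]$. Squaring, summing over $(H_A,H_B)$, and introducing an independent copy $\pi'$ to linearize the squared probability gives
\begin{equation*}
\mathsf{Adv}_{\leq D}^2\Big(\tfrac{\mathrm{d}\widetilde{\Pb}}{\mathrm{d}\Qb}\Big)\;=\;\mathbb E_{\sigma\in\operatorname{Stab}(1)}\sum_{\substack{H\subseteq\mathcal K_n\text{ admissible}\\|E(H)|\leq D/2}}\rho^{2|E(H)|}\,\mathbf 1\{\sigma(H)=H\},
\end{equation*}
where $\sigma:=\pi^{-1}\pi'$ is uniform on $\operatorname{Stab}(1)$.

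For each isomorphism class $\mathcal H$ with $v$ vertices and $e$ edges, partition the labelings $H\cong\mathcal H$ inside $\mathcal K_n$ by whether $1\in V(H)$. The $1\notin V(H)$ portion collapses exactly to the unconditional contribution $\rho^{2e}$, since the count of such labelings and $\Pr_\sigma[\sigma(H)=H]$ are inverse to one another up to $|\operatorname{Aut}(\mathcal H)|$. For the $1\in V(H)$ portion, the Burnside-type identity $\sum_{u\in V(\mathcal H)}|\operatorname{Aut}_u(\mathcal H)|/|\operatorname{Aut}(\mathcal H)|=k(\mathcal H)$, with $k(\mathcal H)$ the number of $\operatorname{Aut}(\mathcal H)$-orbits on $V(\mathcal H)$, yields an extra contribution $\rho^{2e}\,k(\mathcal H)$. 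Thus each iso class contributes $\rho^{2e}(1+k(\mathcal H))$, and since $k(\mathcal H)\leq v\leq 2e$ we arrive at $\mathsf{Adv}_{\leq D}^2\leq\sum_{\text{admissible }\mathcal H}(1+2e(\mathcal H))\,\rho^{2e(\mathcal H)}$.

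The Otter-type admissible-graph counting of \cite{DDL23+} together with $\rho^2<\alpha-\delta$ provides a geometric series with ratio bounded away from $1$ by $\Omega(\delta)$, and the linear factor $1+2e$ inflates it only by $O_\delta(1)$ (a factor of order $(1-\rho^2/\alpha)^{-2}$). I expect the main technical obstacle will be carefully verifying that the admissibility truncation of \cite{DDL23+} remains intact under the $\{\pi_*(1)=1\}$ conditioning; however, since admissibility is a property of $G$ alone---independent of $\pi_*$---this transfer should require only cosmetic adjustments, and the final bound $\mathsf{Adv}_{\leq D}\big(\tfrac{\mathrm{d}\overline{\Pb}(\cdot\mid\pi_*(i)=j)}{\mathrm{d}\Qb}\big)=O_\delta(1)$ follows.
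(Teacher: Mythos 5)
There is a genuine gap, and it sits exactly where you wave it away as ``cosmetic'': the interaction between the admissibility truncation and your exact Fourier-coefficient formula. The identity you use --- that the coefficient of $\mathrm{d}\widetilde{\Pb}/\mathrm{d}\Qb$ at $(H_A,H_B)$ equals $\rho^{|E(H_A)|}\Pr_\pi[\pi(H_A)=H_B\mid\pi(1)=1]$ --- is only true for the \emph{un-truncated} law $\Pb(\cdot\mid\pi_*(1)=1)$. But in the sparse regime $q=n^{-1+o(1)}$ you cannot work with the un-truncated law: the sum $\sum_{H}\rho^{2|E(H)|}\Pr_\sigma[\sigma(H)=H]$ over \emph{all} labeled $H$ with at most $D/2$ edges diverges (each isomorphism class contributes $\asymp\rho^{2e}$ regardless of its excess, and the number of unlabeled graphs with $e$ edges is super-exponential in $e$), which is precisely why \cite{DDL23+} conditions on $\mathcal E$ in the first place. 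Your restriction of the sum to admissible $H$ is only legitimate via the projection $f\mapsto f'$ onto the admissible subspace, and that projection satisfies $f'=f$ almost surely under $\overline{\Pb}$, not under $\Pb$; so the coefficients you must control are $\mathbb E_{\overline{\Pb}}[\phi_{S_1,S_2}\mid\pi_*(1)=1]$. Conditioning on $\mathcal E$ destroys the edge-wise product structure: for a fixed $\pi$ one no longer has $\mathbb E_{\overline{\Pb}_\pi}[\phi_{S_1,S_2}]=\rho^{|E(S_1)|}\mathbf 1\{\pi(S_1)=S_2\}$, only the bound of \cite[Eq.~(3.29)]{DDL23+}, which carries contributions from every partial overlap $S_0=S_1\cap\pi^{-1}(S_2)$, including pairs $S_1\not\cong S_2$. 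Consequently your linearization of $\mathsf{Adv}^2$ via an independent copy $\pi'$ (which forces $H_A\cong H_B$ and exact alignment) collapses, and the off-diagonal/partial-overlap mass --- the bulk of the work both in \cite{DDL23+} and in the paper's proof --- is never accounted for. (The independence of $\mathcal E$ and $\{\pi_*(1)=1\}$ that you invoke is true but irrelevant to this point.)

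The paper's actual proof handles exactly this: it bounds $|\mathbb E_{\overline{\Pb}}[\phi_{S_1,S_2}\mid\pi_*(1)=1]|$ by a sum over all common embedded subgraphs $\mathbf H_0\hookrightarrow S_1,S_2$ weighted by $\rho^{|E(\mathbf H_0)|}D^{-6(|E(S_1)|+|E(S_2)|-2|E(\mathbf H_0)|)}$ (Proposition~3.5), where the only new ingredient relative to \cite{DDL23+} is the conditional permutation count: when $1\notin V(S_1)\cap V(S_2)$ the enumeration of $\pi$ with $\pi(1)=1$, $\pi(S_0)=S_0'$ gives the same $n^{-|V(S_0)|}$ factor as before, while when $1\in V(S_1)\cap V(S_2)$ one picks up an extra factor $n$, which is later cancelled because labeled pairs $(S_1,S_2)$ both containing vertex $1$ are rarer by a factor $n^{-2}$. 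Your Burnside-orbit computation for the $1\in V(H)$ part is a correct heuristic for why the conditioning costs only an $O(1)$ (indeed $O(e)$ per class) factor --- it mirrors that $n^{2}$ versus $n^{-2}$ cancellation --- but it operates only on the diagonal terms of an expansion whose off-diagonal terms you have not bounded, so as written the proposal does not prove the lemma.
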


The proof of Lemma~\ref{lem-bound-low-deg-Adv-conditional} is quite technical and thus we postpone it to Section~\ref{subsec:proof-lem-3.5} of the appendix. Now, based on Lemma~\ref{lem-bound-low-deg-Adv-conditional}, we establish the following result, which basically suggests that it is impossible to obtain a ``good approximation'' of $\mathbf 1_{\{\pi_*(i)=j\}}$ in some sense.

\begin{lemma}{\label{lem-low-deg-hardness-partial-matching-new}}
    Assuming Conjecture~\ref{conj-low-deg}, there exists a constant $C$ such that for all $1 \leq i \leq n$ and all statistics $\{ g_{i,j}=g_{i,j}(A,B): 1\leq j \leq n \}$ such that each $g_{i,j}(A,B)$ can be computed in time $n^{D/(\log n)^C}$ and
    \begin{align}
        \sum_{ 1 \leq j \leq n } g_{i,j}(A,B) = o(n) \mbox{ for all } 1 \leq i \leq n \mbox{ and for all graphs } A,B \mbox{ on } [n] \,, \label{eq-regularize-condition}
    \end{align}
    it holds that
    \begin{equation}{\label{eq-low-deg-hardness-partial-matching-new}}
        \sum_{j=1}^{n} \mathbb E_{\Pb_*}\Big[ \big( \mathbf 1_{\{\pi_*(i)=j\}}-g_{i,j} \big)^2 \Big] \geq 1-o(1) \,.
    \end{equation}
\end{lemma}
\begin{proof}
    Without loss of generality, we may assume that $i=1$ in the following proof. Suppose on the contrary that there are statistics $\{ g_j=g_j(A,B): 1\leq j \leq n \}$ that can be computed in time $n^{D/(\log n)^C}$ such that $\sum_{ 1 \leq j \leq n } g_{j}(A,B) = o(n)$ for all graphs $A,B$ and 
    \begin{equation}{\label{eq-low-deg-hardness-partial-matching-converse}}
        \sum_{j=1}^{n} \mathbb E_{\Pb_*}\Big[ \big( \mathbf 1_{\{\pi_*(1)=j\}}-g_j \big)^2 \Big] \leq 1-c \mbox{ for some constant } c>0 \,.
    \end{equation}
    Without loss of generality, we may assume that $0 \leq g_i \leq 1$, since otherwise we may replace $g_i$ with $\min\{ \max\{ g_i,0\}, 1\}$, which will only make the left hand side of \eqref{eq-low-deg-hardness-partial-matching-converse} smaller. Denote 
    \begin{equation}{\label{eq-def-Lambda}}
        \Lambda := \Big\{ 1 \leq j \leq n: \mathbb E_{\Pb_*}\big[ (\mathbf 1_{\{\pi_*(1)=j\}}-g_j)^2 \big] \leq \frac{1-\frac{c}{2}}{n} \Big\} \,.
    \end{equation}
    Using Markov inequality, we see that
    \begin{equation}{\label{eq-bound-card-Lambda}}
        |\Lambda| \geq n - \frac{ 1-c }{ 1-\frac{c}{2} } n \geq \frac{cn}{2} \,.
    \end{equation}
    Recall that $\overline{\Pb}=\Pb(\cdot \mid \mathcal E)$ where $\mathcal E$ is independent to the latent matching $\pi_*$ and $\Pb_*(\mathcal E)=1-o(1)$. Thus, it is straightforward to check that for all $i \in \Lambda$
    \begin{align*}
        \mathbb E_{\overline{\Pb}}\big[ (1-g_i)^2 \mid \pi_*(1)=i \big] \leq [1+o(1)] n \cdot \mathbb E_{\Pb_*}\big[ (\mathbf 1_{\{\pi_*(1)=i\}}-g_i)^2 \big] \leq 1-\tfrac{c}{2} \,.
    \end{align*}
    Thus, using Markov inequality we see that (note that $0 \leq g_i \leq 1$) 
    \begin{align}\label{eq-behavior-Pb'}
        \overline{\Pb}( g_i > \tfrac{c}{2} \mid \pi_*(1)=i ) \geq \Omega(1) \,.
    \end{align}
    However, using Lemma~\ref{lem-bound-low-deg-Adv-conditional} and Theorem~\ref{main-thm-alg-contiguity}, we see that (assuming Conjecture~\ref{conj-low-deg}) we have 
    \begin{align*}
        \overline{\Pb}(\cdot \mid \pi_*(1)=i) \lhd_{\leq D} \Qb \,.
    \end{align*}
    Thus we must have 
    \begin{align}
        \Qb( g_i > \tfrac{c}{2} ) \geq \Omega(1) \,. \label{eq-behavior-Qb'}
    \end{align}
    To this end, define $g=g_1+\ldots+g_n$. Using $0 \leq g \leq n$, we see that \eqref{eq-behavior-Qb'} yields that $\mathbb E_\Qb[g]=\Omega(n)$, which violates our assumption \eqref{eq-regularize-condition}. This implies that \eqref{eq-low-deg-hardness-partial-matching-converse} is impossible and thus completes our proof.
\end{proof}

Now we can finish the proof of Item~(1) in Proposition~\ref{main-prop-cor-graphs} assuming Conjecture~\ref{conj-low-deg}.

\begin{proof}[Proof of Item~(1) in Proposition~\ref{main-prop-cor-graphs} assuming Conjecture~\ref{conj-low-deg}]
    Suppose on the contrary there are statistics $\{ f_j: 1\leq j \leq n \}$ that can be computed in time $n^{D/(\log n)^C}$ satisfying Items~(1) and (2) in Definition~\ref{def-partial-recovery} with $\mathbb E_{\Pb_*}[f_{\pi_*(i)}] \geq 1-c$ for some fixed constant $0<c<0.01$. We first claim that we may assume that
    \begin{align}
        f_{i,j}(A,B) \in \{ 0,1 \} \mbox{ and } \sum_{1 \leq j \leq n} f_{i,j}(A,B) \in \{ 0,1 \} \mbox{ for all graphs } A,B \mbox{ on } [n] \,.  \label{eq-stronger-regularize-condition}
    \end{align}
    In fact, denote 
    \begin{align*}
        \mathcal A = \Big( \cap_{1 \leq i,j \leq n} \Big\{ f_{i,j}(A,B) \in \{ 0,1 \} \Big\} \Big) \bigcap \Big( \cap_{1 \leq i \leq n} \Big\{ \sum_{1 \leq j \leq n} f_{i,j}(A,B) = 1 \Big\} \Big) \,,
    \end{align*}
    we may consider the truncated statistics
    \begin{align*}
        f_{i,j}'(A,B) = f_{i,j}(A,B) \cdot \mathbf 1_{\mathcal A} \,.
    \end{align*}
    It is straightforward to check that $\{ f_{i,j}'(A,B): 1 \leq i,j \leq n \}$ satisfies \eqref{eq-stronger-regularize-condition} and we still have
    \begin{align*}
        \mathbb E_{\Pb_*}\big[ f_{i,j}'(A,B) \big] \geq 1-c \,.
    \end{align*}
    Thus, we may assume that \eqref{eq-stronger-regularize-condition} holds without loss of generality. To this end, using \eqref{eq-stronger-regularize-condition}, we see that $f_{1,j} f_{1,k}=0$ for all $k \neq j$, and thus
    \begin{align}
        1 = \mathbb E_{\Pb}\big[ (f_{1,1}+\ldots+f_{1,n})^2 \big] = \sum_{j=1}^n \mathbb E_{\Pb}\big[ f_{1,j}^2 \big] \,. \label{eq-3.3-new}
    \end{align}
    In addition, we have
    \begin{align}
        \mathbb E_{\Pb_*}\big[ f_{1,\pi_*(i)} \big] = \frac{1}{n} \sum_{j=1}^{n} \mathbb E_{\Pb_*}\big[ f_{1,j} \mid \pi_*(i)=j \big] \geq c \,. \label{eq-3.4-new}
    \end{align}
    Thus, for all $\lambda\in [0,1]$ we have
    \begin{align*}
        \sum_{j=1}^{n} \mathbb E_{\Pb_*}\Big[ \big( \mathbf 1_{\{\pi_*(i)=j\}} - \tfrac{1-\lambda}{n} - \lambda f_{1,j} \big)^2 \Big] \overset{\eqref{eq-3.3-new},\eqref{eq-3.4-new}}{\leq}\ 1+ \lambda^2-2c\lambda + O\big( \tfrac{1}{n} \big) \,.
    \end{align*}
    Thus, by choosing $\lambda=\lambda(c)$ to be a sufficiently small positive constant we get that 
    \begin{align*}
        \sum_{j=1}^{n} \mathbb E_{\Pb_*}\Big[ \big( \mathbf 1_{\{\pi_*(i)=j\}}-g_j \big)^2 \Big] = 1-\Omega(1) \mbox{ where } g_j=\tfrac{1-\lambda}{n}+\lambda f_{1,j} \,,
    \end{align*}
    contradicting to Lemma~\ref{lem-low-deg-hardness-partial-matching-new}. This leads to the desired result.
\end{proof}

\subsection{Proof of Item~(2) in Proposition~\ref{main-prop-cor-graphs}}{\label{subsec:proof-item-2}}

This subsection is devoted to the proof of Item~(2) in Proposition~\ref{main-prop-cor-graphs}. Recall Definitions~\ref{def-SBM} and \ref{def-correlated-SBM}. Throughout this subsection, we will denote $\Pb_*$ to be the joint law of $(\pi_*,\sigma_*,G,A,B)$ where $(G,A,B) \sim \mathcal S(n,\tfrac{\lambda}{n};k,\epsilon;s)$ and $\Pb$ the marginal law of $(A,B)$. In addition, we denote $\Qb$ to be the law of a pair of independent \ER models $\mathcal G(n,\tfrac{\lambda s}{n})$. In addition, we assume throughout this subsection that there exists a small constant $0<\delta<0.01$ such that
\begin{equation}{\label{eq-assumption-parameter-SBM}}
    s<\sqrt{\alpha}-\delta \,, \quad \epsilon^2 \lambda s < 1-\delta \,.
\end{equation}
We also choose a sufficiently large constant $N=N(k,\lambda,\delta,\epsilon,s) \geq 2 / \delta$ such that 
\begin{equation}{\label{eq-def-N}}
    \begin{aligned}
        & (\sqrt{\alpha}-\delta) (1+\epsilon^{N}k) \leq \sqrt{\alpha} - \delta/2 \,; \quad 10k(1-\delta)^N \leq (1-\delta/2)^{N} \,; \\
        & (\sqrt{\alpha}-\delta/2)(1+(1-\delta/2)^N)^2 \leq \sqrt{\alpha}-\delta/4 \,; \quad (1-\delta / 2)^N (N+1) \leq 1\,. 
    \end{aligned}
\end{equation}

We first show how to construct the event $\mathcal E$ in Item~(2) in Proposition~\ref{main-prop-cor-graphs}.

\begin{defn}\label{def-addmisible-SBM}
    Denote $\Tilde{\lambda}=\lambda\vee 1$. Given a graph $H=H(V,E)$, define 
    \begin{equation}\label{eq-def-Phi-SBM}
        \Upsilon(H) = \Big( \frac{2 \Tilde{\lambda}^2 k^2 n}{D^{50}} \Big)^{|V(H)|} \Big( \frac{ 1000 \Tilde{\lambda}^{20} k^{20} D^{50} }{ n } \Big)^{|E(H)|}  \,.
    \end{equation}
    Then we say the graph $H$ is \emph{bad} if $\Upsilon(H) < (\log n)^{-1}$, and we say a graph $H$ is \emph{self-bad} if $H$ is bad and $\Upsilon(H)<\Upsilon(K)$ for all $K \subset H$. Furthermore, we say that a graph $H$ is \emph{admissible} if it contains no bad subgraph and $\mathtt C_j(H) =\emptyset$ for $j \leq N$; we say $H$ is \emph{inadmissible} otherwise. Denote $\mathcal E = \mathcal E^{(1)} \cap \mathcal E^{(2)}$, where 
    $\mathcal E^{(1)}$ is the event that $G$ does not contain any bad subgraph with no more than $D^3$ vertices, and $\mathcal E^{(2)}$ is the event that $G$ does not contain any cycles with length at most $N$. 
\end{defn}

\begin{defn}\label{def-G'-P'}
    List all self-bad subgraphs of $\mathcal K_n$ with at most $D^3$ vertices and all cycles of $\mathcal K_n$ with lengths at most $N$ in an arbitrary but prefixed order $(B_1,\ldots,B_{\mathtt M})$. Define a stochastic block model with ``bad graphs" removed as follows: (1) sample $G \sim \mathcal S(n,\tfrac{\lambda}{n};k,\epsilon)$; (2) for each $\mathtt 1 \leq \mathtt i \leq \mathtt M$ such that $B_{\mathtt i} \subset G$, we independently uniformly remove one edge in $B_{\mathtt i}$. The unremoved edges in $G$ constitute a graph $G'$, which is the output of our modified stochastic block model. Clearly, from this definition $G'$ does not contain any cycle of length at most $N$ nor any bad subgraph with at most $D^3$ vertices. Conditioned on $G'$ and $\pi_*$, we define 
    \[
    A'_{i,j} = G'_{i,j}J'_{i,j}, B'_{i,j} = G'_{\pi_*^{-1}(i),\pi_*^{-1}(j)} K'_{i,j} \,,
    \]
    where $J'$ and $K'$ are independent Bernoulli variables with parameter $s$. Let $\widetilde{\mathbb P}_* = \widetilde{\mathbb P}_{*,n}$ be the law of $(\sigma_*,\pi_*,G,G',A',B')$ and denote $\widetilde{\Pb}=\widetilde{\Pb}_n$ the marginal law of $(A',B')$.
\end{defn}

It was shown in \cite[Lemmas~4.2 and 4.4]{CDGL24+} that 
\begin{align*}
    \Pb_*(\mathcal E) = \Omega(1) \mbox{ and } \operatorname{TV}(\widetilde{\Pb},\Pb(\cdot \mid \mathcal E))=o(1) \,.
\end{align*}
Similarly as in Section~\ref{subsec:proof-item-1}, our first step is to show the following lemma.

\begin{lemma}{\label{lem-bound-low-deg-Adv-conditional-SBM}}
    We have $\mathsf{Adv}_{\leq D}\big( \tfrac{ \mathrm{d}\widetilde{\Pb}(\cdot \mid \pi_*(i)=j) }{ \mathrm{d}\Qb } \big)=O_{\delta,k}(1)$.
\end{lemma}

The proof of Lemma~\ref{lem-bound-low-deg-Adv-conditional-SBM} is incorporated in Section~\ref{subsec:proof-lem-3.9} of the appendix.
Based on Lemma~\ref{lem-bound-low-deg-Adv-conditional-SBM}, we can deduce our main result just as how we deduce Theorem~\ref{main-thm-cor-ER} from Lemma~\ref{lem-bound-low-deg-Adv-conditional}. The only difference is that we will replace all $\overline{\Pb}$ with $\widetilde{\Pb}$ and replace all $\Pb$ with $\Pb(\cdot \mid \mathcal E)$ so we omit further details here.

\section{Detection in correlated SBMs}{\label{sec:application-2}}

In this section, we will use the framework we established in Section~\ref{sec:revise-low-deg-conj} in stochastic block models below KS-threshold. The main results of this section is incorporated as follows.

\begin{thm}{\label{main-thm-SBM}}
    For any constant $K \in \mathbb N$, denote $\Pb$ to be the law of $K$ independent stochastic block models $\mathcal S(n,\tfrac{\lambda}{n};k,\epsilon)$ and denote $\Qb$ to be the law of $K$ independent \ER graphs $\mathcal G(n,\tfrac{\lambda}{n})$. Then, assuming Conjecture~\ref{conj-low-deg}, for any $\delta>0$ there exists $\lambda_0=\lambda_0(\delta,k)$ to be a sufficiently large constant such that when $\epsilon^2 \lambda < 1-\delta$ and $\lambda>\lambda_0$, we have we have $\Pb \bowtie_{\leq D} \Qb$ for any $D=n^{o(1)}$.
\end{thm}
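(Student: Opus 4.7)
The plan is to invoke Theorem~\ref{main-thm-alg-contiguity} twice, once with $(\Pb,\Qb)$ in their own roles to obtain $\Qb \lhd_{\leq D} \Pb$, and once with $(\Pb,\Qb)$ swapped to obtain $\Pb \lhd_{\leq D} \Qb$; together these yield $\Pb \bowtie_{\leq D} \Qb$. Each application demands a bound of the form $\mathsf{Adv}_{\leq D}(\cdot)=O(1)$ in the appropriate direction, for a pair of measures $\Pb',\Qb'$ that are $o(1)$-close in total variation to $\Pb,\Qb$. A first simplification is tensorization: since $\Pb=\Pb_1^{\otimes K}$ and $\Qb=\Qb_1^{\otimes K}$ with $K$ a fixed constant, the identity $\chi^2(\Pb_1^{\otimes K}\|\Qb_1^{\otimes K})+1=(\chi^2(\Pb_1\|\Qb_1)+1)^K$ (and its low-degree analogue, proved exactly as in Lemma~\ref{lem-low-deg-hardness-hidden-sample}) reduces both advantage bounds to the single-copy case of one SBM $\Pb_1=\mathcal{S}(n,\tfrac{\lambda}{n};k,\epsilon)$ versus one ER graph $\Qb_1=\mathcal G(n,\tfrac{\lambda}{n})$.

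For the forward advantage $\mathsf{Adv}_{\leq D}(\tfrac{\mathrm d \Pb_1'}{\mathrm d\Qb_1})=O_\delta(1)$, I would define $\Pb_1'$ exactly as in Definition~\ref{def-G'-P'} specialized to $s=1$, so that $\Pb_1'$ is the law of an SBM with all short cycles of length $\leq N$ and all bad subgraphs of size $\leq D^3$ surgically removed; standard first-moment estimates (exactly those used in \cite{CDGL24+}) give $\operatorname{TV}(\Pb_1,\Pb_1')=o(1)$. I would then expand against the orthonormal basis $\{\phi_S\}$ of admissible monomial polynomials under $\Qb_1$ and adapt Proposition~\ref{prop-untruncate-expectation-Pb} to the $s=1$, single-copy setting: the bound on each $\mathbb E_{\Pb_1'}[\phi_S]^2$ then reduces, after summing over $S$, to a sum dominated by cycle-type isomorphism classes, whose total is controlled by the convergent series
\begin{equation*}
\sum_{m\ge N}\frac{(\epsilon^2\lambda)^m}{2m} \,,
\end{equation*}
finite because $\epsilon^2\lambda<1-\delta$. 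The admissibility truncation suppresses every other contribution, and the hypothesis $\lambda\ge \lambda_0(\delta,k)$ is used to ensure the combinatorial constants depending on $\tilde\lambda=\lambda\vee 1$ (and the choice of $N$ governed by \eqref{eq-def-N}) fall in the range where those estimates of \cite{CDGL24+} apply.

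The hard part is the reverse direction $\mathsf{Adv}_{\leq D}(\tfrac{\mathrm d\Qb_1}{\mathrm d\Pb_1'})=O(1)$: under $\Pb_1'$ there is no natural product-measure orthogonal basis, so the expansion technique of Section~\ref{subsec:proof-item-2} cannot be applied directly. My approach is to reduce it to a one-sided $\chi^2$ bound by Cauchy--Schwarz,
\begin{equation*}
\mathsf{Adv}_{\leq D}\Big(\tfrac{\mathrm d \Qb_1}{\mathrm d\Pb_1'}\Big) \le \sqrt{\chi^2(\Qb_1\|\Pb_1')+1} \,,
\end{equation*}
and then compute $\chi^2(\Qb_1\|\Pb_1')=\mathbb E_{\Qb_1}[1/L_n]-1$, where $L_n=\tfrac{\mathrm d\Pb_1'}{\mathrm d\Qb_1}$, by writing $\Pb_1'$ explicitly as the admissibly-truncated mixture $\int \Pb_\sigma\,\mathrm d\nu(\sigma)$ and expanding the two-copy second moment over pairs of labelings $(\sigma,\sigma')$. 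The dominant term is once again a cycle sum of the form $\sum_m(\epsilon^2\lambda)^m/(2m)$, finite below KS. The main obstacle, and the step where the ``$\lambda$ sufficiently large'' hypothesis is consumed most forcefully, is verifying that $1/L_n$ does not blow up on atypical configurations under $\Qb_1$: to handle this I expect to need a further truncation of $\Pb_1'$ onto an event where $\log L_n$ concentrates (large $\lambda$ gives a CLT for signed cycle counts with bounded variance, hence exponential tails), so that the reciprocal moment is controlled uniformly in $n$.
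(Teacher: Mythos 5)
Your tensorization reduction to $K=1$ is fine in spirit, and the forward advantage bound $\mathsf{Adv}_{\leq D}(\tfrac{\mathrm d\Pb_1}{\mathrm d\Qb_1})=O(1)$ is correct (though the paper simply cites \cite{HS17} for this direction, without any admissibility truncation). The genuine gap is in the reverse direction $\mathsf{Adv}_{\leq D}(\tfrac{\mathrm d\Qb_1}{\mathrm d\Pb_1})=O(1)$, which is the real content of the theorem.

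The paper does \emph{not} bound $\mathsf{Adv}_{\leq D}(\tfrac{\mathrm d\Qb_1}{\mathrm d\Pb_1})$ by the full $\chi^2$. That Cauchy--Schwarz relaxation throws away exactly the gain of restricting to degree $D$: the low-degree advantage can be $O(1)$ while $\chi^2(\Qb_1\|\Pb_1)=\mathbb E_{\Qb_1}[1/L_n]-1$ diverges, and this is precisely the kind of situation the low-degree framework is built to handle. Your proposed rescue---further truncation of $\Pb_1$ to an event where $\log L_n$ concentrates, justified by an appeal to a ``CLT for signed cycle counts with exponential tails at large $\lambda$''---is, as you acknowledge, the hard step, and nothing in the proposal resolves it. Proving concentration of $\log L_n$ with enough control to bound the reciprocal moment $\mathbb E_{\Qb_1}[1/L_n']$ is essentially the content of the classical (unconditional, non-algorithmic) SBM-versus-ER contiguity theorems via small-subgraph conditioning, and is not delivered by a bare CLT. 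Moreover, truncating $\Pb_1$ does not obviously lower-bound $L_n'$ on $\Qb_1$-typical graphs; one would have to surgically modify $\Pb_1$ on the \emph{target} side of the density, not merely remove bad subgraphs.

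The paper instead follows \cite{SW25} and bypasses the reciprocal moment entirely. It expands $f$ in the $\Qb_1$-orthonormal basis $\{\phi_S\}$ and lower-bounds $\mathbb E_{\Pb_1}[f^2]$ by projecting onto the $\Pb_*$-orthonormal family $\{\psi_{\sigma,H}\}$ (\eqref{eq-def-psi-sigma,S}), giving $\mathbb E_{\Pb_1}[f^2]\ge \|\widehat f M\|^2$ with $M$ the cross-correlation matrix of \eqref{eq-def-matrix-M}. By duality this reduces the advantage to $\inf_{Mu^\top=c}\|u\|$ (equation \eqref{eq-relax-to-linear-equation}), and the theorem is established by \emph{constructing an explicit dual certificate} $u_{\sigma,H}=k^{-n/2}\Xi(H)$ via the recursion \eqref{eq-def-Xi-itrative}, whose $\ell^2$ norm is controlled in Lemma~\ref{lem-bound-L2-norm} using the multiplicativity and cycle estimates of Lemma~\ref{lem-bound-Xi}. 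This is a genuinely different---and much more economical---argument than a second-moment/reciprocal-moment computation, and you should not expect to reproduce it by bounding $\chi^2(\Qb_1\|\Pb_1')$.
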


Our result has an immediate corollary in the detection problem between a pair of correlated SBMs and a pair of independent SBMs, as incorporated in the following corollary.

\begin{cor}{\label{cor-correlated-SBM}}
    Assuming Conjecture~\ref{conj-low-deg}, when $\epsilon^2 \lambda s<1-\delta$, $s<\sqrt{\alpha}-\delta$ and $\lambda>\lambda_0(\delta,k)$ there is no algorithms with polynomial running time that can strongly distinguish $\mathcal S(n,\tfrac{\lambda}{n};k,\epsilon;s)$ and two independent $\mathcal S(n,\tfrac{\lambda s}{n};k,\epsilon)$. 
\end{cor}
\begin{proof}
    Denote $\mu_{\mathsf{CorSBM}}, \mu_{\mathsf{IndSBM}}$ and $\mu_{\mathsf{IndER}}$ to be the law of a pair of correlated SBMs $\mathcal S(n,\tfrac{\lambda}{n};k,\epsilon;s)$, a pair of independent SBMs $\mathcal S(n,\tfrac{\lambda s}{n};k,\epsilon)$ and a pair of independent \ER graphs $\mathcal G(n,\tfrac{\lambda s}{n})$, respectively. Suppose on the contrary that there exists an efficient algorithm $\mathcal A$ such that 
    \begin{equation}{\label{eq-4.1}}
        \mu_{\mathsf{CorSBM}}\big( \mathcal A(A,B)=1 \big) = 1-o(1) \,, \quad \mu_{\mathsf{IndSBM}}\big( \mathcal A(A,B)=0 \big) = 1-o(1) \,.
    \end{equation}
    Using Theorem~\ref{main-thm-SBM} with $K=2$, we see that $\mu_{\mathsf{IndSBM}} \bowtie_{\leq D} \mu_{\mathsf{IndER}}$ for $D=n^{o(1)}$. Thus
    \begin{align*}
        \mu_{\mathsf{IndER}} \big( \mathcal A(A,B)=0 \big) = 1-o(1) \,.
    \end{align*}
    Thus, this algorithm $\mathcal A$ strongly distinguish $\mu_{\mathsf{CorSBM}}$ and $\mu_{\mathsf{IndER}}$. This contradicts Conjecture~\ref{conj-low-deg} and the low-degree hardness established in \cite[Theorem~1.3]{CDGL24+}.
\end{proof}

The rest part of this section is devoted to the proof of Theorem~\ref{main-thm-SBM}. For notational simplicity, in the following we will only prove the case where $K=1$ and the proof for general $K$ is similar. Note that $\epsilon^2 \lambda<1-\delta$ and $\lambda>\lambda_0$ implies that $\epsilon < \epsilon_0= \lambda_0^{-1/2}$. We choose $\lambda_0=\lambda_0(\delta,k)$ to be a sufficient large constant such that
\begin{equation}{\label{eq-choice-lambda-0}}
    (1-\delta)^{-1} \cdot \tfrac{ (k-1)\sqrt{1-\epsilon} + \sqrt{1+\epsilon(k-1)} }{ k } \geq (1-\delta/2)^{-1} \mbox{ for all } \epsilon< \epsilon_0= \lambda_0^{-1/2} \,. 
\end{equation}
In the rest part of this section we will always assume that
\begin{equation}{\label{eq-assumption-parameter}}
    D=n^{o(1)} \,, \lambda> \lambda_0 \mbox{ and } \epsilon^2 \lambda < 1-\delta \mbox{ for some constant } 0<\delta<0.01 \,.
\end{equation}
Clearly, using Theorem~\ref{main-thm-alg-contiguity}, it suffices to show that under \eqref{eq-assumption-parameter} and $\lambda_0$ we have
\begin{align*}
    \mathsf{Adv}_{\leq D}\big( \tfrac{ \mathrm{d}\Pb }{ \mathrm{d}\Qb } \big) = O_{\delta,k}(1) \mbox{ and } \mathsf{Adv}_{\leq D}\big( \tfrac{ \mathrm{d}\Qb }{ \mathrm{d}\Pb } \big) = O_{\delta,k}(1)  \,.
\end{align*}
Indeed, it has been shown in \cite{HS17} that $\mathsf{Adv}_{\leq D}\big( \tfrac{ \mathrm{d}\Pb }{ \mathrm{d}\Qb } \big) = O_{\delta,k}(1)$ provided with \eqref{eq-assumption-parameter}. It remains to show that under \eqref{eq-assumption-parameter} we have (note that now $\Pb$ is the planted measure)
\begin{equation}{\label{eq-final-goal-SBM}}
    \mathsf{Adv}_{\leq D}\big( \tfrac{ \mathrm{d}\Qb }{ \mathrm{d}\Pb } \big) = \sup_{f \in \mathcal P_{D}} \frac{\mathbb E_{\Qb}[f]}{ \sqrt{\mathbb E_{\Pb}[f^2]} } = O_{\delta,k}(1) \,.
\end{equation}
We point out that our approach to proving \eqref{eq-final-goal-SBM} is based on the the work \cite{SW25}. To this end, define
\begin{equation}{\label{eq-def-omega}}
    \omega(\sigma_i,\sigma_j) = 
    \begin{cases}
        k-1 \,, & \sigma_i = \sigma_j \,; \\
        -1 \,, & \sigma_i \neq \sigma_j 
    \end{cases}
\end{equation}
In addition, for all $S \Subset \mathcal{K}_n$ define
\begin{equation}{\label{eq-def-phi-S}}
    \phi_{S}\big(\{G_{i,j}\}\big) = \prod_{(i,j)\in E(S)} \frac{G_{i,j}-\frac{\lambda}{n}}{\sqrt{\frac{\lambda}{n}(1-\frac{\lambda}{n})}} \,.
\end{equation}
It is well known in \cite{HS17} that $\{ \phi_S: S \Subset \mathcal{K}_n, |E(S)|\leq D \}$ constitutes a standard orthogonal basis of $\mathcal P_{D}$ under $\Qb$. Thus, each $f \in \mathcal P_{D}$ can be written as 
\begin{equation}{\label{eq-expansion-f}}
    f(G) = \sum_{ S \Subset \mathcal{K}_n, |E(S)|\leq D } \widehat{f}_{S} \cdot \phi_S(G) \,,
\end{equation}
which means that $f$ is uniquely characterized by a vector $\widehat{f}$ indexed by $\{ S \Subset \mathcal{K}_n: |E(S)|\leq D \}$. In addition, direct calculation yields that
\begin{equation}{\label{eq-exp-Qb-phi}}
    \mathbb E_{\Qb}\big[ \phi_{S}(G) \big] = \mathbf 1_{\{ S=\emptyset \}} \,.
\end{equation}
Thus, we have
\begin{equation}{\label{eq-exp-Qb-f}}
    \mathbb E_{\Qb}\big[ f \big] \overset{\eqref{eq-expansion-f}}{=} \widehat{f}_{\emptyset} = \langle \widehat{f}, c \rangle \,,
\end{equation}
where $c$ is a vector indexed by $\{ S \Subset \mathcal{K}_n: |E(S)|\leq D \}$ with 
\begin{equation}{\label{eq-def-c}}
    c_{S} = \mathbf 1_{ \{ S = \emptyset \} } \,.
\end{equation}
We now turn to $\mathbb E_{\Pb}[f^2]$. For any $\sigma\in [k]^n$ and $S \Subset \mathcal{K}_n$, define
\begin{equation}{\label{eq-def-psi-sigma,S}}
    \psi_{\sigma,S}\big(\{G_{i,j}\}\big) = k^{\frac{n}{2}} \mathbf 1_{ \sigma_*=\sigma } \cdot \prod_{(i,j)\in E(S)} \frac{G_{i,j}-\frac{(1+\epsilon\omega(\sigma_i,\sigma_j)) \lambda}{n}}{ \sqrt{ \frac{(1+\epsilon\omega(\sigma_i,\sigma_j))\lambda}{n} (1-\frac{(1+\epsilon\omega(\sigma_i,\sigma_j))\lambda}{n}) } }
\end{equation}
We can check that $\{ \psi_{\sigma,S}: \sigma\in [k]^n, S \Subset \mathcal{K}_n \}$ is standard orthogonal under $\Pb_*$, i.e., we have
\begin{equation}{\label{eq-standard-orthogonal}}
    \mathbb E_{\Pb_*}\big[ \psi_{\sigma,S} \psi_{\sigma',S'} \big] = \mathbf 1_{ \{ \sigma=\sigma',S=S' \} } \,.
\end{equation}

\begin{lemma}{\label{lem-cal-cross-phi-psi}}
    We have
    \begin{equation}{\label{eq-cross-phi-psi}}
        \mathbb E_{\Pb}\big[ \phi_S(G)\psi_{\sigma,H}(G) \big] = \frac{ \mathbf 1_{H \subset S} }{ k^{\frac{n}{2}} } \prod_{(i,j) \in E(H)} \mathtt h(\sigma_i,\sigma_j) \prod_{ (i,j) \in E(S) \setminus E(H) } \omega(\sigma_i,\sigma_j) \sqrt{\tfrac{\epsilon^2 \lambda}{n}}  \,,
    \end{equation}
    where
    \begin{equation}{\label{eq-def-mathtt-h}}
        \mathtt h(\sigma_i,\sigma_j) = \sqrt{ \frac{ (1-\frac{(1+\epsilon\omega(\sigma_i,\sigma_j))\lambda}{n}) (1+\epsilon\omega(\sigma_i,\sigma_j)) }{ 1-\frac{\lambda}{n} } } \,.
    \end{equation}
\end{lemma}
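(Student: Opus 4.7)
The plan is to reduce the whole expectation to independent single-edge Bernoulli computations by conditioning on the community labeling. Since $\sigma_*$ is uniformly distributed on $[k]^n$,
\begin{align*}
\mathbb E_{\Pb}\big[\phi_S(G)\psi_{\sigma,H}(G)\big]
= k^{n/2}\,\mathbb E_{\Pb}\big[\mathbf 1_{\sigma_*=\sigma}\cdot \phi_S(G)\,\widehat\psi_H(G;\sigma)\big]
= k^{-n/2}\,\mathbb E_{\Pb}\!\big[\phi_S(G)\,\widehat\psi_H(G;\sigma)\,\big|\,\sigma_*=\sigma\big],
\end{align*}
where $\widehat\psi_H(G;\sigma)$ denotes the edge product in \eqref{eq-def-psi-sigma,S} with the prefactor $k^{n/2}\mathbf 1_{\sigma_*=\sigma}$ stripped off. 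Conditional on $\sigma_*=\sigma$, the variables $\{G_{i,j}\}$ are independent Bernoulli with parameters $p_{i,j}(\sigma):=(1+\epsilon\omega(\sigma_i,\sigma_j))\lambda/n$, which produces the prefactor $k^{-n/2}$ in \eqref{eq-cross-phi-psi}.

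Next, because both $\phi_S$ and $\widehat\psi_H$ are products over edges, the conditional expectation factorizes over $E(S)\cup E(H)$, which I split into the three disjoint classes (i) $E(H)\setminus E(S)$, (ii) $E(S)\setminus E(H)$, and (iii) $E(S)\cap E(H)$. For an edge in class (i), only the $\psi$-factor is present and its conditional mean equals $(p_{i,j}(\sigma)-p_{i,j}(\sigma))/\sqrt{p_{i,j}(\sigma)(1-p_{i,j}(\sigma))}=0$; the expectation therefore vanishes unless class (i) is empty, yielding the indicator $\mathbf 1_{H\subset S}$. For an edge in class (ii), only the $\phi$-factor survives, and its conditional mean is $(p_{i,j}(\sigma)-\lambda/n)/\sqrt{(\lambda/n)(1-\lambda/n)}=\omega(\sigma_i,\sigma_j)\sqrt{\epsilon^2\lambda/n}\cdot(1-\lambda/n)^{-1/2}$, matching the asserted formula up to the benign factor $(1-\lambda/n)^{-1/2}=1+O(1/n)$. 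Finally, for an edge in class (iii), the elementary identity $\mathbb E[(G-\lambda/n)(G-p_{i,j}(\sigma))]=p_{i,j}(\sigma)(1-p_{i,j}(\sigma))$ for $G\sim\mathrm{Bern}(p_{i,j}(\sigma))$ (using $G^2=G$) turns the single-edge expectation into $\sqrt{p_{i,j}(\sigma)(1-p_{i,j}(\sigma))/((\lambda/n)(1-\lambda/n))}$, which is exactly $\mathtt h(\sigma_i,\sigma_j)$ as defined in \eqref{eq-def-mathtt-h}.

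Putting together the $k^{-n/2}$ prefactor, the indicator $\mathbf 1_{H\subset S}$ from class (i), the class-(ii) product $\prod_{(i,j)\in E(S)\setminus E(H)}\omega(\sigma_i,\sigma_j)\sqrt{\epsilon^2\lambda/n}$, and the class-(iii) product $\prod_{(i,j)\in E(H)}\mathtt h(\sigma_i,\sigma_j)$ yields \eqref{eq-cross-phi-psi}. No serious obstacle arises: once conditional independence of the edges is exploited, the computation reduces to three routine one-variable Bernoulli expectations and pure bookkeeping over the three edge classes.
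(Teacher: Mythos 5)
Your proof is correct and follows essentially the same route as the paper's: condition on $\sigma_*=\sigma$ to obtain the $k^{-n/2}$ prefactor, factorize over edges by conditional independence, and reduce everything to the three one-edge Bernoulli moments (the zero-mean $\psi$-factor forcing $\mathbf 1_{H\subset S}$, the first moment of the $\phi$-factor on $E(S)\setminus E(H)$, and the mixed moment $p(1-p)$ on $E(H)$ giving $\mathtt h$). Your observation that each class-(ii) edge actually carries an extra $(1-\lambda/n)^{-1/2}$ is accurate --- the paper's displayed formula silently drops this $1+O(1/n)$ per-edge factor, which is harmless for all subsequent bounds.
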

\begin{proof}
    Note that using \eqref{eq-def-phi-S} and \eqref{eq-def-psi-sigma,S}, we have that $\mathbb E_{\Pb}[\phi_S(G)\psi_{\sigma,H}(G)]$ equals
    \begin{align}
        & \mathbb E_{\Pb}\Bigg[ k^{\frac{n}{2}} \mathbf 1_{ \sigma_*=\sigma } \cdot \prod_{(i,j)\in E(S)} \frac{G_{i,j}-\frac{\lambda}{n}}{\sqrt{\frac{\lambda}{n}(1-\frac{\lambda}{n})}} \cdot \prod_{(i,j)\in E(H)} \frac{G_{i,j}-\frac{(1+\epsilon\omega(\sigma_i,\sigma_j)) \lambda}{n}}{ \sqrt{ \frac{(1+\epsilon\omega(\sigma_i,\sigma_j))\lambda}{n} (1-\frac{(1+\epsilon\omega(\sigma_i,\sigma_j))\lambda}{n}) } } \Bigg] \nonumber \\
        =\ & k^{-\frac{n}{2}} \mathbb E_{\Pb_{\sigma}}\Bigg[ \prod_{(i,j)\in E(S)} \frac{G_{i,j}-\frac{\lambda}{n}}{\sqrt{\frac{\lambda}{n}(1-\frac{\lambda}{n})}} \cdot \prod_{(i,j)\in E(H)} \frac{G_{i,j}-\frac{(1+\epsilon\omega(\sigma_i,\sigma_j)) \lambda}{n}}{ \sqrt{ \frac{(1+\epsilon\omega(\sigma_i,\sigma_j))\lambda}{n} (1-\frac{(1+\epsilon\omega(\sigma_i,\sigma_j))\lambda}{n}) } } \Bigg] \,. \label{eq-cross-relax-1}
    \end{align}
    Clearly, when $H \not\subset S$ we have \eqref{eq-cross-relax-1} cancels to $0$, since (denote $(i,j)$ to be an element in $E(H) \setminus E(S)$) under $\Pb_{\sigma}$ we have $\mathbb E_{\Pb_{\sigma}}[ G_{i,j}-\tfrac{(1+\epsilon\omega(\sigma_i,\sigma_j))\lambda}{n} ]=0$ and all the edges $\{ G_{i,j}: (i,j) \in \operatorname{U}_n \}$ are independent. In addition, from direct calculation we have that
    \begin{align*}
        & \mathbb E_{\Pb_\sigma}\big[ G_{i,j}-\tfrac{\lambda}{n} \big] = \tfrac{ \epsilon\lambda \omega(\sigma_i,\sigma_j) }{ n } \,, \\
        & \mathbb E_{\Pb_\sigma}\big[ (G_{i,j}-\tfrac{\lambda}{n})( G_{i,j}-\tfrac{(1+\epsilon\omega(\sigma_i,\sigma_j))\lambda}{n} ) \big] = \tfrac{(1+\epsilon\omega(\sigma_i,\sigma_j))\lambda}{n} \big( 1-\tfrac{(1+\epsilon\omega(\sigma_i,\sigma_j))\lambda}{n} \big) \,.
    \end{align*}
    Plugging this result into \eqref{eq-cross-relax-1}, we get \eqref{eq-cross-phi-psi} and thus completes the proof.
\end{proof}

Based on Lemma~\ref{lem-cal-cross-phi-psi}, we define a matrix $M$ with rows indexed by $\{ S: S \Subset \mathcal{K}_n, |E(S)|\leq D \}$ and columns indexed by $\{ (\sigma,S): \sigma\in [k]^n, S \Subset \mathcal{K}_n, |E(S)| \leq D \}$, and entries given by
\begin{equation}{\label{eq-def-matrix-M}}
    M_{ S;(\sigma,H) } = \frac{ \mathbf 1_{H \subset S} }{ k^{\frac{n}{2}} } \prod_{(i,j) \in E(H)} \mathtt h(\sigma_i,\sigma_j) \prod_{ (i,j) \in E(S) \setminus E(H) } \omega(\sigma_i,\sigma_j) \sqrt{\tfrac{\epsilon^2 \lambda}{n}} \,.
\end{equation}
From Parserval's inequality, we see that
\begin{align}
    \mathbb E_{\Pb}[f^2] &\geq \sum_{ \substack{ \sigma\in [k]^n, H \Subset \mathcal{K}_n \\ |E(H)| \leq D } } \mathbb E_{\Pb}[ f \cdot \psi_{\sigma,H} ]^2 \overset{\eqref{eq-expansion-f}}{=} \sum_{ \substack{ \sigma\in [k]^n, H \Subset \mathcal{K}_n \\ |E(H)| \leq D } } \Big( \sum_{ \substack{ S \Subset \mathcal{K}_n \\ |E(S)| \leq D } } \widehat{f}_S \mathbb E_{\Pb}[ \phi_{S} \cdot \psi_{\sigma,H} ] \Big)^2 \nonumber \\
    &= \sum_{ \substack{ \sigma\in [k]^n, H \Subset \mathcal{K}_n \\ |E(H)| \leq D } } \Big( \sum_{ \substack{ S \Subset \mathcal{K}_n \\ |E(S)| \leq D } } \widehat{f}_S M_{S;(\sigma,H)} \Big)^2 = \big\| \widehat{f} M \big\|^2 \,. \label{eq-var-Pb-relax-1}
\end{align}
Thus, we have
\begin{equation}{\label{eq-relax-to-linear-equation}}
    \mathsf{Adv}_{\leq D}\big( \tfrac{\mathrm{d}\Qb}{\mathrm{d}\Pb} \big) = \sup_{f \in \mathcal P_{D}} \Bigg\{ \frac{\mathbb E_{\Qb}[f]}{ \sqrt{\mathbb E_{\Pb}[f^2]} } \Bigg\} \leq \sup_{ \widehat{f} } \Bigg\{ \frac{ \langle \widehat f,c \rangle }{ \| \widehat{f} M \| } \Bigg\} \leq \inf_{ M u^{\top} =c } \big\{ \| u \| \big\} \,,
\end{equation}
where the last inequality follows from the fact that for $M u_0^{\top}=c$ we have
\begin{align*}
    \langle \widehat f,c \rangle = \langle \widehat f,M u_0^{\top} \rangle = \langle \widehat f M, u_0 \rangle \leq \| u_0 \| \cdot \| \widehat f M \| \,.
\end{align*}
Regarding \eqref{eq-relax-to-linear-equation}, it suffices to show that there exists $M u^{\top}=c$ and $\|u\|=O_{\delta,k}(1)$. Note that $M u^{\top}=c$ is equivalent to 
\begin{equation}{\label{eq-linear-equation}}
    \sum_{\sigma\in [k]^n} \sum_{H \subset S} \frac{ u_{\sigma,H} }{ k^{\frac{n}{2}} } \prod_{(i,j) \in E(H)} \mathtt h(\sigma_i,\sigma_j) \prod_{ (i,j) \in E(S) \setminus E(H) } \omega(\sigma_i,\sigma_j) \sqrt{\tfrac{\epsilon^2 \lambda}{n}} = \mathbf 1_{ S=\emptyset } \,.
\end{equation}
The first step of our analysis is to simplify \eqref{eq-linear-equation}. Based on Lemma~\ref{lem-leaf-cancellation}, we see that for all $H \subset S$ with $\mathcal L(S) \setminus V(H) \neq \emptyset$, we have (denote $\mathtt V=V(H)$)
\begin{align*}
    &\sum_{\sigma\in [k]^n} u_{\sigma,H} \prod_{(i,j) \in E(H)} \mathtt h(\sigma_i,\sigma_j) \prod_{ (i,j) \in E(S) \setminus E(H) } \omega(\sigma_i,\sigma_j) \sqrt{\tfrac{\epsilon^2 \lambda}{n}} \\
    =\ & \big( \tfrac{\epsilon^2 \lambda}{n} \big)^{ \frac{|E(S)|-|E(H)|}{2} } \sum_{\sigma_{\mathtt V}\in [n]^{\mathtt V}} \prod_{(i,j) \in E(H)} \mathtt h(\sigma_i,\sigma_j) \times \sum_{ \sigma_{ [n] \setminus \mathtt V } \in [k]^{ [n] \setminus \mathtt V } } \prod_{ (i,j) \in E(S) \setminus E(H) } \omega(\sigma_i,\sigma_j) = 0 \,. 
\end{align*}
Thus, \eqref{eq-linear-equation} can be further simplified to
\begin{equation}{\label{eq-linear-equation-simplified}}
    \sum_{\sigma\in [k]^n} \sum_{ \substack{ H \subset S \\ \mathcal L(S) \subset V(H) } } \frac{ u_{\sigma,H} }{ k^{\frac{n}{2}} } \prod_{(i,j) \in E(H)} \mathtt h(\sigma_i,\sigma_j) \prod_{ (i,j) \in E(S) \setminus E(H) } \omega(\sigma_i,\sigma_j) \sqrt{\tfrac{\epsilon^2 \lambda}{n}} = \mathbf 1_{ S=\emptyset } \,.
\end{equation}
We now construct the solution $\{ u_{\sigma,H}: \sigma \in [k]^n, H \Subset \mathcal{K}_n, |E(H)| \leq D \}$ of \eqref{eq-linear-equation-simplified} as follows: let $u_{\sigma,H} = \frac{1}{k^{\frac{n}{2}}} \cdot \Xi(H)$, where 
\begin{equation}{\label{eq-def-Xi-intial}}
    \Xi(\emptyset)=1 \mbox{ and } \Xi(S)=0 \mbox{ for } \mathcal L(S) \neq \emptyset 
\end{equation}
and then iteratively define for all $\mathcal L(S)=\emptyset$ by 
\begin{equation}{\label{eq-def-Xi-itrative}}
\begin{aligned}
    \Xi(S)= & -\Big( \mathbb E_{\sigma\sim\nu}\Big[ \prod_{(i,j) \in E(S)} \mathtt h(\sigma_i,\sigma_j) \Big] \Big)^{-1} \sum_{ \substack{ H \subset S \\ \mathcal L(H)=\emptyset } } \big( \tfrac{\epsilon^2 \lambda}{n} \big)^{ \frac{|E(S)|-|E(H)|}{2} } \Xi(H) \\
    & \times\ \mathbb E_{\sigma\sim\nu}\Big[ \prod_{(i,j) \in E(H)} \mathtt h(\sigma_i,\sigma_j) \prod_{(i,j) \in E(S) \setminus E(H)} \omega(\sigma_i,\sigma_j) \Big] \,, 
\end{aligned}
\end{equation}
To prove Theorem~\ref{main-thm-SBM}, it suffices to show the following lemma, as incorporated in Section~\ref{subsec:proof-lem-4.4} of the appendix.
\begin{lemma}{\label{lem-bound-L2-norm}}
    The vector $\{ u_{\sigma,H}: \sigma \in [k]^n, H \Subset \mathcal{K}_n, |E(H)| \leq D \}$ satisfies \eqref{eq-linear-equation-simplified} and $\| u \|=O_{\delta,k}(1)$.
\end{lemma}

\noindent{\bf Acknowledgment.} The author thanks Jian Ding and Jingqiu Ding for helpful discussions, Alexander S. Wein for helpful comments on the revised low-degree conjecture, and Hang Du for pointing him to the reference \cite{SW25}.

\appendix

\section{Preliminary results in graphs and probability}{\label{sec:prelim}}

In this section we include some preliminary results we will use in the main part of the paper. We first state some results on the structural properties of graphs established in \cite{DDL23+, CDGL24+}.

\begin{lemma}[\cite{DDL23+}, Lemma~A.1]{\label{lemma-facts-graphs}}
Let $S,T\Subset \mathcal{K}_n$ satisfy $S\cong\mathbf{S}$ and $T\cong\mathbf{T}$ for some $\mathbf{S},\mathbf{T}\in \mathcal H$. Recall $S\Cup T,S\Cap T\Subset \mathcal{K}_n$ defined as edge-induced graphs of $\mathcal{K}_n$. We have the following hold:
\begin{enumerate}
    \item[(i)] $|V(S\cup T)|+|V(S\Cap T)|\le|V(S)|+|V(T)|$, and $|E(S\cup T)|+|E(S\Cap T)|=|E(S)|+|E(T)|$.
    \item[(ii)] $\Phi(S\cup T)\Phi(S\Cap T)\le\Phi(S)\Phi(T)$.
    \item[(iii)] If $\mathbf{S}\subset\mathbf{T}$, then $|\!\operatorname{Aut}(\mathbf{S})|\leq |\!\operatorname{Aut}(\mathbf{T})|\cdot |V(\mathbf{T})|^{ 2(|E(\mathbf{T})|-|E(\mathbf{S})|) }$.
    \item[(iv)] $\#\big\{ T'\Subset \mathcal{K}_n:S\Subset T',|V(T')|-|V(S)|=k,|E(T')|-|E(S)|=l \big\} \leq n^{k}(|V(S)|+k)^{2l}$.
    \item[(v)] $\#\big\{ T' \Subset S : |E(S)|-|E(T')|=k \big\} \le |E(S)|^k$.
\end{enumerate}
\end{lemma}

\begin{lemma}[\cite{CDGL24+}, Lemma~A.3]{\label{lem-decomposition-H-Subset-S}}
    For $H \subset S$, we can decompose $E(S)\setminus E(H)$ into $\mathtt m$ cycles ${C}_{\mathtt 1}, \ldots, {C}_{\mathtt m}$ and $\mathtt t$ paths ${P}_{\mathtt 1}, \ldots, {P}_{\mathtt t}$ for some $\mathtt m, \mathtt t\geq 0$ such that the following holds:
    \begin{enumerate}
        \item[(1)] ${C}_{\mathtt 1}, \ldots, {C}_{\mathtt m}$ are vertex-disjoint (i.e., $V(C_{\mathtt i}) \cap V(C_{\mathtt j})= \emptyset$ for all $\mathtt i \neq \mathtt j$) and $V(C_{\mathtt i}) \cap V(H)=\emptyset$ for all $1\leq\mathtt i\leq \mathtt m$.
        \item[(2)] $\operatorname{EndP}({P}_{\mathtt j}) \subset V(H) \cup (\cup_{\mathtt i=1}^{\mathtt m} V(C_{\mathtt i})) \cup (\cup_{\mathtt k=1}^{\mathtt j-1} V(P_{\mathtt k})) \cup \mathcal L(S)$ for $1 \leq \mathtt j \leq \mathtt t$.
        \item[(3)] $\big( V(P_{\mathtt j}) \setminus \operatorname{EndP}(P_{\mathtt j}) \big) \cap \big( V(H) \cup (\cup_{\mathtt i=1}^{\mathtt m} V(C_{\mathtt i})) \cup (\cup_{\mathtt k=1}^{\mathtt j-1} V(P_{\mathtt k}) ) \cup \mathcal L (S) \big) = \emptyset$ for $\mathtt 1 \leq \mathtt j \leq \mathtt t$.
        \item[(4)] $\mathtt t = |\mathcal L(S) \setminus V(H)|+\tau(S)-\tau(H)$.
    \end{enumerate}
\end{lemma}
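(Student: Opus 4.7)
The plan is to construct the decomposition algorithmically in two phases and verify conditions (1)--(4) along the way. \textbf{Phase 1 (cycle extraction):} greedily extract vertex-disjoint cycles $C_{\mathtt 1},\ldots,C_{\mathtt m}$ from the edge set $E(S)\setminus E(H)$, subject to $V(C_{\mathtt i})\cap V(H)=\emptyset$ and $V(C_{\mathtt i})\cap V(C_{\mathtt j})=\emptyset$ for $\mathtt i\neq \mathtt j$, continuing until no such cycle remains. This directly realizes (1). Write $B_0=V(H)\cup\bigcup_{\mathtt i}V(C_{\mathtt i})\cup\mathcal L(S)$ and let $S''$ denote the graph on $V(S)$ with edge set $E(S)\setminus E(H)\setminus\bigcup_{\mathtt i}E(C_{\mathtt i})$.

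\textbf{Phase 2 (path extraction):} iteratively extract paths $P_{\mathtt 1},P_{\mathtt 2},\ldots$ by maintaining the boundary $B_{\mathtt j-1}=B_0\cup\bigcup_{\mathtt k<\mathtt j}V(P_{\mathtt k})$ and the residual graph $S''_{\mathtt j}=S''\setminus\bigcup_{\mathtt k<\mathtt j}E(P_{\mathtt k})$. At each step, if $E(S''_{\mathtt j})\neq\emptyset$ I pick a vertex $u\in B_{\mathtt j-1}$ still incident to some edge of $S''_{\mathtt j}$ and walk along $S''_{\mathtt j}$, leaving each interior vertex (one not in $B_{\mathtt j-1}$) by some unused incident edge; $P_{\mathtt j}$ is declared to be the walk as soon as it lands in $B_{\mathtt j-1}$ (possibly at $u$ itself, producing a closed path). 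The key invariant, propagated inductively in $\mathtt j$, is that every connected component of $S''_{\mathtt j}$ carrying an edge contains a vertex of $B_{\mathtt j-1}$: for $\mathtt j=1$ a counterexample component would consist entirely of vertices of $S$-degree $\geq 2$ (they are neither leaves nor in $V(H)$, and all their incident edges lie in $S''$), so it would contain a cycle disjoint from $V(H)\cup\bigcup V(C_{\mathtt i})$, contradicting Phase 1's maximality; the inductive step follows because the pieces of $S''_{\mathtt j+1}$ obtained by deleting $E(P_{\mathtt j})$ either inherit a pre-existing vertex of $B_{\mathtt j-1}$ from their parent component or touch $V(P_{\mathtt j})\subset B_{\mathtt j}$. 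The same ``no surviving cycle disjoint from boundary'' fact guarantees the walk cannot loop back through an interior vertex $v_i$ with $i\geq 1$: the closed portion $v_i\to v_{i+1}\to\cdots\to v_k=v_i$ would be a cycle using interior vertices only, again contradicting Phase 1. Hence the walk is simple and must close at $B_{\mathtt j-1}$, enforcing (2) and (3).

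For condition (4) I would carry out the following edge--vertex count. Each cycle satisfies $|E(C_{\mathtt i})|=|V(C_{\mathtt i})|$ with all vertices being new, and for each path (open or closed) one checks $|E(P_{\mathtt j})|=|V(P_{\mathtt j})\setminus B_{\mathtt j-1}|+1$ directly (the $+1$ corresponds to the ``extra'' edge in a tree-like path, or equivalently the contribution to $\tau$). Summing over $\mathtt i$ and $\mathtt j$ and using that every vertex of $V(S)\setminus V(H)\setminus\mathcal L(S)$ incident to an edge of $E(S)\setminus E(H)$ is declared ``new'' in the decomposition exactly once (either belonging to a unique cycle $C_{\mathtt i}$, or being an internal vertex of the first path to pass through it), while vertices of $V(H)$ and of $\mathcal L(S)\setminus V(H)$ already lie in $B_0$ and contribute only as endpoints, one obtains $|E(S)|-|E(H)|=(|V(S)|-|V(H)|-|\mathcal L(S)\setminus V(H)|)+\mathtt t$, which rearranges precisely to the identity in (4).

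The principal obstacle is the well-definedness of Phase 2: one must rule out that the walk dead-ends inside the interior of $S''_{\mathtt j}$, either by revisiting an interior vertex or by running out of unused edges before reaching $B_{\mathtt j-1}$. The resolution hinges on the maximality of Phase 1 combined with the observation that interior vertices have $\deg_S\geq 2$; together they force every cycle surviving in $S''_{\mathtt j}$ to touch $B_{\mathtt j-1}$, so any walk from $B_{\mathtt j-1}$ must close back at $B_{\mathtt j-1}$ rather than trap itself around a fresh interior cycle.
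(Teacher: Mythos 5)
Your proposal is correct, and it is the natural argument for this statement: the paper itself gives no proof (the lemma is quoted verbatim from [CDGL24+, Lemma A.3]), and your two-phase construction --- a maximal packing of cycles avoiding $V(H)$, followed by peeling off paths rooted at the growing boundary, with the maximality of Phase 1 ruling out both dead ends and interior revisits, and a final edge--vertex count giving (4) --- is exactly the standard route one would (and the cited paper essentially does) take. The only caveat worth recording is that your counting identity $|E(S)|-|E(H)|=(|V(S)|-|V(H)|-|\mathcal L(S)\setminus V(H)|)+\mathtt t$ tacitly assumes every vertex of $V(S)\setminus V(H)$ outside $\mathcal L(S)$ is incident to an edge of $E(S)\setminus E(H)$, i.e.\ that $S$ has no isolated vertices outside $V(H)$; this is not a gap in your argument but a hypothesis implicit in the statement itself (conclusion (4) is false otherwise, e.g.\ after adjoining an isolated vertex to $S$), and it holds in all uses here since the lemma is applied with $S\Subset\mathcal K_n$ (equivalently one should read $H\ltimes S$).
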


\begin{lemma}[\cite{CDGL24+}, Corollary~A.4]{\label{lem-revised-decomposition-H-Subset-S}}
    For $H \subset S$, we can decompose $E(S)\setminus E(H)$ into $\mathtt m$ cycles ${C}_{\mathtt 1}, \ldots, {C}_{\mathtt m}$ and $\mathtt t$ paths ${P}_{\mathtt 1}, \ldots, {P}_{\mathtt t}$ for some $\mathtt m, \mathtt t\geq 0$ such that the following hold: 
    \begin{enumerate}
        \item[(1)] ${C}_{\mathtt 1}, \ldots, {C}_{\mathtt m}$ are independent cycles in $S$.
        \item[(2)] $V(P_{\mathtt j}) \cap \big( V(H) \cup (\cup_{\mathtt i=1}^{\mathtt m} V(C_{\mathtt i})) \cup (\cup_{\mathtt k \neq \mathtt j} V(P_{\mathtt k}) ) \cup \mathcal L (S) \big) = \operatorname{EndP}(P_{\mathtt j})$ for $1 \leq \mathtt j \leq \mathtt t$.
        \item[(3)] $\mathtt t \leq 5(|\mathcal L (S) \setminus V(H) | + \tau(S)-\tau(H))$.
    \end{enumerate}
\end{lemma}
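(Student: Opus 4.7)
The plan is to derive Lemma~\ref{lem-revised-decomposition-H-Subset-S} as a refinement of the weaker decomposition produced by Lemma~\ref{lem-decomposition-H-Subset-S}. First I would invoke Lemma~\ref{lem-decomposition-H-Subset-S} to obtain an initial decomposition of $E(S)\setminus E(H)$ into vertex-disjoint cycles $C_{\mathtt 1},\ldots,C_{\mathtt m_0}$ (each disjoint from $V(H)$) and a sequence of paths $P'_{\mathtt 1},\ldots,P'_{\mathtt t_0}$ with $\mathtt t_0 = |\mathcal L(S)\setminus V(H)|+\tau(S)-\tau(H)$, built iteratively so that each $P'_{\mathtt j}$ has endpoints in $V(H)\cup\mathcal L(S)\cup\bigl(\bigcup_{\mathtt i} V(C_{\mathtt i})\bigr)\cup\bigl(\bigcup_{\mathtt k<\mathtt j} V(P'_{\mathtt k})\bigr)$ and interior vertices outside that union.

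The refinement proceeds in two stages. Stage one strengthens ``vertex-disjoint cycles disjoint from $V(H)$'' to ``independent cycles in $S$''. I scan $C_{\mathtt 1},\ldots,C_{\mathtt m_0}$; whenever some $C_{\mathtt i}$ contains a vertex $v$ incident to an edge $e\in E(S)\setminus E(C_{\mathtt i})$, then because the initial cycles are pairwise vertex-disjoint and disjoint from $V(H)$, such $e$ must lie in some path $P'_{\mathtt k}$, and I delete one edge of $C_{\mathtt i}$ incident to $v$, turning $C_{\mathtt i}$ into a simple path which I append to the path list. Stage two enforces condition~(2) of Lemma~\ref{lem-revised-decomposition-H-Subset-S}: for each current path $P$ I split it at every interior vertex lying in $V(H)\cup\mathcal L(S)\cup\bigcup V(C_{\mathtt i})\cup\bigcup_{P'\neq P}V(P')$. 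After both stages the retained cycles are independent in $S$ and the new paths satisfy condition~(2).

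The main obstacle is verifying condition~(3), the linear bound $\mathtt t\leq 5\mathtt t_0$. The plan is a charging argument: every ``extra'' path created during the refinement is traced to an endpoint of some path from the original decomposition. A cycle $C_{\mathtt i}$ is broken in stage~1 only because a path $P'_{\mathtt k}$ has an endpoint on $V(C_{\mathtt i})$, so the resulting new path can be charged to that endpoint of $P'_{\mathtt k}$. A path $P$ is split in stage~2 at an interior vertex $u$ only when $u$ lies on $V(H)\cup\mathcal L(S)$ (in which case $u$ is a boundary or leaf vertex that may be charged once) or $u$ is an endpoint of some other path (charge to that endpoint). Since the number of original path endpoints is exactly $2\mathtt t_0$ and each such endpoint is charged against at most a bounded number of times, while the contributions from $V(H)\cap V(S)$ and $\mathcal L(S)\setminus V(H)$ are likewise controlled by $\mathtt t_0$, one obtains $\mathtt t\leq C\mathtt t_0$ for some absolute constant $C$. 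Tracking the charges explicitly (with a modest amount of slack) gives the constant $5$ asserted by the lemma, and this bookkeeping is the only genuinely nontrivial part of the proof.
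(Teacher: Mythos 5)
First, note that the paper does not actually prove this statement: it is imported verbatim as \cite{CDGL24+}, Corollary~A.4, and the natural derivation there is exactly the one you chose, namely refining the sequential decomposition of Lemma~\ref{lem-decomposition-H-Subset-S} into a ``symmetric'' one at the cost of a constant factor in the number of paths. So your overall route is the right one; the issue is with how you execute the cycle-breaking step.

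The concrete gap is in Stage~1: you ``delete one edge of $C_{\mathtt i}$ incident to $v$'' and keep only the resulting simple path. The conclusion of the lemma requires that the cycles and paths together \emph{decompose} $E(S)\setminus E(H)$, so you cannot discard an edge -- after your operation the deleted edge belongs to no cycle and no path, and the statement you are proving is false for your collection. The fix is easy and should be stated: either keep the broken cycle as a closed path anchored at the attachment vertex $v$ (the paper's definition of a path explicitly allows $u=v$, and conditions (1)--(2) are then satisfied since the cycle vertices avoid $V(H)\cup\mathcal L(S)$ and all other cycles), or remove the edge $(v,w)$ but reinsert it as a single-edge path with endpoints $v,w$. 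Relatedly, the charging argument for (3) is asserted rather than carried out, and it is where the constant comes from, so it should be made explicit; with the fix it is short. Let $\mathtt t_0=|\mathcal L(S)\setminus V(H)|+\tau(S)-\tau(H)$ be the number of paths from Lemma~\ref{lem-decomposition-H-Subset-S}, so there are at most $2\mathtt t_0$ original endpoints. Since the original cycles are pairwise vertex-disjoint and disjoint from $V(H)$, each original endpoint lies on at most one cycle and triggers at most one cycle-break (at most $+2$ paths with the single-edge variant, $+1$ with the closed-path variant), and each original endpoint landing in the interior of an earlier path (or of an opened cycle) triggers at most one split ($+1$ path); splits create no new incidences, so the process stops. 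Also note that Stage~2 splits at interior vertices in $V(H)\cup\mathcal L(S)$ never occur, because item~(3) of Lemma~\ref{lem-decomposition-H-Subset-S} already keeps interiors away from $V(H)\cup\mathcal L(S)$, and opened cycles have no leaves of $S$ and avoid $V(H)$. Altogether $\mathtt t\le \mathtt t_0+2\cdot 2\mathtt t_0=5\mathtt t_0$ (indeed $\le 3\mathtt t_0$ with the closed-path variant), which is the claimed bound. With these two repairs -- no edge may be discarded, and the bookkeeping written out as above -- your argument is complete and matches the intended derivation from Lemma~\ref{lem-decomposition-H-Subset-S}.
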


\begin{lemma}[\cite{CDGL24+}, Lemma~A.9]{\label{lem-enu-Subset-large-graph}}
    For $H \subset \mathcal{K}_n$, we have (below we write $\mathfrak P=\{(p_{N+1},\ldots,p_D): \sum_{i=N+1}^{D} p_i \leq p, \text{ for all } N+1 \leq l\leq D \}$ and $\sum_\mathfrak P$ for the summation over $(p_{N+1}, \ldots, p_D)\in \mathfrak P$)
    \begin{align}
        & \#\Big\{ S \mbox{ admissible}\!: H \ltimes S; |\mathfrak C_l(S,H)|=0 \mbox{ for } l > N; |\mathcal L(S) \setminus V(H)| + \tau(S)-\tau(H) = m;  \nonumber \\
        & |E(S)|-|E(H)|=p, |V(S)|-|V(H)|=q, |E(S)| \leq D \Big\} \leq (2D)^{3m} n^{q} \sum_{\mathfrak P} \prod_{j=N+1}^{D} \frac{1}{p_j!} \,. \label{eq-enu-general-subset-large-graph}
    \end{align}   
\end{lemma}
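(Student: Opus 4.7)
The plan is to reduce the enumeration to the combinatorics of a canonical decomposition of $E(S)\setminus E(H)$ supplied by Lemma~\ref{lem-revised-decomposition-H-Subset-S}, and then count realizations level by level. For each admissible $S$ with $H\ltimes S$ in the family, that decomposition produces $\mathtt m$ independent cycles $C_1,\dots,C_{\mathtt m}$ of $S$, each disjoint from $V(H)$, together with $\mathtt t\le 5m$ simple paths $P_1,\dots,P_{\mathtt t}$ whose endpoints lie in $V(H)\cup\bigcup_i V(C_i)\cup\mathcal L(S)\cup\bigcup_{\ell<k}V(P_\ell)$ and whose interiors are fresh vertices. Since $S$ is admissible, every cycle length must exceed $N$; writing $p_j=|\mathfrak C_j(S,H)|$ then gives $\sum_{N<j\le D}p_j=\mathtt m$ and $\sum_{N<j\le D}jp_j\le p$, so $(p_{N+1},\dots,p_D)$ ranges over $\mathfrak P$ and will serve as the outer summation variable.

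For a fixed cycle type $(p_{N+1},\dots,p_D)\in\mathfrak P$, I would enumerate in two stages. First, for each $j>N$ I place the $p_j$ length-$j$ independent cycles inside $[n]\setminus V(H)$: a single $j$-cycle has at most $n^j$ ordered placements, and same-length cycles are interchangeable, giving a total of at most $\prod_j n^{jp_j}/p_j!$ for this stage. Second, I place the paths $P_1,\dots,P_{\mathtt t}$ in the order supplied by Lemma~\ref{lem-revised-decomposition-H-Subset-S}; for each $k$, the two endpoints of $P_k$ are drawn from the already-determined vertex set of size $O(D)$ (using $|E(S)|\le D$), while the internal vertices are drawn freshly from $[n]$. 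The fresh vertex factors from paths combine with the cycle placements to give exactly $n^{q}$, matching $|V(S)|-|V(H)|=q$; the $1/p_j!$ denominators from the cycle stage survive; and the per-path endpoint count yields a path-endpoint factor that I bound amortized as discussed next.

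A naive endpoint bound gives $(2D)^{2\mathtt t}\le(2D)^{10m}$, which is looser than the stated $(2D)^{3m}$, and sharpening this is the principal technical task. The idea is to charge each path's endpoint decisions either to a leaf it introduces in $\mathcal L(S)\setminus V(H)$ or to the excess it contributes to $\tau(S)-\tau(H)$, exploiting both item~(3) of Lemma~\ref{lem-revised-decomposition-H-Subset-S} and the fact that a path landing entirely on previously-placed vertices contributes at least one unit of excess. A secondary concern is to rule out double-counting across different decompositions producing the same $S$: the $p_j!$ denominators handle cycle ambiguities and the canonical ordering of paths handles path ambiguities, but this must be verified. Summing the resulting per-type bound over $(p_{N+1},\dots,p_D)\in\mathfrak P$ then yields \eqref{eq-enu-general-subset-large-graph}.
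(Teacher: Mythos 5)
This lemma is quoted verbatim from \cite{CDGL24+} (their Lemma~A.9) and the present paper offers no proof of it, so there is no in-paper argument to compare against; I will instead assess whether your proposed proof is sound and complete.

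Your high-level plan (decompose $E(S)\setminus E(H)$ into cycles plus self-avoiding paths, then enumerate by cycle type and count path placements) is the right idea, and your observation that admissibility forces all cycle lengths to exceed $N$ is correct. The proposal has, however, a genuine gap that you yourself flag: by starting from Lemma~\ref{lem-revised-decomposition-H-Subset-S}, you only get $\mathtt{t}\le 5m$ paths, so the naive per-endpoint count $(2D)^{2\mathtt{t}}$ lands at $(2D)^{10m}$, and the proposed amortization argument (``charge each path's endpoint decisions either to a leaf \dots or to the excess'') is a plan, not a proof. There is no reason offered that the charging scheme actually closes the gap to $(2D)^{3m}$, and it is not clear it can: a path could contribute neither a new leaf nor positive excess while still consuming two endpoint choices, so the per-path budget you would need is not obviously available. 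The more direct route is to use Lemma~\ref{lem-decomposition-H-Subset-S} instead: its item~(4) gives \emph{exactly} $\mathtt{t}=m$ paths, so two endpoints per path cost at most $(2D)^{2m}$, and distributing the remaining $\le D$ edges among $m$ paths costs at most $D^{m}$, giving $(2D)^{3m}$ outright with no sharpening step. The trade-off is that the cycles in Lemma~\ref{lem-decomposition-H-Subset-S} are only vertex-disjoint, not independent in $S$, so their type need not equal $(|\mathfrak C_{N+1}(S,H)|,\ldots)$; but for an \emph{upper} bound this is harmless, since you can simply sum over all possible decomposition cycle types, which is exactly what the $\sum_{\mathfrak P}\prod_j 1/p_j!$ factor allows.

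Your ``secondary concern'' about double-counting is also pointed in the wrong direction. For an upper bound it suffices that every admissible $S$ in the family admits at least one decomposition of the advertised form; if distinct decompositions happen to reconstruct the same $S$, you have merely over-counted, which only weakens (never invalidates) the inequality. The $p_j!$ denominators are there because the $p_j$ same-length vertex-disjoint cycles are interchangeable when you place them by ordered choices of vertex tuples, not because you need to deduplicate against path ambiguities. So that verification step you defer is unnecessary, whereas the $(2D)^{10m}\to(2D)^{3m}$ step you defer is the actual crux, and the proposal does not supply it.
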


\begin{lemma}[\cite{CDGL24+}, Lemma~A.10]{\label{lem-enu-Subset-small-graph}}
    For $S \subset \mathcal K_n$ with $|E(S)| \leq D$, we have
    \begin{equation}{\label{eq-enu-Subset-small-graph}}
    \begin{aligned}
        \# \Big\{ & H: H \ltimes S, |\mathcal L(S) \setminus V(H)| + \tau(S)-\tau(H) = m ,  \\
        & \mathfrak C_j(S;H) = m_j, N+1 \leq j \leq D \Big\} \leq D^{15m} \prod_{j=N+1}^{D} \binom{ |\mathcal C_j(S)| }{ m_j } \,.
    \end{aligned}
    \end{equation}
\end{lemma}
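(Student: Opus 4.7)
The plan is to use Lemma~\ref{lem-revised-decomposition-H-Subset-S} to decompose $E(S)\setminus E(H)$ into independent cycles and at most $5m$ simple paths, thereby reducing the enumeration of $H \ltimes S$ satisfying the stated constraints to an enumeration of such decompositions. Given any valid decomposition, $H$ is recovered via $E(H)=E(S)\setminus\big(\bigcup_i E(C_i)\cup\bigcup_j E(P_j)\big)$, while $V(H)$ is forced by the constraint $H\ltimes S$ (namely, a vertex of $V(S)$ lies in $V(H)$ iff it retains positive degree in $H$, unless it was already isolated in $S$). Hence it suffices to upper-bound the number of valid decompositions.

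For the independent cycles of length $>N$, the requirement $|\mathfrak C_j(S;H)|=m_j$ means we select $m_j$ cycles from $\mathcal C_j(S)$, which directly contributes the factor $\prod_{j=N+1}^{D}\binom{|\mathcal C_j(S)|}{m_j}$. The remaining pieces are $\mathtt t\le 5m$ simple paths—whose endpoints lie in the ``special set'' $V(H)\cup\mathcal L(S)\cup\bigcup_i V(C_i)\subseteq V(S)$ and whose interiors are mutually disjoint and disjoint from $V(H)$—together with possibly some short independent cycles (length $\le N$). I would enumerate each such piece by specifying its two endpoints (for a path) or its identity within $\mathcal C_j(S)$ (for a short cycle), aiming to show each contributes a factor of $D^{O(1)}$; multiplied across the $\le 5m$ paths this should give $D^{15m}$. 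In the main admissible-$S$ application where $\mathtt C_j(S)=\emptyset$ for $j\le N$, the short-cycle contribution vanishes entirely, which simplifies the bookkeeping.

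The main obstacle will be to make the per-path $D^{O(1)}$ bound rigorous. A naive encoding of a length-$\ell$ simple path by its vertex sequence gives $(2D)^{\ell+1}$ options and sums to roughly $(2D)^{D+5m}$ across all paths, which is far too weak to match $D^{15m}$. The sharper bound must exploit the structure of the decomposition: the internal vertices of each path are pinned down by its endpoints up to an ambiguity governed by the excess $\tau(S)$ (since the paths are edge-disjoint and avoid $V(H)$ and each other), so each path can be specified by its endpoint pair ($(2D)^2$ choices), its length ($\le D$ choices), and $D^{O(1)}$ ``branching decisions'' reflecting the cycle structure of $S$. Budgeting $D^{15}$ per path and multiplying by the $5m$ paths yields the target $D^{15m}$; combined with the binomial product from Step~1 this completes the claimed bound.
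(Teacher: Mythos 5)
Your skeleton — decompose $E(S)\setminus E(H)$ via Lemma~\ref{lem-revised-decomposition-H-Subset-S}, pay $\prod_{j>N}\binom{|\mathcal C_j(S)|}{m_j}$ for the long independent cycles, and charge $D^{O(1)}$ to each of the at most $5m$ paths — is the right one (the paper itself only cites \cite{CDGL24+} for this lemma), but the two steps you flag as "to be made rigorous" are exactly where the content lies, and the mechanisms you sketch for them do not work. For the per-path count, endpoints plus length do not determine a path, and the ambiguity is not governed by the excess in any usable way: the number of self-avoiding $u$--$v$ paths in $S$ can be of order $2^{\tau(S)}$, and $\tau(S)$ can be as large as $D$, which is not controlled by $m$, so "$D^{O(1)}$ branching decisions per path" is unjustified as stated. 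What saves the count is property (2) of Lemma~\ref{lem-revised-decomposition-H-Subset-S}: an interior vertex $w$ of a decomposition path lies outside $V(H)$, outside every cycle $C_{\mathtt i}$, outside every other path, and outside $\mathcal L(S)$; since every edge of $H$ has both endpoints in $V(H)$, every $S$-edge at $w$ must belong to that same path, so $\deg_S(w)=2$ and the continuation is forced at each interior step. Hence each path is determined by its initial oriented edge and its length, i.e.\ at most $2D\cdot D$ choices, and $(2D^2)^{5m}\le D^{15m}$; without this observation the step you call the main obstacle remains open.

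There are two further gaps. First, your claim that $V(H)$ is forced once $E(H)$ is known misreads $\ltimes$: the definition only requires $\mathcal I(S)\subset\mathcal I(H)$, so $H$ may contain isolated vertices that are not isolated in $S$ (the paper relies on this; see the parenthetical remark before \eqref{eq-bound-expectation-Pb}). You must also enumerate $\mathcal I(H)\setminus\mathcal I(S)$, and this is affordable only because such vertices feed into $m$ through $\tau(H)=|E(H)|-|V(H)|$: each one that is not a leaf of $S$ raises $m$ by one (so there are at most $m$ of them, with at most $2D$ choices each), while each one that is a leaf of $S$ must be an endpoint of one of the $\le 5m$ paths (at most $10m$ binary choices); these factors then have to be fitted inside $D^{15m}$ alongside the path count. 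Second, your plan to absorb short decomposition cycles (length $\le N$) into $D^{15m}$ cannot work, because the number of removed short independent cycles is not bounded by $m$: if $S$ is a disjoint union of $D/3$ triangles and $H$ is any sub-union of them, then $m=0$ and $\mathfrak C_j(S,H)=\emptyset$ for all $j>N$, yet there are $2^{D/3}$ such $H$, so the bound as literally stated fails for such $S$. A correct argument must either assume $S$ has no cycles of length $\le N$ (the admissible setting in which the lemma is applied) or extend the constraint $|\mathfrak C_j(S,H)|=m_j$ to all $j\ge 3$, rather than hope each short cycle costs $D^{O(1)}$ within a budget of $5m$ pieces.
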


We now show some preliminary results on calculating expectations under $\sigma\sim\nu$.

\begin{lemma}{\label{lem-expectation-over-chain}}
    For a path $\mathcal{P}$ with $V(\mathcal P)= \{ v_0, \ldots, v_l \} $ and $\operatorname{EndP}(\mathcal P)=\{ v_0,v_l \}$, we have 
    \begin{equation}{\label{eq-expectation-over-chain}}
    \begin{aligned}
        \mathbb{E}_{\sigma \sim \nu} \Big[ \prod_{i=1}^{l} \Big( a+b\omega(\sigma_{i-1},\sigma_i) \Big) \mid \sigma_0, \sigma_l \Big] = a^l + b^l \cdot \omega(\sigma_0, \sigma_l)  \,.
    \end{aligned}
    \end{equation}
\end{lemma}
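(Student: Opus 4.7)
The plan is to proceed by induction on the length $l$ of the path, conditioning on the penultimate vertex $\sigma_{l-1}$. Let $f_l(\sigma_0, \sigma_l)$ denote the left-hand side of \eqref{eq-expectation-over-chain}. The base case $l=1$ is immediate because the expectation has no internal vertex to average over. For the inductive step, using the tower property with respect to the uniform distribution on $\sigma_{l-1}$,
\begin{align*}
    f_l(\sigma_0, \sigma_l) = \mathbb{E}_{\sigma_{l-1}\sim \mathrm{Unif}([k])}\Big[ f_{l-1}(\sigma_0, \sigma_{l-1}) \cdot \big( a + b\omega(\sigma_{l-1}, \sigma_l) \big) \Big] \,.
\end{align*}
Substituting the inductive hypothesis $f_{l-1}(\sigma_0, \sigma_{l-1}) = a^{l-1} + b^{l-1} \omega(\sigma_0, \sigma_{l-1})$ and expanding, the calculation reduces to evaluating two scalar expectations against the uniform measure $\nu$ on $[k]$.

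The key computations are the following two identities for $\omega$, which follow directly from the definition \eqref{eq-def-omega}:
\begin{align*}
    \mathbb{E}_{\sigma \sim \nu}\big[ \omega(\sigma', \sigma) \big] = \tfrac{1}{k}(k-1) + \tfrac{k-1}{k}(-1) = 0 \quad \text{for any fixed } \sigma' \in [k] \,,
\end{align*}
and the ``semigroup'' identity
\begin{align*}
    \mathbb{E}_{\sigma \sim \nu}\big[ \omega(\sigma_0, \sigma)\omega(\sigma, \sigma_l) \big] = \omega(\sigma_0, \sigma_l) \,,
\end{align*}
which one verifies by splitting into the cases $\sigma_0 = \sigma_l$ and $\sigma_0 \neq \sigma_l$ and enumerating the possible values of the middle label $\sigma$. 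Plugging these two identities in, the cross terms involving $a^{l-1}b$ and $ab^{l-1}$ vanish, while the $b^l$ term produces $b^l \omega(\sigma_0, \sigma_l)$, yielding the desired closed form $a^l + b^l \omega(\sigma_0, \sigma_l)$.

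There is no real obstacle here: the result is essentially the diagonalization of the $k \times k$ transfer matrix $T(\sigma,\sigma') = a + b\omega(\sigma,\sigma') = (a-b)J + bk\cdot I_k \cdot k^{-1}\cdots$, whose eigenvalues are $a$ (on the all-ones direction, with multiplicity $1$) and $b$ (on the orthogonal complement, with multiplicity $k-1$). The inductive presentation sketched above is the cleanest way to package this without introducing the matrix formalism, and is entirely elementary.
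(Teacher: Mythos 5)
Your proposal is correct and takes essentially the same route as the paper: the paper first observes that any proper-subset product of $\omega$'s has zero conditional mean (your vanishing cross terms) and then proves $\mathbb{E}_{\sigma\sim\nu}[\prod_{i=1}^{l}\omega(\sigma_{i-1},\sigma_i)\mid\sigma_0,\sigma_l]=\omega(\sigma_0,\sigma_l)$ by the same induction, conditioning on the penultimate vertex and using the composition identity $\mathbb{E}_{\sigma\sim\nu}[\omega(\sigma_0,\sigma)\omega(\sigma,\sigma_l)]=\omega(\sigma_0,\sigma_l)$, so your argument merely folds these two steps into a single induction. The transfer-matrix aside is slightly garbled but immaterial; the proof itself has no gap.
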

\begin{proof}
    By independence, we see that $\mathbb E_{\sigma \sim \nu} \big[ \prod_{i \in I} \omega(\sigma_{i-1},\sigma_i) \mid \sigma_0, \sigma_l \big] = 0$ if $I \subsetneq [l]$. Thus,
    \begin{align*}
        \mathbb{E}_{\sigma \sim \nu} \Big[ \prod_{i=1}^{l} \Big( a+b\omega(\sigma_{i-1},\sigma_i) \Big) \mid \sigma_0, \sigma_l \Big] = a^l + b^l \mathbb{E}_{\sigma \sim \nu}\Big[ \prod_{i=1}^{l} \omega(\sigma_{i-1},\sigma_i) \mid \sigma_0, \sigma_l \Big] \,.
    \end{align*}
    It remains to prove that
    \begin{equation}\label{eq-finalgoal1-sign-of-path-cycle-conditioned-on-endpoints}
        \mathbb{E}_{\sigma \sim \nu} \Big[ \prod_{i=1}^{l} \omega(\sigma_{i-1},\sigma_i) \mid \sigma_0, \sigma_l \Big] = \omega(\sigma_0, \sigma_l)\,.
    \end{equation}
    We shall show \eqref{eq-finalgoal1-sign-of-path-cycle-conditioned-on-endpoints} by induction. The case $l=1$ follows immediately. Now we assume that \eqref{eq-finalgoal1-sign-of-path-cycle-conditioned-on-endpoints} holds for $l$. Then we have
    \begin{align*}
        &\mathbb{E}_{\sigma \sim \nu} \Big[ \prod_{i=1}^{l+1} \omega(\sigma_{i-1},\sigma_i) \mid \sigma_0, \sigma_{l+1} \Big]\\
        = & \mathbb{E}_{\sigma \sim \nu}\Big[\omega(\sigma_l,\sigma_{l+1})\mathbb{E}_{\sigma \sim \nu} \Big[ \prod_{i=1}^{l} \omega(\sigma_{i-1},\sigma_i) \mid \sigma_0, \sigma_l,\sigma_{l+1} \Big] \mid \sigma_0,\sigma_{l+1}\Big]\\
        = & \mathbb{E}_{\sigma \sim \nu}\Big[ \omega(\sigma_l,\sigma_{l+1}) \omega(\sigma_0,\sigma_l) \mid \sigma_0,\sigma_{l+1}\Big] = \omega(\sigma_0,\sigma_{l+1})\,,
    \end{align*}
    which completes the induction procedure. 
\end{proof}

\begin{lemma}{\label{lem-leaf-cancellation}}
    For $H \subset S$ with $\mathcal L(S) \not\subset V(H)$, we have 
    \begin{equation}{\label{eq-leaf-cancellation}}
        \mathbb E_{\sigma\sim\nu}\Big[ \prod_{ (i,j) \in E(S) \setminus E(H) } \omega(\sigma_i,\sigma_j) \mid \{ \sigma_u: u \in V(H) \} \Big] = 0 \,.
    \end{equation}
\end{lemma}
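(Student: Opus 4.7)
The plan is to exploit a leaf of $S$ that lies outside $V(H)$ and integrate out its label first. By hypothesis there exists $v \in \mathcal L(S)\setminus V(H)$. Since $v$ is a leaf of $S$, it is incident to a unique edge $(u,v)\in E(S)$. Because $v\notin V(H)$, this edge cannot belong to $E(H)$ (every edge of $H$ has both endpoints in $V(H)$), so $(u,v)\in E(S)\setminus E(H)$. Moreover, again since $v$ is a leaf, the variable $\sigma_v$ appears in exactly one of the factors of the product on the left-hand side of \eqref{eq-leaf-cancellation}, namely in $\omega(\sigma_u,\sigma_v)$.

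Since $v\notin V(H)$, the coordinate $\sigma_v$ is not part of the conditioning $\{\sigma_u : u\in V(H)\}$, and under $\nu$ the coordinates are i.i.d.\ uniform on $[k]$. The plan is therefore to apply the tower property by conditioning on all coordinates except $\sigma_v$, factor out everything that does not depend on $\sigma_v$, and compute the inner expectation
\[
\mathbb E_{\sigma_v\sim\operatorname{Unif}[k]}\bigl[\omega(\sigma_u,\sigma_v)\bigr]
= \tfrac{1}{k}\bigl((k-1)\cdot 1 + (-1)\cdot(k-1)\bigr)=0,
\]
where the $(k-1)$ comes from the single choice $\sigma_v=\sigma_u$ (giving the value $k-1$) and the $-1$'s come from the $k-1$ remaining values of $\sigma_v$ (each giving $-1$). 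Plugging this back gives \eqref{eq-leaf-cancellation}.

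There is essentially no main obstacle here; the only subtlety is the bookkeeping observation that a leaf $v\in\mathcal L(S)\setminus V(H)$ necessarily corresponds to an edge in $E(S)\setminus E(H)$ in which $\sigma_v$ appears only once, after which the cancellation $\sum_{c\in[k]}\omega(x,c)=0$ finishes the argument.
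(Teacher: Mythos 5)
Your proof is correct and follows essentially the same route as the paper's: pick a leaf $v\in\mathcal L(S)\setminus V(H)$, observe that $\sigma_v$ is not in the conditioning set and enters the product only through the single factor $\omega(\sigma_v,\sigma_w)$ for its unique incident edge $(v,w)\in E(S)\setminus E(H)$, and then integrate out $\sigma_v$ first, using $\mathbb E_{\sigma_v}[\omega(\sigma_v,\sigma_w)]=0$. The only cosmetic difference is that you spell out the arithmetic $\frac{1}{k}\bigl((k-1)-( k-1)\bigr)=0$, which the paper leaves implicit.
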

\begin{proof}
    Denote $v \in \mathcal L(S) \setminus V(H)$ and $(v,w) \in E(S)$. Define $\sigma_{U}$ and $\sigma_{\setminus U}$ to be the restriction of $\sigma$ on $U$ and on $[n] \setminus U$, respectively. Also define $\nu_U$ and $\nu_{\setminus U}$ to be the restriction of $\nu$ on $U$ and on $[n] \setminus U$, respectively. Then we have (let $\mathtt V=V(H)$)
    \begin{align*}
        & \mathbb E_{\sigma\sim\nu}\Big[ \prod_{ (i,j) \in E(S) \setminus E(H) } \omega(\sigma_i,\sigma_j) \mid \{ \sigma_u: u \in V(H) \} \Big] \\
        =\ & \mathbb E_{ \sigma_{([n] \setminus \mathtt V)} \sim \nu_{([n] \setminus\mathtt V)} }\Big[ \prod_{ (i,j) \in E(S) \setminus E(H) } \omega(\sigma_i,\sigma_j) \Big] \\
        =\ & \mathbb E_{ \sigma_{([n] \setminus (\mathtt V \cup \{v\}))} \sim \nu_{([n] \setminus (\mathtt V \cup \{v\}))} }  \mathbb E_{ \sigma_{\{v\}} \sim \nu_{ \{v\} } }\Big[ \omega(\sigma_v,\sigma_w) \prod_{ (i,j) \in E(S) \setminus (E(H)\cup \{v,w\}) } \omega(\sigma_i,\sigma_j) \Big] = 0 \,. \qedhere
    \end{align*}
\end{proof}

\section{Supplementary proofs}{\label{sec:supp-proofs}}

\subsection{Proof of Lemma~\ref{lem-bound-low-deg-Adv-conditional}}{\label{subsec:proof-lem-3.5}}

In this subsection we provide the postponed proof of Lemma~\ref{lem-bound-low-deg-Adv-conditional}. For two graphs $S_1,S_2 \Subset \mathcal{K}_n$, define the polynomial $\phi_{S_1,S_2}$ associated with $S_1,S_2$ by 
\begin{equation}\label{eq-def-phi-S1,S2}
    \phi_{S_1,S_2}\big(\{A_{i,j}\},\{B_{i,j}\}\big)=\big(q(1-q)\big)^{-\frac{|E(S_1)|+|E(S_2)|}{2}}\prod_{(i,j)\in E(S_1)}\overline{A}_{i,j}\prod_{(i,j)\in E(S_2)}\overline{B}_{i,j} \,,
\end{equation}
where $\overline{A}_{i,j}=A_{i,j}-q,\overline{B}_{i,j}=B_{i,j}-q$ for all $(i,j)\in \operatorname{U}$. In particular, $\phi_{\emptyset,\emptyset}\equiv 1$. It can be easily checked that $\mathcal O_D=\{ \phi_{S_1,S_2}: |E(S_1)|+|E(S_2)| \leq D \}$ constitutes a standard orthogonal basis of $\mathcal P_{D}$ under $\Qb$. In addition, we say a polynomial \(\phi_{S_1,S_2}\in \mathcal O_D\) is \emph{admissible} if both \(S_1\) and \(S_2\) are admissible graphs. Furthermore, we define $\mathcal O_D'\subset \mathcal O_D$ as the set of admissible polynomials in $\mathcal O_D$, and define \(\mathcal P_{D}' \subset \mathcal P_{D}\) as the linear subspace spanned by polynomials in $\mathcal O_D'$. It has been shown in \cite[Proposition~3.4]{DDL23+} that for any $f \in \mathcal P_{D}$, there exists $f'\in \mathcal P_{D}'$ such that $\mathbb E_{\Qb}[(f')^2] \leq 8 \mathbb E_{\Qb}[f^2]$ and $f'=f$ a.s. under both $\overline{\Pb}$ and $\overline{\Pb}_*$. Thus, we get that
\begin{align}{\label{eq-reduce-to-admissible}}
    \mathsf{Adv}_{\leq D}\Big( \tfrac{ \mathrm{d}\overline{\Pb}(\cdot \mid \pi_*(1)=i) }{ \mathrm{d}\Qb } \Big) = \sup_{ f \in \mathcal P_{D} } \frac{ \mathbb E_{\overline{\Pb}}[ f \mid \pi_*(1)=1 ] }{ \sqrt{ \mathbb E_{\Qb}[f^2] } }  \leq 2\sqrt{2} \sup_{ f \in \mathcal P'_{D} } \frac{ \mathbb E_{\overline{\Pb}}[ f \mid \pi_*(1)=1 ] }{ \sqrt{ \mathbb E_{\Qb}[f^2] } } \,.
\end{align}
Thus, it suffices to show the right hand side of \eqref{eq-reduce-to-admissible} is bounded by $O_{\delta}(1)$. Similar as \eqref{eq-low-deg-Adv-trandsform}, we have
\begin{equation*}
    \sup_{ f \in \mathcal P'_{D} } \frac{ \mathbb E_{\overline{\Pb}}[ f \mid \pi_*(1)=1 ]^2 }{ \mathbb E_{\Qb}[f^2] } = \sum_{ \phi_{S_1,S_2} \in \mathcal O'_D } \mathbb E_{\overline{\Pb}} \big[ \phi_{S_1,S_2}  \mid \pi_*(1) = i \big]^2 \,.
\end{equation*}
Without loss of generality, in the following we will only show that
\begin{equation}{\label{eq-final-goal-conditional-low-degree-Adv}}
    \sum_{ \phi_{S_1,S_2} \in \mathcal O'_D } \mathbb E_{\overline{\Pb}} \big[ \phi_{S_1,S_2}  \mid \pi_*(1) = 1 \big]^2 = O_{\delta}(1) \,.
\end{equation}
For a deterministic permutation $\pi\in\mathfrak S_n$, we use \(\Pb_\pi\) and \(\overline{\Pb}_\pi\) to represent \(\Pb_*(\cdot\mid \pi_*=\pi)\) and \(\overline{\Pb}_*(\cdot\mid \pi_*=\pi)\), respectively. For $S_1,S_2 \Subset \mathcal{K}_n$ with $|E(S_1)|,|E(S_2)| \leq D$, define
\begin{equation}{\label{eq-def-mathtt-F}}
    \mathtt F(S_1,S_2) = \sum_{ \substack{ \mathbf H_0\in \mathcal H \\ \mathbf H_0\hookrightarrow S_i,i=1,2 } } n^{-\frac{|V(S_1)|+|V(S_2)|}{2}} \rho^{|E(\mathbf H_0)|} d^{-6( |E(S_1)|+|E(S_2)|-2|E(\mathbf H_0)| )} \operatorname{Aut}(\mathbf H_0) \,, 
\end{equation}
where the notation $\mathbf H\hookrightarrow S$ means that $\mathbf H$ can be embedded into $S$ as a subgraph. The key of our proof is the following estimation. 

\begin{proposition}{\label{prop-complicate-bound}}
    We have
    \begin{equation}{\label{eq-complicate-bound}}
        \big| \mathbb E_{\overline{\Pb}}[ \phi_{S_1,S_2} \mid \pi_*(1)=1 ] \big| \leq [1+o(1)] \big( \mathbf 1_{ 1 \not \in V(S_1) \cap V(S_2) } + n \mathbf 1_{ 1 \in V(S_1) \cap V(S_2) } \big) \cdot \mathtt F(S_1,S_2) \,.
    \end{equation}
\end{proposition}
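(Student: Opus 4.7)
My plan is to mirror the unconditional low-degree-advantage computation of \cite{DDL23+}, tracking the extra effect of conditioning on $\pi_*(1)=1$. The starting point is the decomposition
\begin{align*}
\mathbb E_{\overline{\Pb}}\big[\phi_{S_1,S_2} \mid \pi_*(1)=1\big] \;=\; \frac{1}{(n-1)!} \sum_{\pi \in \mathfrak S_n : \pi(1)=1} \mathbb E_{\overline{\Pb}_\pi}\big[\phi_{S_1,S_2}\big].
\end{align*}
I would then compute $\mathbb E_{\overline{\Pb}_\pi}[\phi_{S_1,S_2}]$ by the same route as in \cite{DDL23+}: under $\Pb_\pi$ the only cross-correlations between the $\overline A_{i,j}$'s and $\overline B_{i',j'}$'s are along the pairs $(i,j)\leftrightarrow(\pi(i),\pi(j))$, so expanding the product splits it into a contribution from a ``matched'' common subgraph $\mathbf H \subset S_1$ with $\pi(\mathbf H) \subset S_2$, and a contribution from the unmatched edges. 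On the unconditioned measure the unmatched edges would kill the expectation; under $\overline{\Pb}_\pi$ the admissibility conditioning on $\mathcal E$ deforms this and produces precisely the correction factor $d^{-6(|E(S_1)|+|E(S_2)|-2|E(\mathbf H)|)}$ that ends up in the definition \eqref{eq-def-mathtt-F} of $\mathtt F(S_1,S_2)$. I would import this step verbatim from the corresponding lemma in \cite{DDL23+}; only the permutation counting has to be redone.

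The second step is to regroup the sum by the isomorphism class $\mathbf H_0 \in \mathcal H$ of the matched subgraph. For fixed $\mathbf H_0$ and fixed injective embeddings $\iota_1:\mathbf H_0\hookrightarrow S_1$ and $\iota_2:\mathbf H_0\hookrightarrow S_2$, the number of permutations $\pi$ with $\pi(1)=1$ such that $\pi\circ\iota_1=\iota_2$ on $V(\mathbf H_0)$ equals either $(n-|V(\mathbf H_0)|-1)!$ or $(n-|V(\mathbf H_0)|)!$ depending on whether the constraint ``$\pi(1)=1$'' is already incorporated into the partial assignment $\iota_2\circ\iota_1^{-1}$; dividing by $(n-1)!$ and $\operatorname{Aut}(\mathbf H_0)$ reproduces the $n^{-(|V(S_1)|+|V(S_2)|)/2}$ and $\operatorname{Aut}(\mathbf H_0)$ factors of $\mathtt F$ together with one additional factor of either $\tfrac{1}{n}$ or $1$ (relative to the unconditional computation, which gets the factor $\tfrac{1}{n}$).

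The case split in the proposition then emerges from this counting. If $1 \notin V(S_1)\cap V(S_2)$, then the constraint $\pi(1)=1$ is essentially orthogonal to any embedding constraint: embeddings $\iota_1,\iota_2$ that place vertex $1$ inside $\mathbf H_0$ cannot occur simultaneously, so we get the same per-term contribution as in the unconditional sum, giving the bound $[1+o(1)]\mathtt F(S_1,S_2)$. If instead $1 \in V(S_1)\cap V(S_2)$, then we can choose $\mathbf H_0$ to embed so that $\iota_1(v)=1$ and $\iota_2(v)=1$ for some $v\in V(\mathbf H_0)$; now $\pi(1)=1$ is \emph{already} forced by the embedding, so the permutation count is $(n-|V(\mathbf H_0)|)!$ rather than $(n-|V(\mathbf H_0)|-1)!$, producing the extra factor of $n$. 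I would also need to verify that the ``off-diagonal'' contributions (embeddings not using vertex $1$, or using it only on one side) are dominated in this second case.

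The main obstacle I expect is Step two: justifying the admissibility correction $d^{-6(|E(S_1)|+|E(S_2)|-2|E(\mathbf H)|)}$ in the conditional setting. The unconditional version of this in \cite{DDL23+} relies on a delicate polynomial truncation replacing $\phi_{S_1,S_2}$ by an $\mathcal E$-compatible variant $\phi'_{S_1,S_2}$ and on concentration estimates for edge-counts under $\overline{\Pb}_\pi$; these estimates need to be checked to remain valid when we restrict the average to $\pi$'s fixing $1$, but by symmetry of the remaining coordinates this should indeed only cost a $[1+o(1)]$ factor. A secondary bookkeeping hazard is to make sure the factor $n$ in the second case is exact and does not carry additional logarithmic or combinatorial corrections coming from vertex $1$ belonging to many different isomorphism classes simultaneously; separating embeddings that ``touch $1$'' from those that do not, and bounding each contribution against $\mathtt F(S_1,S_2)$, should handle this cleanly.
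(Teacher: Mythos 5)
Your proposal takes essentially the same approach as the paper: decompose into the average over $\pi$ fixing $1$, import the per-permutation bound $|\mathbb E_{\overline{\Pb}_\pi}[\phi_{S_1,S_2}]| \leq M(S_0,S_1,S_2)$ from \cite{DDL23+}, regroup by the matched subgraph $S_0 = S_1 \cap \pi^{-1}(S_2)$, and handle the case split on whether $1 \in V(S_1) \cap V(S_2)$ through the permutation count. The only difference worth noting is that the paper avoids your subcase analysis of ``off-diagonal'' contributions in the second case by simply using the crude uniform bound $\mathsf{Enum} \leq [1+o(1)]\operatorname{Aut}(S_0) n! \cdot n^{-|V(S_0)|}$, which absorbs all subcases at the cost of the extra factor of $n$ everywhere.
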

\begin{proof}
    For $S_0 \hookrightarrow S_1,S_2$, define
    \begin{equation}{\label{eq-def-M-S-0,1,2}}
        M(S_0,S_1,S_2) = \rho^{|E(S_0)|} n^{ -\frac{|V(S_1)|+|V(S_2)|}{2}+|V(S_0)| } D^{-7( |E(S_1)|+|E(S_2)|-2|E(S_0)| )} \,.
    \end{equation}
    Using \cite[Equation~(3.29)]{DDL23+}, for any $S_1,S_2 \Subset \mathcal{K}_n$ with at most $D$ edges and any permutation $\pi\in\mathfrak S_n$, we have that (denote $S_0 = S_1 \cap S_2$ below)
    \begin{equation}{\label{eq-bound-conditional-moment}}
        \big| \mathbb E_{\overline{\Pb}_\pi}[ \phi_{S_1,S_2} ] \big| \leq M(S_0,S_1,S_2) \,.
    \end{equation}
    Thus, note that
    \begin{align}
        \big| \mathbb E_{\overline{\Pb}}[ \phi_{S_1,S_2} \mid \pi_*(1)=1 ] \big| &= \frac{1}{(n-1)!} \Big| \sum_{ \pi\in\mathfrak S_n,\pi(1)=1 } \mathbb E_{\overline{\Pb}_{\pi}}[ \phi_{S_1,S_2} ] \Big| \nonumber \\
        &\leq \frac{1}{(n-1)!} \sum_{ \pi\in\mathfrak S_n,\pi(1)=1 } \big| \mathbb E_{\overline{\Pb}_{\pi}}[ \phi_{S_1,S_2} ] \big| \,, \label{eq-conditional-moment-relax-1}
    \end{align}
    we can group the permutations $\pi\in\mathfrak S_n$ according to the realization of $S_0=S_1 \cap \pi^{-1}(S_2)$ and obtain that $\big| \mathbb E_{\overline{\Pb}}[ \phi_{S_1,S_2} \mid \pi_*(1)=1 ] \big|$ is bounded by
    \begin{equation}
        \frac{1}{(n-1)!} \sum_{ \substack{ S_0,S'_0: S_0 \cong S_0' \\ S_0 \subset S_1, S_0' \subset S_2 } } \#\{ \pi\in\mathfrak S_n: \pi(1)=1, \pi(S_0)=S_0', S_0 = S_1 \cap \pi^{-1}(S_2) \} M(S_0,S_1,S_2) \,. \label{eq-conditional-moment-relax-2}
    \end{equation}
    For the case when $1 \not\in V(S_1) \cap V(S_2)$, we must have $1 \not\in V(S_0)$ and thus
    \begin{equation}{\label{eq-def-enum}}
        \mathsf{Enum} := \#\{ \pi\in\mathfrak S_n: \pi(1)=1, \pi(S_0)=S_0' \}
    \end{equation}
    is upper-bounded by
    \begin{equation}{\label{eq-enum-bound}}
        \operatorname{Aut}(S_0)(n-1-|V(S_0)|)! = [1+o(1)] \operatorname{Aut}(S_0)(n-1)! \cdot n^{-|V(S_0)|} \,.
    \end{equation}
    Thus, we have that \eqref{eq-conditional-moment-relax-2} is further upper-bounded by (up to a factor of $(1+o(1))$)
    \begin{align*}
        &\sum_{\substack{S_0,S_0':S_0\cong S_0'\\S_0\subset S_1,S_0'\subset S_2}}n^{-|V(S_0)|} \operatorname{Aut}(S_0)M(S_0,S_1,S_2)\\
        =&\sum_{\substack{\mathbf H_0\in \mathcal H\\\mathbf H_0\xhookrightarrow{} S_i,i=1,2}} \frac{\operatorname{Aut}(\mathbf H_0)}{n^{|V(\mathbf H_0)|}} M(\mathbf H_0,S_1,S_2)\cdot \#\{(S_0,S_0'):S_0\subset S_1,S_0'\subset S_2,S_0\cong S_0'\cong \mathbf H_0\}\\
        \le&\sum_{\substack{\mathbf H_0\in \mathcal H\\\mathbf H_0\xhookrightarrow{} S_i,i=1,2}} \frac{\operatorname{Aut}(\mathbf H_0)}{n^{|V(\mathbf H_0)|}} M(\mathbf H_0,S_1,S_2)\cdot D^{|E(S_1)|+E(S_2)-2|E(\mathbf H_0)|}\,,
    \end{align*}
    where the last inequality follows from Lemma~\ref{lemma-facts-graphs} (v) and the assumption that $|E(S_1)|$ and $|E(S_2)|$ are bounded by $D$. For the case $1 \in V(S_1) \cap V(S_2)$, we have $\mathsf{Enum} \leq [1+o(1)] \operatorname{Aut}(S_0) n! \cdot n^{-|V(S_0)|}$, and thus we get an additional factor of $n$ in the final estimation. This completes the proof of Proposition~\ref{prop-complicate-bound}.
\end{proof}

We are now ready to establish \eqref{eq-final-goal-conditional-low-degree-Adv}, thus completing the proof of Lemma~\ref{lem-bound-low-deg-Adv-conditional}.

\begin{proof}[Proof of \eqref{eq-final-goal-conditional-low-degree-Adv}]
    We divide the left hand side of \eqref{eq-final-goal-conditional-low-degree-Adv} into two parts: the first part consists of those $S1,S2$ such that $1 \not\in V(S_1) \cap V(S_2)$, and the second part consists of those with $1 \in V(S_1) \cap V(S_2)$. We bound the first part
    \begin{align}
        \sum_{ \substack{ \phi_{S_1,S_2} \in \mathcal O'_D \\ 1 \not\in V(S_1) \cap V(S_2) } } \mathbb E_{\overline{\Pb}}\big[ \phi_{S_1,S_2} \mid \pi_*(1)=1 \big]^2 \label{eq-part-1}
    \end{align}
    via Proposition~\ref{prop-complicate-bound} by
    \begin{align}
        &\ [1+o(1)] \sum_{\substack{S_1,S_2\Subset\mathcal{K}_n \text{admissible}\\|E(S_1)|+|E(S_2)|\le D}} \Bigg( \sum_{\substack{\mathbf H_0\in\mathcal H\\\mathbf H_0\xhookrightarrow{} S_i,i=1,2}} \frac{ \rho^{|E(\mathbf H_0)|} \operatorname{Aut}(\mathbf H_0) }{ D^{6(|E(S_1)|+|E(S_2)|-2|E(\mathbf H_0)|)} n^{\frac{1}{2}(|V(S_1)|+|V(S_2)|)}  } \Bigg)^2 \nonumber \\
        \leq &\ [1+o(1)] \sum_{\substack{S_1,S_2\Subset \mathcal{K}_n\text{admissible}\\|E(S_1)|+|E(S_2)|\le D}} \Bigg( \sum_{\substack{\mathbf H_0\in\mathcal H\\\mathbf H_0\xhookrightarrow{} S_i,i=1,2}} \frac{ \rho^{2|E(\mathbf H_0)|} \operatorname{Aut}(\mathbf H_0)^2 }{ D^{6(|E(S_1)|+|E(S_2)|-2|E(\mathbf H_0)|)} n^{|V(S_1)|+|V(S_2)|}  } \Bigg) \nonumber \\
        & \times \Bigg( \sum_{\substack{\mathbf H_0\in\mathcal H\\\mathbf H_0\xhookrightarrow{} S_i,i=1,2}} D^{-6(|E(S_1)|+|E(S_2)|-2|E(\mathbf H_0)|)} \Bigg) \,, \label{eq-part-1-second-bracket}
    \end{align}
    where the last inequality comes form Cauchy-Schwartz inequality. It is straightforward to check that when $|E(S_1)|+|E(S_2)|\le d$,
    \begin{align}
        & \sum_{\substack{\mathbf H_0\in\mathcal H\\\mathbf H_0\xhookrightarrow{} S_i,i=1,2}}d^{-6(|E(S_1)|+|E(S_2)|-2|E(\mathbf H_0)|)} \nonumber \\
        \le &\ \Big(\sum_{\substack{\mathbf H_0\in\mathcal H\\\mathbf H_0\xhookrightarrow{} S_1}}d^{-6(|E(S_1)|-|E(\mathbf H_0)|)}\Big)^2 \le \Big(\sum_{k=0}^d d^{-6k}\cdot d^k\Big)^2\leq 2 \,, \label{eq-xxx}
    \end{align}
    where the second inequality follows from Lemma~\ref{lemma-facts-graphs} (v). Plugging \eqref{eq-xxx} into \eqref{eq-part-1}, we get that \eqref{eq-part-1} is bounded by a constant factor times
    \begin{align} 
        \sum_{\substack{\mathbf H_0\in\mathcal H\\\mathbf H_0\xhookrightarrow{} S_i,i=1,2}} \rho^{2|E(\mathbf H_0)|} \operatorname{Aut}(\mathbf H_0)^2 \sum_{\substack{S_1,S_2\Subset\mathcal{K}_n \text{admissible}\\|E(S_1)|+|E(S_2)|\le D}} \frac{n^{-|V(S_1)|-|V(S_2)|}}{ D^{6(|E(S_1)|+|E(S_2)|-2|E(\mathbf H_0)|)} } \,. \label{eq-A.13}
    \end{align}
    By denoting $S_1 \cong \mathbf{S}_1$ and $S_2 \cong \mathbf{S}_2$ with $\mathbf S_1,\mathbf S_2\in \mathcal H$, the right hand side of \eqref{eq-A.13} reduces to
    \begin{align}
        & \sum_{ \mathbf{H}_0 \in \mathcal H} \rho^{2|E(\mathbf H_0)|} \operatorname{Aut}(\mathbf H_0)^2 \sum_{ \substack{ \mathbf S_1,\mathbf S_2\in \mathcal H\text{ admissible}\\\mathbf H_0\xhookrightarrow{} \mathbf{S}_i,i=1,2 \\ |E(\mathbf S_1)|+|E(\mathbf S_2)|\le d } } \frac{ \# \{ S_1,S_2 : S_1 \cong \mathbf{S}_1, S_2 \cong \mathbf{S}_2 \} }{ D^{6(|E(\mathbf S_1)|+|E(\mathbf S_2)|-2|E(\mathbf H_0)|)} n^{|V(\mathbf S_1)|+|V(\mathbf S_2)|} } \nonumber \\
        \leq & \sum_{ \mathbf{H}_0 \in \mathcal H\operatorname{admissible} } \rho^{2|E(\mathbf H_0)|} \sum_{ \substack{ \mathbf S_1,\mathbf S_2\in \mathcal H,\mathbf H_0\xhookrightarrow{} \mathbf{S}_i,i=1,2 \\ |E(\mathbf S_1)|+|E(\mathbf S_2)|\le d } } \frac{ \operatorname{Aut}(\mathbf H_0)^2 }{ \operatorname{Aut}(\mathbf S_1) \operatorname{Aut}(\mathbf S_2) D^{ 6(|E(\mathbf S_1)|+|E(\mathbf S_2)|-2|E(\mathbf H_0)|) } } \,, \label{eq-A.13-relax}
    \end{align}
    where the inequality follows from
    \begin{align*}
        \# \{ S_1,S_2 : S_1 \cong \mathbf{S}_1, S_2 \cong \mathbf{S}_2 \} \leq [1+o(1)] \cdot \frac{ n^{|V(\mathbf S_1)|} }{ \operatorname{Aut}(\mathbf S_1) } \cdot \frac{ n^{|V(\mathbf S_2)|} }{ \operatorname{Aut}(\mathbf S_2) } \,.
    \end{align*}
    Using Lemma~\ref{lemma-facts-graphs} (iii), we have $\frac{ \operatorname{Aut}(\mathbf H_0)^2 }{ \operatorname{Aut}(\mathbf S_1) \operatorname{Aut}(\mathbf S_2) } \leq d^{ 2(|E(\mathbf S_1)|+|E(\mathbf S_2)|-2|E(\mathbf H_0)|) }$, and thus the right hand side of \eqref{eq-A.13-relax} is upper-bounded by
    \begin{equation}
    \begin{aligned}
        &\ \sum_{ \mathbf{H}_0\in \mathcal H \operatorname{admissible} } \rho^{2|E(\mathbf H_0)|} \sum_{ \substack{ \mathbf S_1,\mathbf S_2\subset \mathcal H, \mathbf H_0\xhookrightarrow{} \mathbf{S}_i,i=1,2 \\ |E(\mathbf S_1)|+|E(\mathbf S_2)|\le d } } d^{ -4(|E(\mathbf S_1)|+|E(\mathbf S_2)|-2|E(\mathbf H_0)|) } \\
        \leq&\ 4 \sum_{ \substack{\mathbf{H}_0\in \mathcal H \operatorname{admissible}\\ |E(\mathbf H_0)|\le d} } \rho^{2|E(\mathbf H_0)|} \,,
    \end{aligned}
    \end{equation}
    where we used the fact that (similar to \eqref{eq-xxx})
    \begin{align*}
        &\ \sum_{ \substack{ \mathbf S_1,\mathbf S_2\in \mathcal H, \mathbf H_0\xhookrightarrow{} \mathbf{S}_i,i=1,2 \\ |E(\mathbf S_1)|+|E(\mathbf S_2)|\le d } } d^{ -4(|E(\mathbf S_1)|+|E(\mathbf S_2)|-2|E(\mathbf H_0)|) }\\
        \le&\  \Bigg(\sum_{\substack{\mathbf H_0\xhookrightarrow{} \mathbf S,|E(\mathbf S)|\le d}}d^{-4(|E(\mathbf S)|-|E(\mathbf H)|)}\Bigg)^2\le \Bigg(\sum_{k=0}^dd^{-4k}\cdot (2d)^{2k}\Bigg)^2\le 4\,.
    \end{align*}
    In conclusion, we have shown that
    \begin{align}
        \eqref{eq-A.13-relax} \leq O_{\delta}(1) \cdot \sum_{ \mathbf{H}_0 \in \mathcal H\operatorname{admissible} } \rho^{2|E(\mathbf H_0)|} \rho^{2|E(\mathbf H_0)|} = O_{\delta}(1) \,,
    \end{align}
    where the equality follows from \cite[Lemma~A.3]{DDL23+}. Thus, we have
    \begin{equation}{\label{eq-part-1-bound}}
        \eqref{eq-part-1} = O_{\delta}(1) \,.
    \end{equation}
    We now deal with the second part
    \begin{equation}{\label{eq-part-2}}
        \sum_{ \substack{ \phi_{S_1,S_2} \in \mathcal O'_D \\ 1 \in V(S_1) \cap V(S_2) } } \mathbb E_{\overline{\Pb}}\big[ \phi_{S_1,S_2} \mid \pi_*(1)=1 \big]^2  \,.
    \end{equation}
    Following a similar derivation of \eqref{eq-A.13}, we see that \eqref{eq-part-2} is bounded by a constant factor times
    \begin{align*}
        \sum_{\substack{\mathbf H_0\in\mathcal H\\\mathbf H_0\xhookrightarrow{} S_i,i=1,2}} \rho^{2|E(\mathbf H_0)|} \operatorname{Aut}(\mathbf H_0)^2 \sum_{\substack{S_1,S_2\Subset\mathcal{K}_n \text{admissible} \\ 1\in V(S_1) \cap V(S_2) \\ |E(S_1)|+|E(S_2)|\le D}} \frac{n^{-|V(S_1)|-|V(S_2)|}}{D^{6(|E(S_1)|+|E(S_2)|-2|E(\mathbf H_0)|)}} \times n^2
    \end{align*}
    (Note that in the above there is an additional factor of $n^2$ compared with \eqref{eq-A.13}.) However, due to the restriction $1 \in V(S_1) \cap V(S_2)$, for any fixed $\mathbf S_1,\mathbf S_2 \in \mathcal H$  the enumeration of $(S_1,S_2)$ such that $S_1 \cong \mathbf S_1, S_2 \cong \mathbf S_2$ is upper-bounded by
    \begin{align*}
        \frac{ n^{|V(\mathbf S_1)|-1} }{ \operatorname{Aut}(\mathbf S_1) } \cdot \frac{ n^{|V(\mathbf S_2)|-1} }{ \operatorname{Aut}(\mathbf S_2) } = \frac{ n^{|V(\mathbf S_1)|+|V(\mathbf S_2)|} }{ \operatorname{Aut}(\mathbf S_1) \operatorname{Aut}(\mathbf S_2) } \times n^{-2} \,.
    \end{align*}
    Thus, after canceling the factor of $n^{-2}$ with the factor of $n^2$ above we derive the same upper bound as we obtain on \eqref{eq-part-1}. This shows that
    \begin{equation}{\label{eq-part-2-bound}}
        \eqref{eq-part-2} = O_{\delta}(1) \,.
    \end{equation}
    Combining \eqref{eq-part-1-bound} and \eqref{eq-part-2-bound} leads to the desired result.
\end{proof}

\subsection{Proof of Lemma~\ref{lem-bound-low-deg-Adv-conditional-SBM}}{\label{subsec:proof-lem-3.9}}

Now it remains to prove Lemma~\ref{lem-bound-low-deg-Adv-conditional-SBM}. Recall the definition of $\phi_{S_1,S_2}$ in \eqref{eq-def-phi-S1,S2} (here we will take $q=\tfrac{\lambda s}{n}$). We first show the following estimation. Again, we say a polynomial $\phi_{S_1,S_2} \in \mathcal O_D$ is admissible if both $S_1$ and $S_2$ are admissible graphs. Furthermore, we define $\mathcal O'_D \subset \mathcal O_D$ as the set of admissible polynomials in $\mathcal O_D$, and define $\mathcal P'_D \subset \mathcal P_D$ as the linear subspace spanned by polynomials in $\mathcal O'_D$. Using \cite[Proposition~4.5]{CDGL24+}, we see that for any $f\in \mathcal P_{D}$, there exists some $f'\in \mathcal P_{D}'$ such that $\mathbb E_{\Qb}[(f')^2]\le O_{\delta,k}(1)\cdot \mathbb{E}_{\Qb}[f^2]$ and $f'=f$ a.s. under both $\widetilde{\Pb}_*$ and $\widetilde\Pb$. Thus, we get that
\begin{align}
    \mathsf{Adv}_{\leq D}\Big( \tfrac{ \mathrm{d}\widetilde{\Pb}(\cdot \mid \pi_*(i)=j) }{ \mathrm{d}\Qb } \Big) = \sup_{ f \in \mathcal P_{D} } \frac{ \mathbb E_{\widetilde{\Pb}}[ f \mid \pi_*(i)=j ] }{ \sqrt{ \mathbb E_{\Qb}[f^2] } } \leq O_{\delta,k}(1) \cdot \sup_{ f \in \mathcal P'_{D} } \frac{ \mathbb E_{\widetilde{\Pb}}[ f \mid \pi_*(i)=j ] }{ \sqrt{ \mathbb E_{\Qb}[f^2] } } \,. \nonumber 
\end{align}
Thus, it suffices to show that
\begin{align*}
    \sup_{ f \in \mathcal P'_{D} } \frac{ \mathbb E_{\widetilde{\Pb}}[ f \mid \pi_*(1)=1 ] }{ \sqrt{ \mathbb E_{\Qb}[f^2] } } = \sum_{ \phi_{S_1,S_2} \in \mathcal O'_D } \mathbb E_{ \widetilde{\Pb} } \big[ \phi_{S_1,S_2} \mid \pi_*(i)=j \big]^2 = O_{\delta,k}(1) \,.
\end{align*}
For notational simplicity, in the following we will only show 
\begin{equation}{\label{eq-final-goal-cor-SBM}}
    \sum_{ \phi_{S_1,S_2} \in \mathcal O'_D } \mathbb E_{ \widetilde{\Pb} } \big[ \phi_{S_1,S_2} \mid \pi_*(1)=1 \big]^2 = O_{\delta,k}(1) \,,
\end{equation}
and the general cases for $\pi_*(i)=j$ can be derived in a similar manner. For $H \subset S$, we define $\mathtt N(S,H)$ to be 
\begin{align}
    \mathtt N(S,H) &= \big( \tfrac{D^{28}}{n^{0.1}} \big)^{ \frac{1}{2} (|\mathcal L(S) \setminus V(H)| + \tau(S)-\tau(H)) } (1-\tfrac{\delta}{2})^{|E(S)|-|E(H)|} \,. \label{eq-def-mathtt-N}
\end{align}
We first show the following estimation.
\begin{proposition}{\label{prop-untruncate-expectation-Pb}}
   For all admissible $S_1,S_2\Subset \mathcal K_n$ with $|E(S_1)|,|E(S_2)| \leq D$, we have that $\big| \mathbb{E}_{\widetilde\Pb}[\phi_{S_1,S_2}\mid\pi_*(1)=1] \big|$ is bounded by $O_{\delta,k}(1)$ times (note that in the summation below $H_1,H_2$ may have isolated vertices)
    \begin{align}{\label{eq-bound-expectation-Pb}}
         \sum_{ \substack{ H_1 \subset S_1, H_2 \subset S_2 \\ H_1 \cong H_2 } } \frac{ (\sqrt{\alpha}-\tfrac{\delta}{2})^{|E(H_1)|}  \operatorname{Aut}(H_1) }{ n^{|V(H_1)|-\mathbf{1}_{\{1 \in V(H_1 \cap V(H_2))\}}} }  * \frac{ 2^{|\mathfrak{C}(S_1,H_1)|+|\mathfrak{C}(S_2,H_2)|} \mathtt N(S_1,H_1) \mathtt N(S_2,H_2) }{ n^{\frac{1}{2}(|V(S_1)|+|V(S_2)|-|V(H_1)|-|V(H_2)|)} }  \,.
    \end{align}
\end{proposition}
\begin{proof}
    For each deterministic permutation $\pi \in \mathfrak S_n$ and each labeling $\sigma \in [k]^{n}$, we denote $\widetilde{\Pb}_{\sigma,\pi}=\widetilde{\Pb}( \cdot \mid \sigma_{*}=\sigma, \pi_*=\pi )$, $\widetilde{\Pb}_{\pi}=\widetilde{\Pb}( \cdot \mid \pi_*=\pi )$ and $\widetilde{\Pb}_{\sigma}= \widetilde{\Pb}( \cdot \mid \sigma_*=\sigma)$ respectively. It is clear that 
    \begin{align}
        \Big| \mathbb{E}_{\widetilde\Pb}\big[ \phi_{S_1,S_2} \mid \pi_*(1)=1 \big] \Big| &= \Big|\frac{1}{(n-1)!} \sum_{ \substack{ \pi\in\mathfrak{S}_n \\ \pi(1)=1 } } \mathbb{E}_{\widetilde\Pb_\pi}[\phi_{S_1,S_2}]\Big| \nonumber \\ 
        &\leq \frac{1}{(n-1)!} \sum_{ \substack{ \pi\in\mathfrak{S}_n \\ \pi(1)=1 } } \big| \mathbb{E}_{\widetilde\Pb_\pi}[\phi_{S_1,S_2}] \big| \,. \label{eq-relaxation-1} 
    \end{align}
    For $H\subset S$, we define
    \begin{equation}{\label{eq-def-mathtt-M}}
        \mathtt M(S,H) = \big( \tfrac{D^{8}}{n^{0.1}} \big)^{ \frac{1}{2}( |\mathcal L(S)\setminus V(H)| +\tau(S)-\tau(H) ) } (1-\tfrac{\delta}{2})^{|E(S)|-|E(H)|} \,.
    \end{equation}
    Applying \cite[Lemma~4.10]{CDGL24+}, we have that for all admissible $S_1,S_2\Subset \mathcal K_n$ with at most $D$ edges and for all permutation $\pi$ on $[n]$, denote $H_1 = S_1 \cap \pi^{-1}(S_2)$ and $H_2=\pi(S_1)\cap S_2$. We have that $\big|\mathbb E_{\widetilde\Pb_\pi}[\phi_{S_1,S_2}] \big|$ is bounded by $O_{\delta,k}(1)$ times
    \begin{equation}
    \begin{aligned}
        & (\sqrt{\alpha}-\tfrac{\delta}{2})^{|E(H_1)|} \sum_{ H_1 \ltimes K_1 \subset S_1 } \sum_{ H_2 \ltimes K_2 \subset S_2 } \frac{ \mathtt M(S_1,K_1) \mathtt M(S_2,K_2) \mathtt M(K_1,H_1) \mathtt M(K_2,H_2) }{ n^{ \frac{1}{2}(|V(S_1)|+|V(S_2)|-|V(H_1)|-|V(H_2)|) } }  \,.
    \end{aligned}
    \end{equation}
    Note that we have 
    \begin{equation}{\label{eq-measure-intersection-pattern}}
        \mu\big( \{ \pi: \pi(1)=1, \pi(H_1)=H_2 \} \big) \leq [1+o(1)] \cdot \operatorname{Aut}(H_1) n^{-|V(H_1)|-1+\mathbf 1_{1 \in V(H_1) \cap V(H_2)}} \,. 
    \end{equation}
    Thus, it yields that the right-hand side of \eqref{eq-relaxation-1} is bounded by (up to a $O_{\delta,k}(1)$ factor)
    \begin{align}
        & \sum_{\substack{ H_1,H_2: H_1 \cong H_2 \\ H_1 \subset S_1, H_2 \subset S_2 }} \frac{ \operatorname{Aut}(H_1) (\sqrt{\alpha} -\tfrac{\delta}{2})^{|E(H_1)|} }{ n^{|V(H_1)|-\mathbf 1_{1 \in V(H_1) \cap V(H_2)}} } * \frac{ \mathtt P(S_1,H_1) \mathtt P(S_2,H_2) }{ n^{ \frac{1}{2}(|V(S_1)|+|V(S_2)|-|V(H_1)|-|V(H_2)|) } } \,,  \label{eq-relaxation-2}
    \end{align}
    where (recall \eqref{eq-def-mathtt-M})
    \begin{equation}{\label{eq-def-mathtt-P-SBM}}
        \mathtt P(S,H) = \sum_{H \ltimes K \subset S} \mathtt M(S,K) \mathtt M(K,H) \,.
    \end{equation}
    Using \cite[Claim~4.11]{CDGL24+}, we have that $\mathtt P(S,H) \leq [1+o(1)] \cdot 2^{|\mathfrak{C}(S,H)|} \mathtt N(S,H)$, thus leading to the desired result.
\end{proof}

Now we prove \eqref{eq-final-goal-cor-SBM} formally, thus finishing the proof of Lemma~\ref{lem-bound-low-deg-Adv-conditional-SBM}.

\begin{proof}[Proof of \eqref{eq-final-goal-cor-SBM}]
    By Proposition~\ref{prop-untruncate-expectation-Pb} and Cauchy-Schwartz inequality, \eqref{eq-final-goal-cor-SBM} is upper-bounded by $O_{\delta,k}(1)$ times
    \begin{align}
        & \sum_{\phi_{S_1,S_2} \in \mathcal O'_D} \Big( \sum_{ \substack{ H_1 \subset S_1, H_2 \subset S_2 \\ H_1 \cong H_2 } } n^{0.02|\mathcal I(H_1)|} \mathtt N(S_1,H_1) \mathtt N(S_2,H_2) \Big)  \label{eq-L-leq-D-relaxation-1.2} \times \\
        & \Big( \sum_{ \substack{ H_1 \subset S_1, H_2 \subset S_2 \\ H_1 \cong H_2 } } \frac{ (\sqrt{\alpha}-\tfrac{\delta}{2})^{2|E(H_1)|} \operatorname{Aut}(H_1)^2 }{ n^{2|V(H_1)| - 2\cdot\mathbf 1_{1 \in V(H_1) \cap V(H_2)}+ 0.02|\mathcal I(H_1)| } } \cdot \frac{ 4^{|\mathfrak{C}(S_1,H_1)|+|\mathfrak{C}(S_2,H_2)|} \mathtt N(S_1,H_1) \mathtt N(S_2,H_2) }{ n^{( |V(S_1)|+|V(S_2)|-|V(H_1)|-|V(H_2)| )} } \Big) \,. \nonumber 
    \end{align}
    It was shown in \cite[Equation~(4.35)]{CDGL24+} that the first bracket in \eqref{eq-L-leq-D-relaxation-1.2} is bounded by $O(1)$. Thus, we see that the left hand side of \eqref{eq-final-goal-cor-SBM} is bounded by $O_{\delta,k}(1)$ times
    \begin{align*}
        \sum_{\substack{H_1 \cong H_2, H_1,H_2 \text{ admissible} \\ |E(H_1)|+|E(H_2)| \leq D}} & \frac{ (\sqrt{\alpha}-\tfrac{\delta}{2})^{2|E(H_1)|} \operatorname{Aut}(H_1)^2 }{ n^{2|V(H_1)| - 2\cdot\mathbf 1_{1 \in V(H_1) \cap V(H_2)}+ 0.02|\mathcal I(H_1)| } } \cdot \mathtt L(H_1,H_2) \,,
    \end{align*}
    where
    \begin{align*}
        \mathtt L(H_1,H_2) = \sum_{ \substack{ H_1 \subset S_1, H_2 \subset S_2 } } \frac{ 4^{|\mathfrak{C}(S_1,H_1)|+|\mathfrak{C}(S_2,H_2)|} \mathtt N(S_1,H_1) \mathtt N(S_2,H_2) }{ n^{( |V(S_1)|+|V(S_2)|-|V(H_1)|-|V(H_2)| )} } \,.
    \end{align*}
    Using \cite[Equation~(4.37)]{CDGL24+}, we see that $\mathtt L(H_1,H_2)=O_{\delta,k}(1)$. Thus, the left hand side of \eqref{eq-final-goal-cor-SBM} is bounded by $O_{\delta,k}(1)$ times
    \begin{align}
        & \sum_{\substack{H_1 \cong H_2, H_1,H_2 \text{ admissible} \\ |E(H_1)|+|E(H_2)| \leq D}} \frac{ (\sqrt{\alpha}-\tfrac{\delta}{2})^{2|E(H_1)|} \operatorname{Aut}(H_1)^2 }{ n^{2|V(H_1)| - 2\cdot\mathbf 1_{1 \in V(H_1) \cap V(H_2)}+ 0.02|\mathcal I(H_1)| } } \nonumber \\
        =\ & \sum_{\substack{H_1 \cong H_2, H_1,H_2 \text{ admissible} \\ |E(H_1)|+|E(H_2)| \leq D\\ 1 \not \in V(H_1) \cap V(H_2) }} \frac{ (\sqrt{\alpha}-\tfrac{\delta}{2})^{2|E(H_1)|} \operatorname{Aut}(H_1)^2 }{ n^{2|V(H_1)| + 0.02|\mathcal I(H_1)| } }  \label{eq-cor-SBM-part-1-final} \\
        + & \sum_{\substack{H_1 \cong H_2, H_1,H_2 \text{ admissible} \\ |E(H_1)|+|E(H_2)| \leq D\\ 1 \in V(H_1) \cap V(H_2) }} \frac{ (\sqrt{\alpha}-\tfrac{\delta}{2})^{2|E(H_1)|} \operatorname{Aut}(H_1)^2 }{ n^{2|V(H_1)| + 0.02|\mathcal I(H_1)| } } \times n^{2} \,. \label{eq-cor-SBM-part-2-final}
    \end{align}
    Recall that we use $\widetilde{H}_1$ to denote the subgraph of $H_1$ obtained by removing all the vertices in $\mathcal I(H_1)$. In addition, for $|V(H_1)| \leq |V(S_1)| \le 2D$, we have $\operatorname{Aut}(H_1) = \operatorname{Aut}( \widetilde{H}_1 ) \cdot |\mathcal I(H_1)|! \leq (2D)^{|\mathcal I(H_1)|} \operatorname{Aut}( \widetilde{H}_1 )$. Thus, we see that
    Thus, we have that 
    \begin{align*}
        \eqref{eq-cor-SBM-part-1-final} &\leq \sum_{\substack{|E(\mathbf H)| \leq D, \mathcal I(\mathbf H)=\emptyset \\ \mathbf H \textup{ is admissible} }} \sum_{j \geq 0}\ \sum_{ \substack{ (H_1,H_2): \widetilde{H}_1 \cong \widetilde{H}_2 \cong \mathbf H \\ |\mathcal I(H_1)|=|\mathcal I(H_2)|=j } } n^{-0.01j} \cdot \frac{ \mathrm{Aut}(\mathbf H)^2 (2D)^{2j} (\sqrt{\alpha}-\tfrac{\delta}{4})^{2 |E(\mathbf H)| } }{ n^{2(|V(\mathbf H)|+j)} } \\
        &\circeq \sum_{\substack{|E(\mathbf H)| \leq D, \mathcal I(\mathbf H)=\emptyset \\ \mathbf H \textup{ is admissible} }} (\sqrt{\alpha}-\tfrac{\delta}{4})^{2 |E(\mathbf H)| } \leq O_{\delta,k}(1) \,,
    \end{align*}
    As for \eqref{eq-cor-SBM-part-2-final}, due to the restriction $1 \in V(H_1) \cap V(H_2)$, for any fixed $\mathbf H_1,\mathbf H_2 \in \mathcal H$  the enumeration of $(H_1,H_2)$ such that $H_1 \cong \mathbf H_1, H_2 \cong \mathbf H_2$ is upper-bounded by
    \begin{align*}
        \frac{ n^{|V(\mathbf H_1)|-1} }{ \operatorname{Aut}(\mathbf H_1) } \cdot \frac{ n^{|V(\mathbf H_2)|-1} }{ \operatorname{Aut}(\mathbf H_2) } = \frac{ n^{|V(\mathbf H_1)|+|V(\mathbf H_2)|} }{ \operatorname{Aut}(\mathbf H_1) \operatorname{Aut}(\mathbf H_2) } \times n^{-2} \,.
    \end{align*}
    Thus, after canceling the factor of $n^{-2}$ with the factor of $n^2$ above we derive the same upper bound as we obtain on \eqref{eq-cor-SBM-part-2-final}. This yields that $\eqref{eq-cor-SBM-part-2-final}=O_{\delta,k}(1)$ and thus completes our proof.
\end{proof}

\subsection{Proof of Lemma~\ref{lem-bound-L2-norm}}{\label{subsec:proof-lem-4.4}}

In this subsection we present the proof of Lemma~\ref{lem-bound-L2-norm}. The crucial input in our proof of Lemma~\ref{lem-bound-L2-norm} is the following estimation on $\Xi(H)$. 

\begin{lemma}{\label{lem-bound-Xi}}
    Assuming \eqref{eq-assumption-parameter} with $|E(S)|\leq D$ and $\mathcal L(S)=\emptyset$. We have the following estimation:
    \begin{enumerate}
        \item[(1)] If $S=S_1 \cup S_2$ such that $V(S_1) \cap V(S_2)=\emptyset$, then $\Xi(S)=\Xi(S_1) \Xi(S_2)$.
        \item[(2)] If $S$ can be decomposed into different cycles $C_1,\ldots,C_m$, then 
        \begin{align*}
            |\Xi(S)| \leq k^m (1-\delta/2)^{\frac{|E(H)|}{2}} n^{-\frac{|E(H)|}{2}} \,.
        \end{align*}
        \item[(3)] If $\tau(S)>0$ and $\mathfrak C_k(S)=m_k$ for $3 \leq k \leq D$, then
        \begin{align*}
            |\Xi(S)| \leq (10\tau(S))! \cdot (2kD)^{10\tau(H)} (1-\delta/2)^{\frac{|E(H)|}{2}} n^{-\frac{|E(H)|}{2}} \,.
        \end{align*}
    \end{enumerate}
\end{lemma}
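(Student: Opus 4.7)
My plan is a joint induction on the pair $(\tau(S),|E(S)|)$ in lexicographic order, proving the three items together using the recursive definition of $\Xi$ in~\eqref{eq-def-Xi-itrative}. For Part~(1), I would argue by induction on $|E(S)|$. When $S=S_1\cup S_2$ with $V(S_1)\cap V(S_2)=\emptyset$, every subgraph $H\subset S$ decomposes uniquely as $H_1\cup H_2$ with $H_i\subset S_i$, and $\mathcal L(H)=\emptyset$ iff both $\mathcal L(H_i)=\emptyset$. Because the labels $\{\sigma_v\}$ are i.i.d.\ under $\nu$ and the vertex sets are disjoint, both expectations in \eqref{eq-def-Xi-itrative} factor as products over $i=1,2$, and $(\tfrac{\epsilon^2\lambda}{n})^{(|E(S)|-|E(H)|)/2}$ factors in the same way. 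Substituting the induction hypothesis $\Xi(H)=\Xi(H_1)\Xi(H_2)$ shows that the recursion for $\Xi(S)$ splits into the product of the recursions for $\Xi(S_1)$ and $\Xi(S_2)$, giving $\Xi(S_1\cup S_2)=\Xi(S_1)\,\Xi(S_2)$.

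For Part~(2), by Part~(1) it suffices to handle a single cycle $C$ of length $\ell$. The key structural observation is that every non-empty proper subgraph of $C$ is a disjoint union of paths and hence has leaves; thus the only $H\subsetneq C$ with $\mathcal L(H)=\emptyset$ is $H=\emptyset$. The recursion collapses to
\[
\Xi(C) \;=\; -\bigl(\tfrac{\epsilon^{2}\lambda}{n}\bigr)^{\ell/2}\,
\frac{\mathbb E_\nu\bigl[\prod_{(i,j)\in E(C)}\omega(\sigma_i,\sigma_j)\bigr]}{\mathbb E_\nu\bigl[\prod_{(i,j)\in E(C)}\mathtt h(\sigma_i,\sigma_j)\bigr]}.
\]
Both expectations are evaluated by the transfer-matrix method. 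The matrix $\omega$ has eigenvalue $0$ on the all-ones vector and $k$ on its orthogonal complement, so $\mathbb E_\nu[\prod_C\omega]=k-1$. Similarly, the matrix $\mathtt h$ has Perron eigenvalue $\Lambda_+:=\bigl((k-1)\sqrt{1-\epsilon}+\sqrt{1+\epsilon(k-1)}\bigr)/k$ together with a smaller eigenvalue $\bigl(\sqrt{1+\epsilon(k-1)}-\sqrt{1-\epsilon}\bigr)/k$ of multiplicity $k-1$, yielding $\mathbb E_\nu[\prod_C\mathtt h]\geq \Lambda_+^{\ell}$. Combining these estimates with the choice of $\lambda_0$ in \eqref{eq-choice-lambda-0} to control $\epsilon^{2}\lambda/\Lambda_+^{2}$ by $1-\delta/2$ yields $|\Xi(C)|\leq k\,(1-\delta/2)^{\ell/2}n^{-\ell/2}$, and Part~(1) then supplies the $k^m$ factor for a disjoint union of $m$ cycles.

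For Part~(3), I would argue by strong induction on $(\tau(S),|E(S)|)$. Applied to the recursion, split the sum over $H\subsetneq S$ with $\mathcal L(H)=\emptyset$ into the cases $\tau(H)=0$ (so $H$ is a disjoint union of cycles, bounded via Parts~(1)--(2)) and $\tau(H)>0$ (bounded by the induction hypothesis, with strictly smaller excess). For the mixed expectation $\mathbb E_\nu[\prod_H\mathtt h\prod_{S\setminus H}\omega]$, a transfer-matrix argument analogous to Part~(2), applied to the connected components of $S\setminus H$, shows that each edge of $S\setminus H$ contributes at most a factor of $\Lambda_+$ and each closed $\omega$-cycle contributes at most $k-1$. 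Combined with the prefactor $(\epsilon^{2}\lambda/n)^{(|E(S)|-|E(H)|)/2}$ this gives a per-edge geometric decay $(1-\delta/2)^{1/2}n^{-1/2}$. The combinatorial overhead $(10\tau(S))!(2kD)^{10\tau(S)}$ arises from enumerating no-leaf subgraphs $H\subset S$ of a prescribed excess in a graph with $|E(S)|\leq D$, via a contract-and-prune argument that identifies the $2$-core of $H$ and bounds the number of ways to reattach at most $\tau(S)$ extra edges. The main obstacle lies here: one must show simultaneously that (a) the cancellations in the recursion preserve the geometric contraction $(1-\delta/2)^{1/2}$ per edge when the summand is indexed by a complicated no-leaf subgraph, and (b) the enumeration of such subgraphs is controlled by a moderate function of $\tau(S)$ alone and not $|V(S)|$. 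Graphs with positive excess admit many more no-leaf subgraphs than unions of cycles, so the inductive step must carefully track both the decay rate and the enumeration to close the induction cleanly.
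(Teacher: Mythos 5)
Your treatment of Parts (1) and (2) is correct and matches the paper's argument in substance. Part (1) is the same inductive factorization. For Part (2), you observe that the only leafless proper subgraph of a cycle is $\emptyset$ so the recursion collapses to a single ratio, and then compute that ratio; the paper evaluates $\mathtt P(C)$ and $\mathtt Q(C,\emptyset)$ via the endpoint-conditioned chain identity (Lemma~\ref{lem-expectation-over-chain}) rather than by diagonalizing the transfer matrix, but these are two presentations of the same calculation and both give $\mathtt P(C)\geq a^{|E(C)|}$ and $\mathtt Q(C,\emptyset)=k-1$.

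For Part (3), however, there is a genuine gap, and you partly flag it yourself. The quantity that must be bounded in the recursion is the \emph{ratio} $\mathtt Q(S,H)/\mathtt P(S)$, not the mixed expectation $\mathtt Q(S,H)=\mathbb E_\nu[\prod_H\mathtt h\,\prod_{S\setminus H}\omega]$ on its own; since $\mathtt P(S)$ is itself exponentially small in $|E(S)|$, bounding the numerator edge-by-edge (as your ``each edge of $S\setminus H$ contributes at most $\Lambda_+$, each closed $\omega$-cycle contributes $k-1$'' suggests) does not control the ratio. The paper's mechanism is different: it invokes the path-decomposition of Lemma~\ref{lem-revised-decomposition-H-Subset-S} to write $S$ as a disjoint union of $O(\tau(S))$ self-avoiding paths $P_{\mathtt 1},\ldots,P_{\mathtt t}$ with $S\setminus H$ a sub-union of $O(\tau(S)-\tau(H))$ of them, conditions on the labels at all endpoints $\mathtt V=\cup_{\mathtt i}\operatorname{EndP}(P_{\mathtt i})$, applies the chain identity to each path separately so that $\mathtt P(S)$ and $\mathtt Q(S,H)$ become expectations of products of the \emph{same} factors $\bigl(a^{|E(P_{\mathtt i})|}+b^{|E(P_{\mathtt i})|}\omega(u_{\mathtt i},v_{\mathtt i})\bigr)$ except on the $s\leq 3r$ paths of $S\setminus H$, and then replaces $\omega$ by a majorant of the corresponding factor to deduce $\mathtt Q(S,H)\leq k^{s}a^{-(|E(S)|-|E(H)|)}\mathtt P(S)$. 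That pointwise comparison term-by-term inside a single expectation is the crux, and it is missing from your sketch. Your enumeration step also substitutes a ``contract-and-prune / $2$-core'' heuristic for the concrete bound $\#\{H\ltimes S:\mathcal L(H)=\emptyset,\ \tau(H)=\tau(S)-r\}\leq D^{O(r)}$ that the paper takes from Lemma~\ref{lem-enu-Subset-small-graph}; that part is fixable, but the ratio bound is the essential idea and must be supplied before the induction closes.
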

\begin{proof}
    We first prove Item~(1). We will prove by induction on $|E(S)|$. The case $S=\emptyset$ is trivial. Now Suppose $S=S_1 \cup S_2$, for all $H \subset S$ with $\mathcal L(H)=\emptyset$, we have $H=H_1 \cup H_2$ with $H_1 \subset S_1,H_2 \subset S_2$ and $\mathcal L(H_1),\mathcal L(H_2)=\emptyset$. In addition, denote 
    \begin{align*}
        & \mathtt P(S) = \mathbb E_{\sigma\sim\nu}\Big[ \prod_{(i,j) \in E(S)} \mathtt h(\sigma_i,\sigma_j) \Big] \,, \\
        & \mathtt Q(S,H)= \mathbb E_{\sigma\sim\nu}\Big[ \prod_{(i,j) \in E(H)} \mathtt h(\sigma_i,\sigma_j) \prod_{(i,j) \in E(S) \setminus E(H)} \omega(\sigma_i,\sigma_j) \Big] \,.
    \end{align*}
    We have $\mathtt P(S)=\mathtt P(S_1)\mathtt P(S_2)$ and $\mathtt Q(S,H)=\mathtt Q(S_1,H_1) \mathtt Q(S_2,H_2)$. Also, it is straightforward to verify that $\mathtt Q(S,S)=\mathtt P(S)$. Thus, we have 
    \begin{align*}
        -\Xi(S) =& \sum_{ \substack{ H_1 \subsetneq S_1, H_2 \subsetneq S_2 \\ \mathcal L(H_1), \mathcal L(H_2)=\emptyset } } \frac{ \mathtt Q(S_1,H_1) \mathtt Q(S_2,H_2) \Xi(H_1 \sqcup H_2) }{ \mathtt P(S_1)\mathtt P(S_2) } + \sum_{ \substack{ H_2 \subsetneq S_2 \\ \mathcal L(H_2)=\emptyset } } \frac{ \mathtt Q(S_2,H_2) \Xi(S_1 \sqcup H_2) }{ \mathtt P(S_2) } \\
        & + \sum_{ \substack{ H_1 \subsetneq S_1 \\ \mathcal L(H_1) = \emptyset } } \frac{ \mathtt Q(S_1,H_1) \Xi(H_1 \sqcup S_2) }{ \mathtt P(S_1) } \\
        = & \sum_{ \substack{ H_1 \subsetneq S_1, H_2 \subsetneq S_2 \\ \mathcal L(H_1), \mathcal L(H_2)=\emptyset } } \frac{ \mathtt Q(S_1,H_1) \mathtt Q(S_2,H_2) \Xi(H_1)\Xi(H_2) }{ \mathtt P(S_1)\mathtt P(S_2) } + \sum_{ \substack{ H_2 \subsetneq S_2 \\ \mathcal L(H_2)=\emptyset } } \frac{ \mathtt Q(S_2,H_2) \Xi(S_1) \Xi(H_2) }{ \mathtt P(S_2) } \\
        & + \sum_{ \substack{ H_1 \subsetneq S_1 \\ \mathcal L(H_1) = \emptyset } } \frac{ \mathtt Q(S_1,H_1) \Xi(H_1)\Xi(S_2) }{ \mathtt P(S_1) } \\
        = & \ (-\Xi(S_1)) \cdot (-\Xi(S_2)) + \Xi(S_1) \cdot (-\Xi(S_2)) + (-\Xi(S_1)) \cdot \Xi(S_2) = - \Xi(S_1)\Xi(S_2) \,,
    \end{align*}
    where the second equality follows from induction hypothesis and the third equality follows from \eqref{eq-def-Xi-itrative}. This yields Item~(1).

    Now we focus on Item~(2). Based on Item~(1). it suffices to show that for any cycle $C$ with $|E(C)|\leq D$ we have
    \begin{align}{\label{eq-one-cycle}}
        |\Xi(C)| \leq k (1-\delta)^{ \frac{|E(C)|}{2} } n^{ -\frac{|E(C)|}{2} } \,.
    \end{align}
    Note that using \eqref{eq-def-Xi-intial} and \eqref{eq-def-Xi-itrative}, we have that
    \begin{align*}
        \Xi(C) = \tfrac{ \mathtt Q(S,\emptyset) }{ \mathtt P(S) } \cdot \big( \tfrac{\epsilon^2 \lambda}{n} \big)^{ |E(C)|/2  } = \tfrac{ (k-1) }{ \mathtt P(S) } \cdot (\epsilon^2 \lambda)^{ |E(C)|/2 } n^{ - |E(C)|/2 }  \,,
    \end{align*}
    where in the second equality we use Lemma~\ref{lem-expectation-over-chain}. Note that since $\langle \sigma_i, \sigma_j \rangle \in \{ -1,k-1 \}$, we have
    \begin{align*}
        \mathtt h(\sigma_i,\sigma_j) = a\omega(\sigma_i,\sigma_j) + b \,,
    \end{align*}
    where
    \begin{align*}
        a=[1+o(1)] \cdot \tfrac{ (k-1)\sqrt{1-\epsilon} + \sqrt{1+\epsilon(k-1)} }{ k } \mbox{ and } b= [1+o(1)] \cdot \tfrac{ \sqrt{1+\epsilon(k-1)} - \sqrt{1-\epsilon} }{ k } \,.
    \end{align*}
    Thus, using Lemma~\ref{lem-expectation-over-chain} we get that
    \begin{align*}
        \mathtt P(C) = a^{|E(C)|} + (k-1) b^{|E(C)|} \geq a^{|E(C)|} \,.
    \end{align*}
    Plugging this result into the bound on $\Xi(C)$ yields Item~(2) (recall \eqref{eq-choice-lambda-0}).

    Finally we prove Item~(3). Again based on Item~(1), we may assume that $S$ is connected. We prove Item~(3) by induction on $\tau(S)$. The case where $\tau(S)=0$ is proved in Item~(2). Now assume Item~(3) holds for $\tau(S) \leq m-1$, we now focus on $\tau(S)=m$. Using Lemma~\ref{lem-revised-decomposition-H-Subset-S}, we can decompose $S$ into $\mathtt t$ disjoint self-avoiding paths $P_{\mathtt 1}, \ldots, P_{\mathtt t}$ with $\mathtt t \leq 3m$ satisfying Lemma~\ref{lem-revised-decomposition-H-Subset-S}. In addition, applying Lemma~\ref{lem-revised-decomposition-H-Subset-S} for $H \subset S$ and $\mathcal L(H)=\mathcal L(S)=\emptyset$, we see that assuming $\tau(H)=m-r$ for some $r>0$, we then have $S \setminus H= P_{\mathtt i_1} \cup \ldots \cup \mathtt P_{\mathtt i_{s}}$ where $s\leq 3r$. Denote $\mathtt V=\cup_{1 \leq \mathtt i \leq m} \operatorname{EndP}(P_{\mathtt i})$, by conditioning on $\{ \sigma_u: u \in \mathtt V \}$ we see that
    \begin{align*}
        \Big\{ \prod_{(i,j)\in E(P_{\mathtt i})} \mathtt h(\sigma_i,\sigma_j) : 1 \leq \mathtt i \leq m \Big\}
    \end{align*}
    are conditional independent, with (denote $\operatorname{EndP}(P_{\mathtt i})=\{ u_{\mathtt i}, v_{\mathtt i} \}$)
    \begin{align*}
        \mathbb E\Big[ \prod_{(i,j)\in E(P_{\mathtt i})} \mathtt h(\sigma_i,\sigma_j) \mid \{ \sigma_u: u \in \mathtt V \} \Big] = a^{ |E(P_{\mathtt i})| } + b^{|E(P_{\mathtt i})|} \omega(u_{\mathtt i}, v_{\mathtt i}) \,.
    \end{align*}
    Thus, we have
    \begin{align*}
        \mathtt P(S) &= \mathbb E_{ \sigma_{\mathtt V} \sim \nu_{\mathtt V} } \mathbb E_{ \sigma_{\setminus \mathtt V} \sim \nu_{\setminus \mathtt V} } \Bigg[ \prod_{1 \leq \mathtt i \leq \mathtt t} \prod_{(i,j)\in E(P_{\mathtt i})} \mathtt h(\sigma_i,\sigma_j) \mid \{ \sigma_u: u \in \mathtt V \} \Bigg] \\
        &= \mathbb E_{ \sigma_{\mathtt V} \sim \nu_{\mathtt V} } \Bigg[ \prod_{1 \leq \mathtt i \leq \mathtt t} \prod_{ \{ u_{\mathtt i},v_{\mathtt i} \} = \operatorname{EndP}(P_{\mathtt i}) } \Big( a^{ |E(P_{\mathtt i})| } + b^{|E(P_{\mathtt i})|} \omega(u_{\mathtt i}, v_{\mathtt i}) \Big) \Bigg]  \,.
    \end{align*}
    Similarly we can show that (denote $\Lambda=\{ \mathtt i_1,\ldots,\mathtt i_s \}$)
    \begin{align*}
        \mathtt Q(S,H) &= \mathbb E_{ \sigma_{\mathtt V} \sim \nu_{\mathtt V} } \Bigg[ \prod_{\mathtt i \in [\mathtt t] \setminus \Lambda} \prod_{ \{ u_{\mathtt i},v_{\mathtt i} \} = \operatorname{EndP}(P_{\mathtt i}) } \Big( a^{ |E(P_{\mathtt i})| } + b^{|E(P_{\mathtt i})|} \omega(u_{\mathtt i}, v_{\mathtt i}) \Big) \prod_{\mathtt i \in \Lambda} \prod_{ \{ u_{\mathtt i},v_{\mathtt i} \} = \operatorname{EndP}(P_{\mathtt i}) } \omega(u_{\mathtt i}, v_{\mathtt i}) \Bigg] \\
        &\leq k^s \prod_{\mathtt i \in \Lambda} a^{-|E(P_{\mathtt i})|} \mathbb E_{ \sigma_{\mathtt V} \sim \nu_{\mathtt V} } \Bigg[ \prod_{1 \leq \mathtt i \leq \mathtt t} \prod_{ \{ u_{\mathtt i},v_{\mathtt i} \} = \operatorname{EndP}(P_{\mathtt i}) } \Big( a^{ |E(P_{\mathtt i})| } + b^{|E(P_{\mathtt i})|} \omega(u_{\mathtt i}, v_{\mathtt i}) \Big) \Bigg] \\
        &\leq k^{s} a^{ -(|E(S)|-|E(H)|) } \mathtt P(S) \,.
    \end{align*}
    Thus, we have that
    \begin{align*}
        \frac{ (\epsilon^2 \lambda)^{|E(S)|-|E(H)|} \mathtt Q(S,H) }{ \mathtt P(S) } &\leq (2k)^{s} (\epsilon^2 \lambda)^{|E(S)|-|E(H)|} a^{|E(S)|-|E(H)|} \\
        &\overset{\eqref{eq-choice-lambda-0}}{\leq} (2k)^{3r} (1-\delta/2)^{|E(S)|-|E(H)|} \,.
    \end{align*}
    Thus, we have (combining the induction hypothesis on $H$)
    \begin{align*}
        |\Xi(S)| &\leq (1-\delta/2)^{|E(S)|} n^{-|E(S)|/2} \sum_{ 1 \leq r \leq m } (2k)^{3r} (10(m-r))! \#\{ H \subset S: \mathcal L(H)=\emptyset \} \\
        &\leq (1-\delta/2)^{|E(S)|} n^{-|E(S)|/2} \sum_{ 1 \leq r \leq m } (2k)^{3r} (10(m-r))! D^{3r}  \\
        &\leq (10m)! (2kD)^{10m} (1-\delta/2)^{\frac{|E(S)|}{2}} n^{-\frac{|E(S)|}{2}} \,,
    \end{align*}
    where in the second inequality we use Lemma~\ref{lem-enu-Subset-small-graph} with $m_j=0$, leading to Item~(3).
\end{proof}

Now we prove Lemma~\ref{lem-bound-L2-norm} based on Lemma~\ref{lem-bound-Xi}.

\begin{proof}[Proof of Lemma~\ref{lem-bound-L2-norm}]
    It is easy to check that $u$ satisfies \eqref{eq-linear-equation-simplified}. We need to show that
    \begin{align}
        \sum_{ \sigma\in [k]^n } \sum_{ H \Subset \mathcal{K}_n, |E(H)| \leq D } u_{\sigma,H}^2 = \sum_{ \substack{ H \Subset \mathcal{K}_n, \mathcal L(H)=\emptyset \\ |E(H)|\leq D } } \Xi(H)^2 \label{eq-L2-norm-relax-1} 
    \end{align}
    is bounded by $O_{\delta}(1)$. Using Lemma~\ref{lem-bound-Xi}, we see that \eqref{eq-L2-norm-relax-1} is bounded by
    \begin{align}
        \sum_{ \substack{ m_3,\ldots,m_D \geq 0, t \geq 0 \\  v \geq 3m_3+\ldots+Dm_D } } 2^{10k+m_3+\ldots+m_D} k^{10t} n^{-(v+t)} (1-\delta/2)^{v+t} \cdot \mathsf{ENUM}'(m_3,\ldots,m_D;t) \,, \label{eq-L2-norm-relax-2}
    \end{align}
    where 
    \begin{align}{\label{eq-def-ENUM'}}
        \mathsf{ENUM}'(m_3,\ldots,m_D;t) = \#\Big\{ S \Subset \mathcal{K}_n: \mathcal L(S)=\emptyset, \mathfrak C_k(S)=m_k \mbox{ for } k \leq D, \tau(S)=t \Big\} \,.
    \end{align}
    Using Lemma~\ref{lem-enu-Subset-large-graph} with $H=\emptyset$, we see that
    \begin{align}{\label{eq-bound-ENUM'}}
        \mathsf{ENUM}'(m_3,\ldots,m_D;t) \leq n^v D^{10t} \prod_{3 \leq i \leq D} \frac{1}{m_i!} \,.
    \end{align}
    Plugging \eqref{eq-bound-ENUM'} into \eqref{eq-L2-norm-relax-2}, we get that
    \begin{align*}
        \eqref{eq-L2-norm-relax-2} &\leq \sum_{ \substack{ m_3,\ldots,m_D \geq 0, t \geq 0 \\ v \geq 3m_3+\ldots+Dm_D } } 2^{10t+m_3+\ldots+m_D} k^{10t} n^{-(v+t)} (1-\delta/2)^{v+t} \cdot n^v D^{10t} \prod_{3 \leq i \leq D} \frac{1}{i^{m_i}m_i!} \\
        &\leq \sum_{ \substack{ m_3,\ldots,m_D \geq 0, t \geq 0 \\ v \geq 3m_3+\ldots+Dm_D } } 2^{m_3+\ldots+m_D} (2^{10}D^{10}k/n)^{10t} (1-\delta/2)^{v} \prod_{3 \leq i \leq D} \frac{1}{m_i!} \\
        &\leq [1+o(1)] \sum_{ \substack{ m_3,\ldots,m_D \geq 0 \\ v \geq 3m_3+\ldots+Dm_D } } 2^{m_3+\ldots+m_D} (1-\delta/2)^{v} \prod_{3 \leq i \leq D} \frac{1}{i^{m_i}m_i!} \\
        & \leq O_{\delta,k}(1) \cdot \sum_{ m_3,\ldots,m_D \geq 0 } 2^{m_3+\ldots+m_D} \prod_{3 \leq i \leq D} \frac{1}{i^{m_i}m_i!} = O_{\delta,k}(1) \,.
    \end{align*}
    This concludes the desired result.
\end{proof}

\bibliographystyle{alpha}
\small

\end{document}